\newtheorem{defn}{Definition}
\title{Cost Sensitive Learning in the Presence of Symmetric Label Noise}
\author{Sandhya Tripathi \and N. Hemachandra}
\institute{IEOR, Indian Institute of Technology Bombay, Mumbai 400076, India\\
	\email{\{sandhya.tripathi,nh\}@iitb.ac.in}}
\begin{document}
	\maketitle
	
	\begin{abstract}		
		In binary classification framework, we are interested in  making cost sensitive label predictions in the presence of uniform/symmetric label noise. We first observe that $0$-$1$ Bayes classifiers are not (uniform) noise robust in cost sensitive setting. To circumvent this impossibility result, we present two schemes; unlike the existing methods, our schemes do not require noise rate.  The first one uses $\alpha$-weighted $\gamma$-uneven margin squared loss function, $l_{\alpha, usq}$, which can handle cost sensitivity arising due to domain requirement (using user given $\alpha$) or class imbalance (by tuning $\gamma$) or  both. However, we observe that $l_{\alpha, usq}$  Bayes classifiers are also not cost sensitive and noise robust. We show that regularized ERM of this loss function over the class of linear classifiers yields a cost sensitive uniform noise robust classifier as a solution of a system of linear equations.  We also provide a performance bound for this classifier. The second scheme that we propose is a re-sampling based scheme that exploits the special structure of the uniform noise models and uses in-class probability $\eta$ estimates. Our computational experiments on some UCI datasets with class imbalance show that classifiers of our two schemes are on par with the existing methods and in fact better in some cases w.r.t. Accuracy and Arithmetic Mean, without using/tuning noise rate. We also consider other cost sensitive performance measures viz., F measure and Weighted Cost for evaluation. 
		
		If the noise is adversarial then, one can interpret not requiring the noise rates as a dominant strategy and cross-validation over noise rates as a  search for a dominant strategy. 
		Also, the required restriction of the hypothesis class to the linear class can be interpreted as an implicit form of regularization.
		As our re-sampling scheme requires estimates of $\eta$, we provide a detailed comparative study of various $\eta$ estimation methods on synthetic datasets, w.r.t.  half a dozen evaluation criterion. Also, we provide understanding on the interpretation of cost parameters $\alpha$ and $\gamma$ using different synthetic data experiments.
	\end{abstract}
	
	\thispagestyle{alim}
	\section{Introduction} \label{sec: intro}
	\vspace{-0.24cm}
	
	
	
	
	
	
	
	
	

	We are interested in cost sensitive label predictions when only noise corrupted labels are available. The labels might be corrupted when the data has  been collected by crowd scouring with not so high labeling expertise. We consider the basic case when the induced label noise is independent of the class or example, i.e., symmetric/uniform noise model. In real world, there are various applications requiring differential misclassification costs due to class imbalance or domain requirement or both; we elaborate on these below.
	
	In \textit{case 1}, there is no explicit need for different penalization of classes but the data has imbalance. Example: Predicting whether the age of an applicant for a vocational training course is above or below 15 years.  As the general tendency is to apply after high school, data is imbalanced but there is no need for asymmetric misclassification cost.
	Here, asymmetric cost should be learnt from data.
	
	In \textit{case 2}, there is no imbalance in data but one class's misclassification cost is higher than that of other. Example: Product recommendation by paid web advertisements. Even though the product is liked or disliked by approximately equal proportions of population, losing a potential customer by not recommending the product is more costlier than showing paid advertisement to a customer who doesn't like it. Here, the misclassification cost has to come from the domain.
	
	In \textit{case 3}, there is both imbalance and need for differential costing. Example: Rare (imbalance) disease diagnosis where the cost of missing a patient with disease is higher than cost of wrongly diagnosing a person with disease. The model should incorporate both the cost from domain and the cost due to imbalance.  
	
	In Section \ref{sec: back_material}, we provide a summary of how these 3 cases are handled. For cost and uniform noise, we have considered real datasets belonging to cases 2 and 3.
	\\
	\textbf{Contributions}
	\vspace{-.3cm}
	\begin{itemize}
		\item[$\bullet$] Show that, unlike $0$-$1$ loss, weighted $0$-$1$ loss is not cost sensitive uniform noise robust.
		\item[$\bullet$] Show $\alpha$-weighted $\gamma$-uneven margin squared loss $l_{\alpha, usq}$ with linear classifiers is both uniform noise robust and handles cost sensitivity. Present a performance bound of a classifier obtained from $l_{\alpha,usq}$ based regularized ERM.
		\item[$\bullet$] Propose a re-sampling based scheme for cost sensitive label prediction in the presence of uniform noise using in-class probability estimates.
		\item [$\bullet$] Unlike existing work, both the proposed schemes do not need true noise rate. 
		\item[$\bullet$] Using a balanced dataset (Bupa) which requires domain cost too, we demonstrate that tuning $\gamma$ on such corrupted datasets can be beneficial.
		\item[$\bullet$] Provide an extensive comparative study on synthetic datasets of in-class probability $\eta$ estimation methods like the link function based approach (\cite{reid2010composite}), LSPC (\cite{sugiyama2010superfast}), KLIEP (\cite{sugiyama2008direct}), $k$-NN based estimation. Also, a variety of metrics like MSE, MAD, Accuracy, KL divergence, Difference between maximum (minimum) value, etc.,  of the estimated and actual $\eta$ values are computed. 
		\item[$\bullet$] Identify the role of the two cost parameters for different scenarios of differential costing, viz., cost from domain or cost due to class imbalance or both. The interpretation is supported by synthetic data experiments w.r.t. Accuracy, AM and F measure.
	\end{itemize}
	\vspace{-.2cm}
	\begin{remark}
		Let us consider the case when there is an adversary who adds uniform noise to the data. Different levels of uniform noise is the strategy set of the adversary. The two schemes we propose are our strategies against the adversary. As we do not need true noise rates and hence indifferent to the adversary's strategy, our strategies would be always a dominant strategy. 
		However, if one cross validates over the noise rates, like in some existing work, the cross validation can be interpreted as a search for the dominant strategy or best response. Also, we consider differentital costing (class based as well as margin based) of the misclassifications. 
	\end{remark}
	
	\begin{remark}
		Restricting the search space for the classifiers from hypothesis class of all measurable functions to the class of linear functions can be viewed as a classical form of regularization. The intuition behind this is as follows: In the space of all measurable functions, the classifier is probably over-fitting and learning the noisy data. Whereas, when the hypothesis class is restricted to the class of linear functions, the generalization of the classifier learnt on noisy data is better.
	\end{remark}

	\textbf{Related work}
	For classification problems with label noise, particularly in Empirical Risk Minimization framework, the most recent work (\cite{manwani2013noise,natarajan2013learning,van2015learning,patrini2016loss,ghosh2015making}) aims to  make the loss function noise robust and then develop algorithms. A major roadblock one has to get around in label noise algorithms is the non-robustness of linear classifiers from convex potentials as given in \cite{long2010random}. We circumvent this problem by using squared loss function which is not a convex potential and hence a candidate for label noise robustness. We indeed show that $\alpha$-weighted $\gamma$-uneven margin squared loss with linear classifiers is SLN robust.  Cost sensitive learning has been widely studied by \cite{elkan2001foundations,masnadi2010risk,scott2012calibrated} and many more. 
	An extensive empirical study on the effect of label noise on cost sensitive learning is presented in \cite{zhu2007empiricalCostNoise}. 
	The problem of cost sensitive uniform noise robustness is considered in \cite{natarajan18cost} where asymmetric misclassification cost $\alpha$ is tuned and class dependent noise rates are cross validated over corrupted data. However, our work incorporates cost due to both imbalance ($\gamma$) and domain requirement ($\alpha$) with the added benefit that there is no need to know the true noise rate. \\
	\textbf{Organization}
	Section \ref{sec: back_material} has some details about weighted uneven margin loss functions and  in-class probability estimates. In Section \ref{sec: Cost_SLN_neg}, weighted $0$-$1$ loss is shown to be non cost sensitive uniform noise robust. Sections \ref{sec: l_usq_positive} and \ref{sec: res-scheme} present two different schemes that make cost sensitive predictions in the presence of uniform label noise. Section \ref{sec: experiments} has empirical evidence of the performance of proposed methods. Some discussion and future directions are presented in Section \ref{sec: discussion}.	\\  
	\textbf{Notations} Let $D$ be the joint distribution over $\mathbf{X}\times Y$ with $\mathbf{X} \in \mathcal{X} \subseteq \mathbb{R}^{n}$ and $Y\in \mathcal{Y} = \{-1,1\}.$ Let the in-class probability and class marginal on $D$ be denoted by $\eta(\mathbf{x}):=P(Y=1|\mathbf{x})$ and $\pi := P(Y=1)$. Let the decision function be $f:\mathbf{X}\mapsto \mathbb{R}$, hypothesis class of all measurable functions be $\mathcal{H}$ and class of linear hypothesis be $\mathcal{H}_{lin}= \{ (\mathbf{w},b), \mathbf{w}\in \mathbb{R}^n, b \in \mathbf{R}: \Vert \mathbf{w} \Vert_2 \leq W\}.$ Let $\tilde{D}$ denote the distribution on $\mathbf{X}\times \tilde{Y}$ obtained by inducing noise to $D$ with $\tilde{Y} \in \{-1,1\}$. The corrupted sample is $\tilde{S} = \{(\mathbf{x}_1,\tilde{y}_1),\ldots,(\mathbf{x}_m,\tilde{y}_m)\} \sim \tilde{D}^m.$ The noise rate $\rho := P(\tilde{Y}=-y|Y=y, \mathbf{X}=\mathbf{x})$ is constant across classes  and the model is referred to as Symmetric Label Noise (SLN) model. In such cases, the corrupted in-class probability is $\tilde{\eta}(\mathbf{x}) := P(\tilde{Y}=1|\mathbf{x}) = (1-2\rho)\eta(\mathbf{x}) + \rho$ and the corrupted class marginal is $\tilde{\pi}:= P(\tilde{Y}=1) = (1-2\rho)\pi+\rho.$ Symmetric and uniform noise are synonymous in this work.
	
	\subsection{Some relevant background} \label{sec: back_material}
	The first choice of loss function for cost sensitive learning is that of $\alpha$-weighted $0$-$1$ loss defined as follows: 
	\begin{equation}\label{eq: l_01alpha} 
	l_{0-1, \alpha}(f(\mathbf{x}),y) = (1-\alpha)\mathbf{1}_{\{Y=1, f(\mathbf{x}) \leq 0\}} + \alpha \mathbf{1}_{\{Y=-1,f(\mathbf{x}) > 0\}},  ~~ \forall \alpha \in (0,1)
	\end{equation}
	Let the $\alpha$-weighted $0$-$1$ risk be $R_{D,\alpha}:= E_D[l_{0-1,\alpha}(f(\mathbf{x}),y)]$. The minimizer of this risk is $f^*_{\alpha}(\mathbf{x}) = sign(\eta(\mathbf{x}) - \alpha)$ and referred to as cost-sensitive Bayes classifier. The corresponding surrogate $l_{\alpha}$ based risk and the minimizer is defined as $R_{D,l_{\alpha}}:= E_D[l_{\alpha}(f(\mathbf{x}),y)]$ and $f^*_{l_{\alpha}} \in \mathcal{H}$ respectively. 
	Consider the following notion of $\alpha$-classification calibration.
	\begin{defn}[$\alpha$-Classification Calibration \cite{scott2012calibrated}]
		For $\alpha \in (0,1)$ and a loss function $l$, define the $\alpha$-weighted loss: 
		\begin{equation} \label{eq: l_alpha}
		l_{\alpha}(f(\mathbf{x}),y) = (1-\alpha)l_1(f(\mathbf{x})) + \alpha l_{-1}(f(\mathbf{x})),
		\end{equation}
		where $l_1(\cdot) := l(\cdot,1)$ and $l_{-1}(\cdot) := l(\cdot,-1)$.
		$l_{\alpha}$ is $\alpha$-classification calibrated ($\alpha$-CC) iff there exists a convex, non-decreasing and invertible transformation $\psi_{l_{\alpha}},$ with $\psi_{l_{\alpha}}^{-1}(0) = 0$, such that 
		\begin{equation} \label{eq: alpha_CC_cond}
		\psi_{l_{\alpha}}(R_{D,\alpha}(f) - R_{D,\alpha}(f^*_{\alpha})) \leq R_{D,l_{\alpha}}(f) - R_{D,l_{\alpha}}(f^*_{l_{\alpha}}). 
		\end{equation}
		
	\end{defn}
	If the classifiers obtained from $\alpha$-CC losses are consistent w.r.t  $l_{\alpha}$-risk then they are also consistent w.r.t  $\alpha$-weighted $0$-$1$ risk.
	We consider the $\alpha$-weighted uneven margin squared loss \cite{scott2012calibrated} which is by construction $\alpha$-CC and defined as follows:
	\begin{equation} \label{eq: un-sq_loss}
	l_{\alpha,usq}(f(\mathbf{x}),y)= (1-\alpha)\mathbf{1}_{\{y=1\}}(1-f(\mathbf{x}))^2 + \alpha\mathbf{1}_{\{y=-1\}}\frac{1}{\gamma}(1+\gamma f(\mathbf{x}))^2, ~~ \gamma >0
	\end{equation}
	\textbf{Interpretation of $\alpha$ and $\gamma$}
	The role of $\alpha$ and $\gamma$ can be related to the three cases of differential costing described at the start of this paper. In \textit{case 1}, there are 3 options: fix $\alpha =0.5$ and let tuned $\gamma$ pick up the imbalance; fix $\gamma =1$ and tune $\alpha$; tune both $\alpha$ and $\gamma$. Our experimental results suggest that latter two perform equally good. For \textit{case 2}, $\alpha$ is given and $\gamma$ can be fixed at $1$. However, we observe that even in this case tuning $\gamma$ can be more informative.   For \textit{case 3}, $\gamma$ is tuned and $\alpha$ is given a priori.	
	There would be a trade-off between $\alpha$ and $\gamma$, i.e., for a given $\alpha$, there would be an optimal $\gamma$ in a suitable sense. Above observations are based on the experiments described in Supplementary material Section D. \\
	\textbf{Choice of $\eta$ estimation method}
	As $\eta$ estimates are required in re-sampling scheme, we investigated the performance of 4 methods:
	\textit{Lk-fun} \cite{reid2010composite}, uses classifier to get the estimate $\hat{\eta}$; interpreting $\eta$ as a conditional expectation and obtaining it by a suitable squared deviation minimization; LSPC \cite{sugiyama2010superfast}, density ratio estimation by  $l_2$ norm minimization; KLIEP \cite{sugiyama2008direct}, density ratio estimation by KL divergence minimization; $k$-nearest neighbour based estimation method. 
	We chose to use \textit{Lk-fun} with logistic loss $l_{log}$ and squared loss $l_{sq}$, LSPC, and $\hat{\eta}_{norm}$, a normalized version of KLIEP because in re-sampling algorithm we are concerned with label prediction and these estimators performed equally well on Accuracy measure. Another method to estimate $\eta$, based on $k$-nearest neighbours, is available in \cite{chen2018explaining}. This is a simple and computationally cheap method. However, it might be affected by the curse of dimensionality and mislead when not used with a carefully chosen value of $k$. A detailed study is available in Supplementary material Section F. 
	
	
	\section{Cost sensitive Bayes classifiers using $l_{0-1,\alpha}$ $\&$ $l_{\alpha,usq}$ need not be uniform noise robust } \label{sec: Cost_SLN_neg}
	The robustness notion for risk minimization in cost insensitive scenarios was introduced by \cite{manwani2013noise}. They also proved that cost insensitive $0$-$1$ loss based risk minimization is SLN robust. We extend this definition to cost-sensitive learning.
	\begin{defn}[Cost sensitive noise robust] \label{def: defn_CS_noise_gen}
		Let $f^*_{\alpha,A}$ and $\tilde{f}^*_{\alpha,A}$ be obtained from clean and corrupted distribution $D$ and $\tilde{D}$ using any arbitrary scheme $A$, then the scheme $A$ is said to be cost sensitive noise robust if 
		$$R_{D,\alpha}(\tilde{f}^*_{\alpha,A}) = R_{D,\alpha}(f^*_{\alpha,A}).$$
		If  $f^*_{\alpha,A}$ and $\tilde{f}^*_{\alpha,A}$ are obtained from a cost sensitive loss function $l$ and noise induced is symmetric, then $l$ is said to be cost sensitive uniform noise robust.	
	\end{defn}
	Let the $ l_{0-1,\alpha}$ risk on $\tilde{D}$ be denoted by $R_{\tilde{D},\alpha}(f)$. 
	If one is interested in cost sensitive learning with noisy labels, then the sufficient condition of \cite{ghosh2015making} becomes $(1-\alpha) \mathbf{1}_{[f(\mathbf{x}) \leq 0]} + \alpha \mathbf{1}_{[f(\mathbf{x})> 0]} = K$. This condition is satisfied if and only if $ (1-\alpha) =K = \alpha$ implying that it cannot be a sufficient condition for SLN robustness if there is a differential costing of $(1-\alpha, \alpha),~ \alpha \neq 0.5$. 
	
	Let $f^*_{\alpha}$ and $\tilde{f}^*_{\alpha}$ be the minimizers of $R_{D,\alpha}(f)$ and $R_{\tilde{D},\alpha}(f)$. Then, it is known that they have the following form: \\    
	{ \footnotesize
		\begin{minipage}[!htbp]{0.35\linewidth}
			\begin{equation} \label{eq: bayes_0-1_clean_alpha}
			f^*_{\alpha}{\mathbf{(x)}} = sign\left( \eta(\mathbf{x}) - \alpha \right)
			\end{equation}
		\end{minipage}
		\begin{minipage}[!htbp]{0.65\linewidth}
			\begin{eqnarray}\label{eq: bayes_0-1_corrup_alpha}
			\tilde{f}^*_{\alpha}(\mathbf{x}) = sign(\tilde{\eta}(\mathbf{x}) - \alpha)
			= sign \left( \eta(\mathbf{x}) - \frac{\alpha-\rho}{(1-2\rho)} \right)
			\end{eqnarray}
	\end{minipage}}
	The last equality in \eqref{eq: bayes_0-1_corrup_alpha} follows from the fact that $\tilde{\eta} = (1-2\rho)\eta + \rho$.
	In Example \ref{exam: unifDis_noise} below, we show that $l_{0-1,\alpha}$ is not cost sensitive uniform noise robust with $\mathcal{H}$.
	\begin{example} \label{exam: unifDis_noise}
		Let $Y$ has a Bernoulli distribution with parameter $p=0.2$. Let $\mathbf{X} \subset \mathbb{R}$  be such that  $\mathbf{X}|Y=1 \sim Uniform(0,p)$ and $\mathbf{X}|Y=-1 \sim Uniform(1-p,1)$. Then, the in-class probability $\eta(\mathbf{x})$ is given as follows:
		$$\eta(\mathbf{x}) = P(Y=1|\mathbf{X}=\mathbf{x}) = p =0.2$$ 
		Suppose $\rho = 0.3$. Then, $\tilde{\eta}(\mathbf{x}) = (1-2\rho)\eta(\mathbf{x})+\rho = 0.38$. If $\alpha = 0.25$,
		$ f^*_{\alpha}(\mathbf{x}) = -1 \text{ and } \tilde{f}^*_{\alpha}(\mathbf{x}) = 1, \forall \mathbf{x} \in \mathbf{X}.$
		Consider the $\alpha$-weighted $0$-$1$ risk of $f^*_{\alpha}(\mathbf{x})$ and $\tilde{f}^*_{\alpha}(\mathbf{x})$:
		{ \footnotesize
			\begin{eqnarray*}
				R_{D,\alpha}(f^*_{\alpha}) &=& E_{D}[l_{0-1,\alpha}(f^*_{\alpha}(\mathbf{x}),y)] 
				= (1-\alpha)p,   ~~~~~~~~~~~~~\text{ since } f^*_{\alpha}(\mathbf{x}) \leq 0 ~~ \forall \mathbf{x} \\
				R_{D,\alpha}( \tilde{f}^*_{\alpha}) &=& E_{D}[l_{0-1,\alpha}(\tilde{f}^*_{\alpha}(\mathbf{x}),y)] 
				= \alpha (1-p),  ~~~~~~~~~~~~~\text{ since } \tilde{f}^*_{\alpha}(\mathbf{x}) > 0 ~~ \forall \mathbf{x}
		\end{eqnarray*}}
		Therefore, $R_{D,\alpha}( f^*_{\alpha}) \neq R_{D,\alpha}( \tilde{f}^*_{\alpha})$ implying that the $\alpha$-weighted $0$-$1$ loss function $l_{0-1,\alpha}$ is not uniform noise robust with $\mathcal{H}$. Details are in Supplementary material Section B.1. Note that due to $p<0.5$, $D$ is linearly separable; a linearly inseparable variant can be obtained by $p>0.5$. Another linearly inseparable distribution based counter-example is available in Supplementary material Section B.4. 
		
		In view of the above example, one can try to use the principle of inductive bias, i.e., consider a strict subset of the above set of classifiers; however, Example \ref{exam: unifDis_noise_lin} below says that the set of linear class of classifiers need not be cost sensitive uniform noise robust.
	\end{example}
	\begin{example} \label{exam: unifDis_noise_lin}
		Consider the training set $\{(3,-1),(8,-1),(12,1)\}$ with uniform probability distribution.        Let the linear classifier be of the form $fl = sign(\mathbf{x}+b)$. Let $\alpha = 0.3$ and the uniform noise be $\rho = 0.42$. Then,
		{ \footnotesize
			\begin{eqnarray*}
				fl^*_{\alpha} &=& \arg\min\limits_{fl} E_{D}[l_{0-1,\alpha}(y,fl)]  = b^* \in (-8,-12) ~~\text{ with }~~ R_{\alpha,D}(fl^*_{\alpha}) = 0.\\
				\tilde{fl}^*_{\alpha} &=& \arg\min_{\tilde{fl}}E_{\tilde{D}}[l_{0-1,\alpha}(\tilde{y},\tilde{fl})] = \tilde{b}^* \in (-3,\infty) ~~\text{ with }~~  R_{\alpha,D}(\tilde{fl}^*_{\alpha}) = 0.2.
		\end{eqnarray*}}
	\end{example} 	
	Details of Example \ref{exam: unifDis_noise_lin} are available in Supplementary material Section B.2 
	To avoid above counter-examples, we resort to convex surrogate loss function and a type of regularization which restricts the hypothesis class. Consider an $\alpha$-weighted uneven margin loss functions $l_{\alpha, un}$ \cite{scott2012calibrated} with its optimal classifiers on $D$ and $\tilde{D}$ denoted by $f_{l_{\alpha, un}}^*$ and $\tilde{f}_{l_{\alpha, un}}^*$ respectively. Regularized risk minimization defined below is known to avoid over-fitting. 
	\begin{eqnarray}
	R^r_{D,l_{\alpha,un}}(f) = E_{D}[l_{\alpha,un}(f(\mathbf{x}),y)] + \lambda\Vert f \Vert_2^2,  ~~~~~ \text{where}  ~~~\lambda >0
	\end{eqnarray}
	Let the regularized risk of $l_{\alpha,un}$ on $\tilde{D}$ be $R^r_{\tilde{D},l_{\alpha,un}}(f)$. Also, let the minimizers of clean and corrupted $l_{\alpha,un}$-regularized risks be $f^*_{r,l_{\alpha,un}}$ and $\tilde{f}^*_{r,l_{\alpha,un}}$.
	Now, Definition \ref{def: defn_CS_noise_gen} can be specialized to $l_{\alpha,un}$ to assure cost sensitivity, classification calibration and uniform noise robustness as follows:
	\begin{defn}[$(\alpha,\gamma,\rho)$-robustness of risk minimization]
		For a loss function $l_{\alpha, un}$ and classifiers $\tilde{f}^*_{l_{\alpha, un}}$ and $f^*_{l_{\alpha, un}}$, risk minimization is said to be $(\alpha,\gamma,\rho)$-robust if 
		\begin{eqnarray} \label{eq: alpha_noise_robustness}
		R_{D,\alpha}(\tilde{f}^*_{l_{\alpha, un}}) = R_{D,\alpha}(f^*_{l_{\alpha, un}}).
		\end{eqnarray}
		Further, if the classifiers in equation \eqref{eq: alpha_noise_robustness} are $f^*_{r,l_{\alpha,un}}$ and $\tilde{f}^*_{r,l_{\alpha,un}}$ then, we say that regularized risk minimization under $l_{\alpha,un}$ is  $(\alpha,\gamma,\rho)$-robust.
	\end{defn}
	Due to squared loss's SLN robustness property \cite{manwani2013noise}, we check whether $l_{\alpha,usq}$ is $(\alpha,\gamma,\rho)$ robust or not. It is not with $\mathcal{H}$ as shown in Example \ref{exam: unifDis_noise_squ}; details of Example \ref{exam: unifDis_noise_squ} are available in Supplementary material Section B.3.
	\begin{example} \label{exam: unifDis_noise_squ}
		Consider the settings as in Example \ref{exam: unifDis_noise}. Let $\alpha = 0.25$ and $\gamma = 0.4$. Then, for all $\mathbf{x}$,
		{\scriptsize
			$$ f^*_{l_{\alpha,usq}}(\mathbf{x}) = \frac{\eta(\mathbf{x}) - \alpha}{\eta(\mathbf{x})(1-\alpha) + \gamma\alpha(1-\eta(\mathbf{x}))} =-0.21, ~~ \tilde{f}^*_{l_{\alpha,usq}}(\mathbf{x}) = \frac{\tilde{\eta}(\mathbf{x}) - \alpha}{\tilde{\eta}(\mathbf{x})(1-\alpha) + \gamma\alpha(1-\tilde{\eta}(\mathbf{x}))}= 0.37.$$
		}
		$$\text{And, }~~ R_{D,\alpha}(f^*_{l_{\alpha,usq}}) = (1-\alpha)p = 0.15, ~~~~R_{D,\alpha}(\tilde{f}^*_{l_{\alpha,usq}}) = \alpha (1-p) = 0.2.$$
		Hence, $R_{D,\alpha}(f^*_{l_{\alpha,usq}}) \neq R_{D,\alpha}(\tilde{f}^*_{l_{\alpha,usq}})$. 
		implying that $l_{\alpha,usq}$ based ERM may not be cost sensitive uniform noise robust.
	\end{example}
	
	We again have a negative result with $l_{\alpha,usq}$ when we consider hypothesis class $\mathcal{H}$. 
	In next section, we present a positive result and show that regularized risk minimization under loss function $l_{\alpha,usq}$ is $(\alpha,\gamma,\rho)$-robust if the hypothesis class is restricted to  $\mathcal{H}_{lin}$. 
	
	\section{$(l_{\alpha,usq},\mathcal{H}_{lin})$ is $(\alpha,\gamma,\rho)$ robust} \label{sec: l_usq_positive}
	In this section, we consider the weighted uneven margin squared loss function $l_{\alpha,usq}$ from equation \eqref{eq: un-sq_loss} with restricted hypothesis class $\mathcal{H}_{lin}$ and show a positive result that regularized cost sensitive risk minimization under loss function $l_{\alpha,usq}$ is $(\alpha,\gamma,\rho)$-robust.
	A proof is available in Supplementary material Section A.1.
	\begin{theorem} \label{the: un-sq robustness}
		$(l_{\alpha,usq},\mathcal{H}_{lin})$ is $(\alpha,\gamma,\rho)$-robust, i.e., linear classifiers obtained from $\alpha$-weighted $\gamma$-uneven margin squared loss $l_{\alpha,usq}$ based regularized risk minimization are SLN robust. 
	\end{theorem}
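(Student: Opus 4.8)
The plan is to reduce the statement to a comparison between the unique minimizers of two strictly convex quadratic programs, one on $D$ and one on $\tilde D$, and then to show that these minimizers induce the same sign pattern, so that their clean $\alpha$-weighted $0$-$1$ risks coincide as demanded by \eqref{eq: alpha_noise_robustness}.

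First I would exploit that, over $\mathcal{H}_{lin}$, every classifier is $f(\mathbf{x})=\mathbf{w}^\top\mathbf{x}+b$, hence linear in $\boldsymbol{\theta}:=(\mathbf{w},b)$, while $l_{\alpha,usq}$ in \eqref{eq: un-sq_loss} is quadratic in $f(\mathbf{x})$. Thus the regularized risk $R^r_{D,l_{\alpha,usq}}(\boldsymbol{\theta})=E_D[l_{\alpha,usq}(f(\mathbf{x}),y)]+\lambda\Vert f\Vert_2^2$ is a quadratic form in $\boldsymbol{\theta}$, and strict convexity from $\lambda>0$ makes its minimizer unique and characterized by a single stationarity condition $A_D\,\boldsymbol{\theta}^*=\mathbf{v}_D$ (the ``system of linear equations'' advertised in the abstract). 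Writing $\mathbf{z}=(\mathbf{x},1)$ and using $P(y=1\mid\mathbf{x})=\eta(\mathbf{x})$, I would record the explicit expressions $A_D=M_D+\lambda I$ with $M_D=E_{\mathbf{x}}\!\big[((1-\alpha)\eta+\gamma\alpha(1-\eta))\,\mathbf{z}\mathbf{z}^\top\big]$, and $\mathbf{v}_D=E_{\mathbf{x}}[(\eta-\alpha)\,\mathbf{z}]$, obtained by differentiating the quadratic and collecting the $\eta$- and $(1-\eta)$-weighted moments.

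Next I would pass to $\tilde D$ through the SLN structure. Conditioning on the clean label and using $\tilde\eta=(1-2\rho)\eta+\rho$ yields the key affine decomposition $R_{\tilde D,l_{\alpha,usq}}(f)=(1-2\rho)\,R_{D,l_{\alpha,usq}}(f)+\rho\,E_{\mathbf{x}}[\,l_{\alpha,usq}(f(\mathbf{x}),1)+l_{\alpha,usq}(f(\mathbf{x}),-1)\,]$, i.e. the corrupted objective is the clean objective plus a $\rho$-scaled, label-free quadratic. Differentiating again produces the corrupted normal equation $A_{\tilde D}\,\tilde{\boldsymbol{\theta}}^*=\mathbf{v}_{\tilde D}$, with $A_{\tilde D}$ and $\mathbf{v}_{\tilde D}$ expressible as the same $\eta$-weighted moments shifted by $\rho$-multiples of the label-free moments $E_{\mathbf{x}}[\mathbf{z}\mathbf{z}^\top]$ and $E_{\mathbf{x}}[\mathbf{z}]$. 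The target is then to show $\tilde{\boldsymbol{\theta}}^*=\kappa\,\boldsymbol{\theta}^*$ for some $\kappa>0$: since $\tilde f^*(\mathbf{x})=\kappa f^*(\mathbf{x})$ shares the sign of $f^*$ everywhere, both classifiers carve out identical decision regions on $D$ and hence attain equal $R_{D,\alpha}$, establishing \eqref{eq: alpha_noise_robustness}.

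The main obstacle is exactly this proportionality step, and it is where the two standing restrictions, linearity of the hypothesis class and strong convexity of the regularizer, must do their work. In the unrestricted class $\mathcal{H}$ the analogous claim is false (Example \ref{exam: unifDis_noise_squ}), because the $\rho$-term moves the pointwise threshold $\eta=\alpha$ to $\eta=(\alpha-\rho)/(1-2\rho)$ and tilts the boundary; the content of the theorem is that a single globally fitted regularized hyperplane cannot track that shift and is instead merely rescaled by the noise-induced perturbation. I would first verify the clean base case $\alpha=\tfrac12,\gamma=1$, where the label-free quadratic has curvature exactly twice $M_D$ and one obtains $\tilde{\boldsymbol{\theta}}^*=(1-2\rho)\boldsymbol{\theta}^*$ outright, and then carry the argument to general $(\alpha,\gamma)$ by tracking how the $\rho$-perturbations of $A_D$ and $\mathbf{v}_D$ act along the solution direction $A_D^{-1}\mathbf{v}_D$; controlling the components transverse to that direction, so that only the sign-preserving scaling survives, is the delicate part of the proof.
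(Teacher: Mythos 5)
Your overall strategy is the same as the paper's: write the regularized $l_{\alpha,usq}$ risks on $D$ and $\tilde D$ as strictly convex quadratics in $(\mathbf{w},b)$, read off the two normal equations, show the corrupted minimizer is a positive scalar multiple of the clean one, and conclude that the two linear classifiers agree in sign everywhere and hence have equal $R_{D,\alpha}$, which is \eqref{eq: alpha_noise_robustness}. The paper's supplementary proof does exactly this and lands on the scaling identity $\tilde{\mathbf{w}}^*_{\rho,\alpha}=(1-2\rho)\tilde{\mathbf{w}}^*_{\alpha}$, i.e.\ \eqref{eq: f_sq_cost_clean_noisy}.

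The gap is that you never prove the one step that carries all the content: the proportionality $\tilde{\boldsymbol{\theta}}^*=\kappa\,\boldsymbol{\theta}^*$ for general $(\alpha,\gamma)$. You verify it only for $\alpha=\tfrac12,\gamma=1$ and defer ``controlling the components transverse to that direction'' --- but under your own decomposition $R_{\tilde D,l_{\alpha,usq}}(f)=(1-2\rho)R_{D,l_{\alpha,usq}}(f)+\rho\,E_{\mathbf{x}}[l_1(f(\mathbf{x}))+l_{-1}(f(\mathbf{x}))]$, that step does not go through in general. The label-free perturbation $\rho E_{\mathbf{x}}[(1-\alpha)(1-f)^2+\tfrac{\alpha}{\gamma}(1+\gamma f)^2]$ adds $\rho\,[(1-\alpha)+\gamma\alpha]\,E[\mathbf{z}\mathbf{z}^\top]$ to the Hessian, which is not proportional to $M_D=E[((1-\alpha)\eta+\gamma\alpha(1-\eta))\mathbf{z}\mathbf{z}^\top]$ unless the weight $(1-\alpha)\eta+\gamma\alpha(1-\eta)$ is constant in $\mathbf{x}$, and it adds a nonvanishing linear term $-2\rho(1-2\alpha)E[\mathbf{z}]$ unless $\alpha=\tfrac12$; consequently $A_{\tilde D}^{-1}\mathbf{v}_{\tilde D}$ is in general not collinear with $A_D^{-1}\mathbf{v}_D$, and no tracking ``along the solution direction'' will kill the transverse components. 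The paper obtains exact proportionality because in its expansion of $R_{\tilde D,l_{\alpha,usq}}$ the class weights $(1-\alpha)\mathbf{1}_{\{y=1\}}+\gamma\alpha\mathbf{1}_{\{y=-1\}}$ stay attached to the clean label in both the $(1-\rho)$- and the $\rho$-terms (only the target inside the squared residual is flipped), so the quadratic coefficient is identical to the clean one and the linear term simply picks up the factor $(1-2\rho)$; your decomposition instead flips both the indicator and the target, which is precisely why the clean cancellation is lost. So you must either adopt the paper's expansion, in which case the scaling identity is immediate and the delicate step disappears, or keep your decomposition and accept that the argument only closes for $\alpha=\tfrac12,\gamma=1$ (or under extra distributional assumptions making the class weight constant); as written, the general-$(\alpha,\gamma)$ claim is asserted rather than proved.
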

	
	\begin{remark}
		The above results relating to counter-examples and Theorem \ref{the: un-sq robustness} about cost sensitive uniform noise robustness can be summarized as follows: There are two loss functions, $l_{0-1,\alpha}$ and $l_{\alpha,usq}$ and two hypothesis classes, $\mathcal{H}_{lin}$ and $\mathcal{H}$. Out of the four combinations of loss functions and hypothesis classes only $l_{\alpha,usq}$ with $\mathcal{H}_{lin}$ is cost sensitive uniform noise robust, others are not.
	\end{remark}

	Next, we provide a closed form expression for the classifier learnt on corrupted data by minimizing empirical $l_{\alpha,usq}$-regularized risk. We also provide a performance bound on the clean risk of this classifier.
	\subsection{$l_{\alpha,usq}$ based classifier from corrupted data $\&$ its performance} \label{sec: sq-cls-f-perf}
	In this subsection, we present a descriptive scheme to learn a cost-sensitive linear classifier in the presence of noisy labels, by minimizing $l_{\alpha,usq}$ based regularized empirical risk, i.e.,
	\begin{equation}
	\hat{f}_r := \arg\min\limits_{f\in \mathcal{H}_{lin}}\hat{R}^r_{\tilde{D}, l_{\alpha,usq}}(f),
	\end{equation}
	where $ \hat{R}^r_{\tilde{D}, l_{\alpha,usq}}(f) := \frac{1}{m}\sum\limits_{i=1}^{m}l_{\alpha,usq}(f(\mathbf{x}_i),\tilde{y}_i) + \lambda \Vert f\Vert_2^2 ,$ $\alpha$ is user given, $\gamma$ and regularization parameter $\lambda>0$ are to be tuned by cross validation. A proof is available in Supplementary material Section A.2.
	\begin{proposition} \label{prop: l_sq_closed_form}
		Consider corrupted regularized empirical risk $\hat{R}^r_{\tilde{D}, l_{\alpha,usq}}(f)$ of $l_{\alpha,usq}.$ Then, the optimal $(\alpha,\gamma,\rho)$-robust linear classifier $\hat{f}_r = (\mathbf{w},b) \in \mathcal{H}_{lin}$ with $\mathbf{w} \in \mathbb{R}^{n}$ has the following form:
		\begin{equation}\label{eq: f-emp-uneven-sq}
		\hat{f}_r = \bar{\mathbf{w}}^* = (A + \lambda \mathbf{I})^{-1}\mathbf{c},~~~~ \lambda >0
		\end{equation}
		where $\bar{\mathbf{w}} = [w_1,w_2,\ldots,w_n,b]^T$, a $n+1$ dimensional vector of variables; $A$, a $(n+1 \times n+1)$ dimensional known symmetric matrix and $\mathbf{c}$, a $n+1$ dimensional known vector are as follows:
		{\footnotesize
			$$ A+\lambda I = \left[ {\begin{array}{ccccc}
				\sum\limits_{i=1}^m x_{i1}^2a_i + \lambda & \sum\limits_{i=1}^m x_{i1}x_{i2}a_i & \ldots & \sum\limits_{i=1}^m x_{i1}x_{in}a_i & \sum\limits_{i=1}^m x_{i1}a_i \\
				\sum\limits_{i=1}^m x_{i2}x_{i1}a_i & \sum\limits_{i=1}^m x_{i2}^2a_i + \lambda & \ldots & \sum\limits_{i=1}^m x_{i2}x_{in}a_i & \sum\limits_{i=1}^m x_{i2}a_i\\
				\vdots & \vdots & \ddots & \vdots & \vdots\\
				\sum\limits_{i=1}^m x_{in}x_{i1}a_i & \sum\limits_{i=1}^m x_{in}x_{i2}a_i & \ldots & \sum\limits_{i=1}^m x_{in}^2a_i + \lambda & \sum\limits_{i=1}^m x_{in}a_i\\
				\sum\limits_{i=1}^m x_{i1}a_i & \sum\limits_{i=1}^m x_{i2}a_i & \ldots & \sum\limits_{i=1}^m x_{in}a_i & \sum\limits_{i=1}^m a_i + \lambda \\
				\end{array} } \right], ~~ \mathbf{c} = \begin{bmatrix}
			\sum\limits_{i=1}^{m}x_{i1}c_i \\
			\sum\limits_{i=1}^{m}x_{i2}c_i  \\
			\vdots \\
			\sum\limits_{i=1}^{m}x_{in}c_i  \\
			\sum\limits_{i=1}^{m}c_i 
			\end{bmatrix}$$
			with $a_i = (\mathbf{1}_{[\tilde{y}_i=1]}(1-\alpha) +\gamma\alpha\mathbf{1}_{[\tilde{y}_i=-1]})$ and $c_i = (\mathbf{1}_{[\tilde{y}_i=1]}(1-\alpha) - \alpha\mathbf{1}_{[\tilde{y}_i=-1]})$.}
	\end{proposition}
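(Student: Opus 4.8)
The plan is to exploit the fact that, restricted to $\mathcal{H}_{lin}$, the objective $\hat{R}^r_{\tilde{D}, l_{\alpha,usq}}$ is a strictly convex quadratic in the augmented parameter vector $\bar{\mathbf{w}} = [w_1,\ldots,w_n,b]^T$, so its unique minimizer is read off from the stationarity (normal) equations. First I would write $f(\mathbf{x}_i) = \mathbf{w}^T\mathbf{x}_i + b = \bar{\mathbf{w}}^T \bar{\mathbf{x}}_i$, where $\bar{\mathbf{x}}_i = [x_{i1},\ldots,x_{in},1]^T$ appends a $1$ to the feature vector to absorb the bias; this turns the problem into an unconstrained minimization over $\bar{\mathbf{w}} \in \mathbb{R}^{n+1}$. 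Since the displayed matrix carries $\lambda$ on every diagonal entry, including the bottom-right one, I read the penalty $\lambda\Vert f\Vert_2^2$ as $\lambda\Vert\bar{\mathbf{w}}\Vert_2^2$ (i.e.\ the bias is penalized as well).

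Next I would expand the per-example loss $l_{\alpha,usq}(\bar{\mathbf{w}}^T\bar{\mathbf{x}}_i,\tilde{y}_i)$ as a quadratic in the scalar $f_i := \bar{\mathbf{w}}^T\bar{\mathbf{x}}_i$. Expanding $(1-\alpha)(1-f_i)^2$ on $\{\tilde{y}_i=1\}$ and $\tfrac{\alpha}{\gamma}(1+\gamma f_i)^2$ on $\{\tilde{y}_i=-1\}$ and grouping by the indicator of the noisy label, the coefficient of $f_i^2$ is exactly $a_i = (1-\alpha)\mathbf{1}_{[\tilde{y}_i=1]} + \gamma\alpha\mathbf{1}_{[\tilde{y}_i=-1]}$, the coefficient of the linear term $f_i$ is $-2c_i$ with $c_i = (1-\alpha)\mathbf{1}_{[\tilde{y}_i=1]} - \alpha\mathbf{1}_{[\tilde{y}_i=-1]}$, and the rest is a constant independent of $\bar{\mathbf{w}}$. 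Summing over $i$ and adding $\lambda\Vert\bar{\mathbf{w}}\Vert_2^2$ gives $J(\bar{\mathbf{w}}) = \bar{\mathbf{w}}^T(A+\lambda\mathbf{I})\bar{\mathbf{w}} - 2\mathbf{c}^T\bar{\mathbf{w}} + \text{const}$ with $A = \sum_i a_i \bar{\mathbf{x}}_i \bar{\mathbf{x}}_i^T$ and $\mathbf{c} = \sum_i c_i \bar{\mathbf{x}}_i$ (I would absorb the $1/m$ prefactor into a rescaling of $\lambda$, which leaves the minimizer unchanged). Setting $\nabla_{\bar{\mathbf{w}}} J = 0$ yields $(A+\lambda\mathbf{I})\bar{\mathbf{w}} = \mathbf{c}$, and reading the entries of $A$ and $\mathbf{c}$ componentwise, using $x_{i,n+1}=1$, reproduces the displayed matrix and vector.

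The final step is to certify that this stationary point is the unique global minimizer and that the stated inverse exists. Because $\alpha\in(0,1)$ and $\gamma>0$, every $a_i$ is strictly positive, so $A$ is a sum of PSD rank-one matrices and hence PSD; adding $\lambda\mathbf{I}$ with $\lambda>0$ makes the symmetric matrix $A+\lambda\mathbf{I}$ positive definite, hence invertible, and makes $J$ strictly convex, so $\hat{f}_r = \bar{\mathbf{w}}^* = (A+\lambda\mathbf{I})^{-1}\mathbf{c}$ is indeed the unique minimizer. There is no deep obstacle here; the points that need care are the bookkeeping in matching coefficients to $a_i$ and $c_i$ (in particular the placement of $\gamma$ in $a_i$ and the sign in $c_i$) and a brief remark reconciling the norm constraint $\Vert\mathbf{w}\Vert_2\le W$ defining $\mathcal{H}_{lin}$ with the unconstrained regularized solution: the penalty $\lambda\Vert\bar{\mathbf{w}}\Vert_2^2$ is the Lagrangian relaxation of that constraint, so for a suitable $\lambda$--$W$ correspondence the unconstrained minimizer is feasible and solves the constrained problem.
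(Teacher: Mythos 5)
Your proposal is correct and follows essentially the same route as the paper's proof: both reduce the problem to the normal equations of the quadratic objective in the augmented parameter vector $\bar{\mathbf{w}}=[\mathbf{w},b]^T$, identify the same coefficients $a_i$ and $c_i$, and solve $(A+\lambda\mathbf{I})\bar{\mathbf{w}}=\mathbf{c}$. The only differences are cosmetic improvements on your side — you work with the vectorized quadratic form rather than componentwise derivatives, and you explicitly verify positive definiteness of $A+\lambda\mathbf{I}$ (hence invertibility and uniqueness of the global minimizer), a point the paper's proof leaves implicit.
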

	
	Next, we provide a result on the performance of $\hat{f}_r$ in terms of the Rademacher complexity of the function class $\mathcal{H}_{lin}.$ For this, we need Lemma \ref{lem: max_dev_Rad} and \ref{lem: R_noisy_clean_l_sq} whose proofs are available in Supplementary material Section A.3 and A.4 respectively.
	\begin{lemma} \label{lem: max_dev_Rad} 
		Consider the $\alpha$-weighted uneven margin squared loss $l_{\alpha,usq}(f(\mathbf{x}),y)$ which is locally $L$-Lipschitz with $L=2a+2$ where $|f(\mathbf{x})|\leq a$, for $a\geq 0$. Then, with probability at least $1-\delta$,{\footnotesize
			$$\max\limits_{f\in \mathcal{H}_{lin}}|\hat{R}_{\tilde{D}, l_{\alpha,usq}}(f) - R_{\tilde{D},l_{\alpha,usq}}(f)| \leq 2L\mathfrak{R}(\mathcal{H}_{lin}) + \sqrt{\frac{log(1/\delta)}{2m}},$$
		}
		where $\mathfrak{R}(\mathcal{H}_{lin}):= E_{\mathbf{X},\sigma}\left[\sup\limits_{f \in \mathcal{H}_{lin}}\frac{1}{m}\sum\limits_{i=1}^{m}\sigma_i f(\mathbf{x}_i)\right]$ is the Rademacher complexity of the function class $\mathcal{H}_{lin}$ with $\sigma_i$'s as independent uniform random variables taking values in $\{-1,1\}$.
	\end{lemma}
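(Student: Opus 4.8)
The plan is to prove this as a standard uniform deviation (generalization) bound via the three classical ingredients: bounded-difference concentration, symmetrization, and Lipschitz contraction. Write $\Phi(\tilde{S}) := \sup_{f \in \mathcal{H}_{lin}} |\hat{R}_{\tilde{D}, l_{\alpha,usq}}(f) - R_{\tilde{D},l_{\alpha,usq}}(f)|$ as a function of the corrupted sample $\tilde{S}$. The goal is first to show that $\Phi$ concentrates around its expectation, producing the additive $\sqrt{\log(1/\delta)/(2m)}$ term, and then to bound $E[\Phi]$ by $2L\,\mathfrak{R}(\mathcal{H}_{lin})$. For the concentration step I would apply McDiarmid's bounded-differences inequality to $\Phi$: replacing a single example $(\mathbf{x}_i,\tilde{y}_i)$ perturbs the empirical risk by at most (range of $l_{\alpha,usq}$)$/m$ and leaves the population risk unchanged, so $\Phi$ has bounded differences with constant $c_i = B/m$, where $B$ bounds the loss over $\mathcal{H}_{lin}$. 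Since $\|\mathbf{w}\|_2 \le W$ and $\mathcal{X}$ is bounded, $|f(\mathbf{x})| \le a$ uniformly on $\mathcal{H}_{lin}$, hence $l_{\alpha,usq}$ is bounded there; with the loss normalized so that $B=1$, McDiarmid gives $P(\Phi - E[\Phi] \ge t) \le \exp(-2mt^2)$, i.e. with probability at least $1-\delta$, $\Phi \le E[\Phi] + \sqrt{\log(1/\delta)/(2m)}$.

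It then remains to bound $E[\Phi]$ by $2L\,\mathfrak{R}(\mathcal{H}_{lin})$. Introducing an independent ghost sample and the standard Rademacher symmetrization argument yields $E[\Phi] \le 2\,E_{\mathbf{X},\sigma}\big[\sup_{f \in \mathcal{H}_{lin}} \frac{1}{m}\sum_{i=1}^m \sigma_i\, l_{\alpha,usq}(f(\mathbf{x}_i),\tilde{y}_i)\big]$, i.e. twice the Rademacher complexity of the composed class $l_{\alpha,usq}\circ\mathcal{H}_{lin}$; this is exactly where the factor $2$ in $2L\,\mathfrak{R}$ originates. Finally I would strip off the loss using Talagrand's contraction (Ledoux--Talagrand) lemma: because $l_{\alpha,usq}(\cdot,y)$ is $L$-Lipschitz in its first argument on $[-a,a]$ with $L = 2a+2$ (as asserted in the statement and as follows from differentiating the two squared branches under $|f(\mathbf{x})| \le a$), the Rademacher complexity of $l_{\alpha,usq}\circ\mathcal{H}_{lin}$ is at most $L$ times $\mathfrak{R}(\mathcal{H}_{lin})$. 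Chaining the three steps gives $\Phi \le 2L\,\mathfrak{R}(\mathcal{H}_{lin}) + \sqrt{\log(1/\delta)/(2m)}$ with probability at least $1-\delta$.

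The part that deserves care, rather than the routine invocations of McDiarmid, symmetrization, and contraction, is the control of the loss through its \emph{local} Lipschitz constant. The contraction lemma is stated for a globally Lipschitz map, whereas $l_{\alpha,usq}$ is only locally Lipschitz, so the whole argument hinges on first establishing the uniform envelope $|f(\mathbf{x})| \le a$ over $\mathcal{H}_{lin}$ from $\|\mathbf{w}\|_2 \le W$ and boundedness of $\mathcal{X}$; this single fact simultaneously supplies the range $B$ for the McDiarmid constant and the value $L = 2a+2$ for contraction. A secondary technical point is that the supremum is two-sided (an absolute value): one handles this either by applying the contraction inequality to $+l_{\alpha,usq}$ and $-l_{\alpha,usq}$ separately, or by observing that the symmetry of the Rademacher variables $\sigma_i$ makes the one-sided and two-sided Rademacher bounds coincide up to the stated constant.
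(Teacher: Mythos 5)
Your proposal is correct and follows essentially the same route as the paper: the paper simply cites the standard Rademacher generalization bound (whose proof is exactly your McDiarmid-plus-symmetrization argument) to obtain $\max_{f\in\mathcal{H}_{lin}}|\hat{R}_{\tilde{D},l_{\alpha,usq}}(f)-R_{\tilde{D},l_{\alpha,usq}}(f)|\leq 2\mathfrak{R}(l_{\alpha,usq}\circ\mathcal{H}_{lin})+\sqrt{\log(1/\delta)/(2m)}$ and then applies the Lipschitz composition property of Rademacher averages to replace $\mathfrak{R}(l_{\alpha,usq}\circ\mathcal{H}_{lin})$ by $L\,\mathfrak{R}(\mathcal{H}_{lin})$. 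Your extra care about the uniform envelope $|f(\mathbf{x})|\leq a$ (which supplies both the bounded-difference constant and the local Lipschitz constant) is merely a more explicit rendering of what the paper leaves implicit in its one-line citation.
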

	
	\begin{lemma} \label{lem: R_noisy_clean_l_sq}
		For a  classifier $f \in \mathcal{H}$ and user given $\alpha \in (0,1)$, the $l_{\alpha,usq}$ risk on clean and corrupted distribution satisfy the following equation:
		\begin{equation}
		R_{\tilde{D},l_{\alpha,usq}}(f) = R_{D,l_{\alpha,usq}}(f) + 4\rho E_{D}[yf(\mathbf{x})[(1-\alpha)\mathbf{1}_{[y=1]} + \alpha\mathbf{1}_{[y=-1]}]].
		\end{equation}
	\end{lemma}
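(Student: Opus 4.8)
The plan is to exploit the conditional mixture structure of symmetric label noise. Under SLN the corrupted label $\tilde{y}$ equals the true label $y$ with probability $1-\rho$ and equals $-y$ with probability $\rho$, independently of $\mathbf{x}$. Hence for any fixed $f\in\mathcal{H}$, linearity of expectation gives $R_{\tilde{D},l_{\alpha,usq}}(f)=E_{\tilde{D}}[l_{\alpha,usq}(f(\mathbf{x}),\tilde{y})]=(1-\rho)E_{D}[l_{\alpha,usq}(f(\mathbf{x}),y)]+\rho\,E_{D}[l_{\alpha,usq}(f(\mathbf{x}),-y)]$, where the outer expectation is now taken over the clean distribution $D$. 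Writing the first summand as $R_{D,l_{\alpha,usq}}(f)-\rho R_{D,l_{\alpha,usq}}(f)$ and regrouping immediately isolates the discrepancy as $R_{\tilde{D},l_{\alpha,usq}}(f)-R_{D,l_{\alpha,usq}}(f)=\rho\,E_{D}[\,l_{\alpha,usq}(f(\mathbf{x}),-y)-l_{\alpha,usq}(f(\mathbf{x}),y)\,]$. Equivalently, one may substitute $\tilde{\eta}=(1-2\rho)\eta+\rho$ into the $\tilde{\eta}$-weighted form of the risk and use $\tilde{\eta}-\eta=\rho(1-2\eta)=-\rho\,E[y\mid\mathbf{x}]$; both routes land on the same expression, and I would present the mixture route as it is the most transparent.

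Next I would evaluate the label-flip difference pointwise. Since $y\in\{-1,1\}$, flipping the label merely swaps the two class-pieces of the loss, so that $l_{\alpha,usq}(f,-y)-l_{\alpha,usq}(f,y)=y\big(l_{\alpha,usq}(f,-1)-l_{\alpha,usq}(f,+1)\big)$: at $y=1$ the bracket is $l_{\alpha,usq}(f,-1)-l_{\alpha,usq}(f,+1)$, while at $y=-1$ it is its negative. This antisymmetry is the conceptual heart of the argument, reducing the discrepancy to $\rho\,E_{D}[\,y\,g(f(\mathbf{x}))\,]$ with $g(f):=l_{\alpha,usq}(f,-1)-l_{\alpha,usq}(f,+1)=\tfrac{\alpha}{\gamma}(1+\gamma f)^{2}-(1-\alpha)(1-f)^{2}$.

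Finally I would expand $g$ and substitute back. The two linear cross terms, $2\alpha f$ from the first square and $2(1-\alpha)f$ from the second, combine, while the squares also produce constant and $f^{2}$ contributions carrying the factors $\tfrac{\alpha}{\gamma}-(1-\alpha)$ and $\alpha\gamma-(1-\alpha)$. The remaining work, and the step I expect to demand the most care, is to re-express everything through the class-weighted indicators using $y\mathbf{1}_{[y=1]}=\mathbf{1}_{[y=1]}$ and $y\mathbf{1}_{[y=-1]}=-\mathbf{1}_{[y=-1]}$ and to confirm that, after taking $E_{D}$, the expression collapses to the stated $4\rho\,E_{D}[\,y f(\mathbf{x})\,((1-\alpha)\mathbf{1}_{[y=1]}+\alpha\mathbf{1}_{[y=-1]})\,]$; this reconciliation is precisely where the interplay of the uneven-margin parameter $\gamma$ and the asymmetric weights $(1-\alpha,\alpha)$ must be pinned down, since it is the contribution of the non-linear pieces that decides whether the clean form is exact or requires a restriction on $\gamma$. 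A useful sanity check along the way is the cost-insensitive even-margin specialization $\alpha=\tfrac12,\ \gamma=1$, in which $g$ becomes an odd function of $f$ and the classical SLN-robustness of the squared loss is recovered; keeping the two class-pieces separate throughout and folding in the $\eta$/$\tilde{\eta}$ relationship only at the very end is, I think, the way to make this bookkeeping tractable.
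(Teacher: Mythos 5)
Your setup is the natural one and your first two steps are correct, but the proof is not complete: you defer exactly the computation on which the lemma stands or falls, and if you carry it out, your route does not land on the stated right-hand side. With $g(f)=\tfrac{\alpha}{\gamma}(1+\gamma f)^2-(1-\alpha)(1-f)^2$ one gets $g(f)=\bigl(\tfrac{\alpha}{\gamma}-(1-\alpha)\bigr)+2f+\bigl(\alpha\gamma-(1-\alpha)\bigr)f^2$, so your correction term $\rho\,E_{D}[y\,g(f(\mathbf{x}))]=\rho\,E_{\mathbf{X}}[(2\eta(\mathbf{x})-1)\,g(f(\mathbf{x}))]$ carries a constant piece, an $f^2$ piece, and a linear piece with coefficient $2(2\eta-1)$, whereas the claimed correction $4\rho E_{D}[yf(\mathbf{x})((1-\alpha)\mathbf{1}_{[y=1]}+\alpha\mathbf{1}_{[y=-1]})]=4\rho E_{\mathbf{X}}[f(\mathbf{x})(\eta(\mathbf{x})-\alpha)]$ has no constant or quadratic part and linear coefficient $4(\eta-\alpha)$. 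These agree for all $f$ and all $D$ only when $\alpha=\tfrac12$ and $\gamma=1$ --- precisely your sanity-check case --- so the ``reconciliation'' you postpone cannot be carried out in general, and the hedge in your last paragraph (``whether the clean form is exact or requires a restriction on $\gamma$'') is where the proof actually breaks.

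The paper reaches the stated identity by a different bookkeeping of the label flip: in the mixture $R_{\tilde{D}}=(1-\rho)E_{D}[\cdot]+\rho E_{D}[\cdot]$ it keeps the class weights $(1-\alpha)\mathbf{1}_{\{y=1\}}$ and $\tfrac{\alpha}{\gamma}\mathbf{1}_{\{y=-1\}}$ attached to the clean label and negates only the target $y$ inside the squares, writing the flipped integrand as $(1-\alpha)\mathbf{1}_{\{y=1\}}(f+y)^2+\tfrac{\alpha}{\gamma}\mathbf{1}_{\{y=-1\}}(\gamma f+y)^2$. The pointwise difference from the clean integrand is then $(f+y)^2-(f-y)^2=4fy$ in each piece, and the lemma drops out with no residual terms. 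Your decomposition instead evaluates both the indicator and the target at the flipped label, which is the literal expansion of $E_{\tilde{D}}[l_{\alpha,usq}(f(\mathbf{x}),\tilde{y})]$; the two conventions coincide only in the symmetric even-margin case. To complete a proof of the lemma as stated you would have to adopt the paper's convention explicitly (after which the argument is two lines of algebra); under your own, admittedly more standard, reading the correction term is $\rho E_{\mathbf{X}}[(2\eta-1)g(f)]$ and the conclusion of the lemma would have to change. As written, the decisive step is missing and, on the route you chose, would fail.
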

	
	\begin{theorem} \label{thm: perf_bound_f_from_usq}
		Under the settings of Lemma \ref{lem: max_dev_Rad}, with probability at least $1-\delta$,
		\begin{align*}
		R_{D,l_{\alpha,usq}}(\hat{f}_r) &\leq \min\limits_{f \in \mathcal{H}_{lin}}R_{D,l_{\alpha,usq}}(f) + 4L\mathfrak{R}(\mathcal{H}_{lin}) + 2\sqrt{\frac{log(1/\delta)}{2m}} +  2\lambda W^2 +\\ 
		& \frac{4\rho}{(1-2\rho)}E_{\mathbf{X}}[(\tilde{f}l^*_{l_{\alpha,usq}}(\mathbf{x}) - (1-2\rho)\hat{f}_r(\mathbf{x}))(\eta(\mathbf{x}) - \alpha)]
		\end{align*}
		where $\tilde{f}l^*_{l_{\alpha,usq}}$ is the linear minimizer of $R_{\tilde{D},l_{\alpha,usq}}$ and ${\eta}(\mathbf{x})$ is the in-class probability for $\mathbf{x}$. Furthermore, as $l_{\alpha,usq}$ is $\alpha$-CC, there exists a non-decreasing and invertible function $\psi_{l_{\alpha,usq}}$ with $\psi_{l_{\alpha,usq}}^{-1}(0)=0$ such that,
		{ \footnotesize
			\begin{eqnarray} \nonumber
			R_{D,\alpha}(\hat{f}_r) - R_{D,\alpha}(f^*_{\alpha})
			&\leq& \psi_{l_{\alpha,usq}}^{-1} \left(   \min\limits_{f \in \mathcal{H}_{lin}}R_{D,l_{\alpha,usq}}(f) - \min\limits_{f \in \mathcal{H}}R_{D,l_{\alpha,usq}}(f) + 4L\mathfrak{R}(\mathcal{H}_{lin}) \right. \\ \nonumber
			& & + 2\sqrt{\frac{log(1/\delta)}{2m}}  + \frac{4\rho}{(1-2\rho)}E_{\mathbf{X}}[(\tilde{f}l^*_{l_{\alpha,usq}}(\mathbf{x}) -\hat{f}_r(\mathbf{x}))(\eta(\mathbf{x}) - \alpha)]  \\  \label{eq: sq_bound_perf}
			& & \left. + \frac{8\rho^2}{(1-2\rho)}E_{\mathbf{X}}[\hat{f}_r(\mathbf{x})(\eta(\mathbf{x}) - \alpha)] + 2\lambda W^2 \right).
			\end{eqnarray}}
	\end{theorem}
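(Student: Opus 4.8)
The plan is to prove the first (surrogate-risk) inequality by a standard regularized-ERM oracle argument built on Lemmas~\ref{lem: max_dev_Rad} and~\ref{lem: R_noisy_clean_l_sq}, and then to derive the second ($\alpha$-weighted $0$-$1$) inequality essentially mechanically from it using the $\alpha$-CC property of $l_{\alpha,usq}$. The first move is to simplify the noise-correction term in Lemma~\ref{lem: R_noisy_clean_l_sq}: conditioning on $\mathbf{x}$ and using $y\in\{-1,1\}$ with $P(Y=1\mid\mathbf{x})=\eta(\mathbf{x})$ gives
$$E_D[yf(\mathbf{x})((1-\alpha)\mathbf{1}_{[y=1]}+\alpha\mathbf{1}_{[y=-1]})]=E_{\mathbf{X}}[f(\mathbf{x})(\eta(\mathbf{x})-\alpha)],$$
so that $R_{\tilde{D},l_{\alpha,usq}}(f)=R_{D,l_{\alpha,usq}}(f)+4\rho\,E_{\mathbf{X}}[f(\mathbf{x})(\eta(\mathbf{x})-\alpha)]$ for every $f$. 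This single identity is what lets me pass between clean and corrupted surrogate risks, and it is the source of every $\rho$-dependent term in the statement.

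Next I would establish an oracle inequality for $\hat f_r$. Since $\hat f_r$ minimizes the corrupted regularized empirical risk, for every $f\in\mathcal{H}_{lin}$ we have $\hat R_{\tilde{D},l_{\alpha,usq}}(\hat f_r)+\lambda\|\hat f_r\|_2^2\le\hat R_{\tilde{D},l_{\alpha,usq}}(f)+\lambda\|f\|_2^2$, hence $\hat R_{\tilde{D},l_{\alpha,usq}}(\hat f_r)\le\hat R_{\tilde{D},l_{\alpha,usq}}(f)+\lambda W^2$ after dropping the nonnegative $\lambda\|\hat f_r\|_2^2$ and using $\|f\|_2\le W$. Applying Lemma~\ref{lem: max_dev_Rad} once to $\hat f_r$ and once to $f$, both inside the same probability-$(1-\delta)$ event, converts empirical to population corrupted risks at a cost of $4L\mathfrak{R}(\mathcal{H}_{lin})+2\sqrt{\log(1/\delta)/(2m)}$, yielding $R_{\tilde{D},l_{\alpha,usq}}(\hat f_r)\le R_{\tilde{D},l_{\alpha,usq}}(f)+\lambda W^2+4L\mathfrak{R}(\mathcal{H}_{lin})+2\sqrt{\log(1/\delta)/(2m)}$ for all $f\in\mathcal{H}_{lin}$. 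Taking $f=fl^{*}$, the clean linear-class minimizer, applying the identity above to both risks, and moving the common $\hat f_r$ noise term to the right gives $R_{D,l_{\alpha,usq}}(\hat f_r)\le\min_{f\in\mathcal{H}_{lin}}R_{D,l_{\alpha,usq}}(f)+4L\mathfrak{R}(\mathcal{H}_{lin})+2\sqrt{\log(1/\delta)/(2m)}+\lambda W^2+4\rho\,E_{\mathbf{X}}[(fl^{*}-\hat f_r)(\eta(\mathbf{x})-\alpha)]$.

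The remaining work is to re-express the comparison in terms of the corrupted linear minimizer $\tilde{f}l^{*}_{l_{\alpha,usq}}$ rather than $fl^{*}$. Using $\tilde\eta=(1-2\rho)\eta+\rho$ together with the optimality of $\tilde{f}l^{*}_{l_{\alpha,usq}}$ for $R_{\tilde{D},l_{\alpha,usq}}$, I would show $4\rho\,E_{\mathbf{X}}[fl^{*}(\eta(\mathbf{x})-\alpha)]\le\frac{4\rho}{1-2\rho}E_{\mathbf{X}}[\tilde{f}l^{*}_{l_{\alpha,usq}}(\mathbf{x})(\eta(\mathbf{x})-\alpha)]+\lambda W^2$, which upgrades the noise term to $\frac{4\rho}{1-2\rho}E_{\mathbf{X}}[(\tilde{f}l^{*}_{l_{\alpha,usq}}(\mathbf{x})-(1-2\rho)\hat f_r(\mathbf{x}))(\eta(\mathbf{x})-\alpha)]$ and produces the second $\lambda W^2$. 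I expect this step to be the main obstacle: pinning down the factor $(1-2\rho)$ and the prefactor $\frac{4\rho}{1-2\rho}$ exactly, while absorbing the difference between the clean and corrupted linear minimizers into a norm-bounded slack, is the only place where genuine structure of the SLN model (not just generic ERM bookkeeping) is used.

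Finally, the $\alpha$-weighted $0$-$1$ bound follows from the first inequality. Because $l_{\alpha,usq}$ is $\alpha$-CC, the defining inequality \eqref{eq: alpha_CC_cond} applied to $\hat f_r$ reads $\psi_{l_{\alpha,usq}}(R_{D,\alpha}(\hat f_r)-R_{D,\alpha}(f^{*}_{\alpha}))\le R_{D,l_{\alpha,usq}}(\hat f_r)-\min_{f\in\mathcal{H}}R_{D,l_{\alpha,usq}}(f)$. Subtracting $\min_{f\in\mathcal{H}}R_{D,l_{\alpha,usq}}(f)$ from the first-part bound, applying the non-decreasing inverse $\psi_{l_{\alpha,usq}}^{-1}$ with $\psi_{l_{\alpha,usq}}^{-1}(0)=0$, and writing the single noise term as $\frac{4\rho}{1-2\rho}E_{\mathbf{X}}[\tilde{f}l^{*}_{l_{\alpha,usq}}(\mathbf{x})(\eta(\mathbf{x})-\alpha)]-4\rho\,E_{\mathbf{X}}[\hat f_r(\mathbf{x})(\eta(\mathbf{x})-\alpha)]$ reproduces \eqref{eq: sq_bound_perf}. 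No further estimation is needed, since the algebraic identity $-\frac{4\rho}{1-2\rho}+\frac{8\rho^2}{1-2\rho}=-4\rho$ shows that the two displayed expectations in \eqref{eq: sq_bound_perf} recombine exactly into this single term.
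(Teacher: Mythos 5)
Your overall architecture matches the paper's: decompose the excess clean surrogate risk of $\hat{f}_r$ into a uniform-deviation (Rademacher) term, a regularization term, and a noise-correction term via Lemma \ref{lem: R_noisy_clean_l_sq}; simplify $E_D[yf(\mathbf{x})((1-\alpha)\mathbf{1}_{[y=1]}+\alpha\mathbf{1}_{[y=-1]})]=E_{\mathbf{X}}[f(\mathbf{x})(\eta(\mathbf{x})-\alpha)]$ exactly as the paper does; and obtain the second display by subtracting $\min_{f\in\mathcal{H}}R_{D,l_{\alpha,usq}}(f)$ and invoking the $\alpha$-CC inequality \eqref{eq: alpha_CC_cond}. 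Your one-sided oracle inequality (dropping $\lambda\Vert\hat f_r\Vert_2^2$ and using $\Vert f\Vert_2\le W$) is a legitimate variant of the paper's two-sided bound $2\max_f|\hat{R}^r_{\tilde D}(f)-R_{\tilde D}(f)|\le 2\max_f|\hat{R}_{\tilde D}(f)-R_{\tilde D}(f)|+2\lambda W^2$, and your algebraic check that the two expectations in \eqref{eq: sq_bound_perf} recombine into $\frac{4\rho}{1-2\rho}E_{\mathbf{X}}[\tilde{f}l^*_{l_{\alpha,usq}}(\eta-\alpha)]-4\rho E_{\mathbf{X}}[\hat f_r(\eta-\alpha)]$ is correct.

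The gap is precisely the step you flag as ``the main obstacle'': replacing the clean linear minimizer $fl^*_{l_{\alpha,usq}}$ by the corrupted one $\tilde{f}l^*_{l_{\alpha,usq}}$. You propose to prove an inequality $4\rho\,E_{\mathbf{X}}[fl^*(\eta-\alpha)]\le\frac{4\rho}{1-2\rho}E_{\mathbf{X}}[\tilde{f}l^*(\eta-\alpha)]+\lambda W^2$ from ``the optimality of $\tilde{f}l^*$ for $R_{\tilde D,l_{\alpha,usq}}$,'' but that optimality gives inequalities in the wrong direction here: by Lemma \ref{lem: R_noisy_clean_l_sq}, $4\rho\,E_{\mathbf{X}}[f(\eta-\alpha)]=R_{\tilde D,l_{\alpha,usq}}(f)-R_{D,l_{\alpha,usq}}(f)$, and $R_{\tilde D,l_{\alpha,usq}}(\tilde{f}l^*)\le R_{\tilde D,l_{\alpha,usq}}(fl^*)$ only lower-bounds the left-hand side. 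The tool the paper actually uses is the exact proportionality $\tilde{f}l^*_{r,l_{\alpha,usq}}=(1-2\rho)fl^*_{r,l_{\alpha,usq}}$ established in the proof of Theorem \ref{the: un-sq robustness} (equation \eqref{eq: f_sq_cost_clean_noisy}), a closed-form consequence of the squared loss and the SLN model. With it, $4\rho\bigl(fl^*-\hat f_r\bigr)=\frac{4\rho}{1-2\rho}\bigl(\tilde{f}l^*-(1-2\rho)\hat f_r\bigr)$ is an identity, no extra $\lambda W^2$ is generated, and the prefactors $\frac{4\rho}{1-2\rho}$ and $(1-2\rho)$ come out automatically. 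Your plan would go through verbatim once you substitute this identity for your proposed inequality; as written, that step does not close.
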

	A proof of Theorem \ref{thm: perf_bound_f_from_usq} is available in Supplementary material Section A.5.
	The first two terms (involving the difference) in the right hand side of equation \eqref{eq: sq_bound_perf} denotes the approximation error which is small if $\mathcal{H}_{lin}$ is large and the third term involving the Rademacher complexity denotes the estimation error which is small if $\mathcal{H}_{lin}$ is small. The fourth term denotes the sample complexity which vanishes as the sample size increases. The bound in \eqref{eq: sq_bound_perf} can be used to show consistency of $l_{\alpha,usq}$ based regularized ERM if the argument of $\psi^{-1}_{l_{\alpha,usq}}$ tends to zero as sample size increases. However, in this case, it is not obvious because the last two terms involving noise rates  may not vanish with increasing sample size. In spite of this, our empirical experience with this algorithm is very good.
	
	\section{A re-sampling based \ref{alg: eta_cost_SLN}} \label{sec: res-scheme}
	In this section, we present a cost sensitive label prediction algorithm based on re-balancing (which is guided by the costs) the noisy training set given to the learning algorithm. 
	Let us consider uneven margin version of $\alpha$-weighted $0$-$1$ loss from equation \eqref{eq: l_01alpha} defined as follows:
	{ \footnotesize
		\begin{equation*} \label{eq: 0-1-alpha-uneven}
		l_{0-1,\alpha,\gamma}(f(x),y) = (1-\alpha)\mathbf{1}_{\{Y=1, f(\mathbf{x}) \leq 0\}} + \frac{\alpha}{\gamma} \mathbf{1}_{\{Y=-1,\gamma f(\mathbf{x}) > 0\}},  ~~ \forall \alpha \in (0,1)
		\end{equation*} }
	where $\alpha$ is user given cost and $\gamma$, tunable cost handles the class imbalance. This definition is along the lines of the uneven margin losses defined in \cite{scott2012calibrated}.	
	Let $l_{0-1,\alpha,\gamma}$-risk on $D$ be $R_{D,\alpha,\gamma}(f)$ and corresponding optimal classifier be $f^*_{0-1,\alpha,\gamma}$:
	{\footnotesize
		\begin{eqnarray} \label{eq: f_0-1_alpha-uneven_clean}
		f^*_{0-1,\alpha,\gamma} = \arg\min\limits_{f \in \mathcal{H}}R_{D,\alpha,\gamma}(f) 
		= sign\left( {\eta}(\mathbf{x}) - \frac{\alpha}{\gamma +(1-\gamma)\alpha}\right).
		\end{eqnarray}}
	Also, let $l_{0-1,\alpha,\gamma}$-risk on $\tilde{D}$ be $R_{\tilde{D},\alpha,\gamma}(f)$ and the corresponding optimal classifier be $\tilde{f}^*_{0-1,\alpha,\gamma}$ as given below:
	{\footnotesize
		\begin{eqnarray} \label{eq: f_0-1_alpha-uneven_noisy}
		\tilde{f}^*_{0-1,\alpha,\gamma} &=& sign\left( \tilde{\eta}(\mathbf{x}) - \frac{\alpha}{\gamma +(1-\gamma)\alpha}\right).
		\end{eqnarray}}
	We propose \ref{alg: eta_cost_SLN} which is mainly based on two ideas: (i) predictions based on a certain threshold $(p^*)$ can correspond to predictions based on threshold $(p_0)$ if the number of negative examples in the training set is multiplied by $r^* = \frac{p^*}{1-p^*}\frac{1-p_0}{p_0}$ (Theorem 1 of \cite{elkan2001foundations}) (ii) for a given $\mathbf{x}$, $\tilde{\eta}(\mathbf{x})$ and $\eta(\mathbf{x})$ lie on the same side of threshold $0.5$ when noise rate is $\rho$. We first formalize the latter idea in terms of a general result.
	A proof is available in Supplementary material Section A.6.
	\begin{lemma} \label{lem: SLN_pi_mono}
		In SLN models, for a given noise rate $\rho < 0.5$, the clean and corrupted class marginals $\pi$ and $\tilde{\pi}$ satisfy the following condition:
		$$  ~~~~~~~~~~~\pi \lesseqgtr 0.5  \Rightarrow \tilde{\pi} \lesseqgtr 0.5.$$
		Further, the above monotonicity holds for $\eta(\mathbf{x})$ and $\tilde{\eta}(\mathbf{x})$ too.
	\end{lemma}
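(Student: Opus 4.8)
The plan is to reduce both assertions of the lemma to a single sign-preserving algebraic identity. First I would recall from the Notations that the symmetric label noise model gives $\tilde{\pi} = (1-2\rho)\pi + \rho$. The key step is to center this relation at the threshold $0.5$: subtracting $0.5$ from both sides and using $\rho - 0.5 = -\tfrac{1}{2}(1-2\rho)$, I would rewrite it as
\begin{equation*}
\tilde{\pi} - 0.5 = (1-2\rho)\,\pi + \rho - 0.5 = (1-2\rho)\left(\pi - 0.5\right).
\end{equation*}
This factorization is the heart of the argument.

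Next, since the hypothesis is $\rho < 0.5$, the factor $(1-2\rho)$ is strictly positive. Multiplying $\pi - 0.5$ by a positive constant preserves its sign, and preserves equality with zero, so $\tilde{\pi} - 0.5$ is positive, zero, or negative exactly when $\pi - 0.5$ is. This is precisely the claim $\pi \lesseqgtr 0.5 \Rightarrow \tilde{\pi} \lesseqgtr 0.5$, read as the three simultaneous implications for the relations $<$, $=$, and $>$.

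Finally, for the second assertion I would repeat the identical computation pointwise with the in-class probabilities, using $\tilde{\eta}(\mathbf{x}) = (1-2\rho)\eta(\mathbf{x}) + \rho$ to obtain $\tilde{\eta}(\mathbf{x}) - 0.5 = (1-2\rho)(\eta(\mathbf{x}) - 0.5)$ for each fixed $\mathbf{x}$; positivity of $1-2\rho$ again transfers the sign. I do not anticipate a genuine obstacle: the whole lemma rests on the affine, sign-preserving nature of the SLN corruption map, and the only point requiring a little care is to check that the centering constant works out so that the additive $\rho$ term is exactly absorbed, which is guaranteed by the identity $\rho - 0.5 = -\tfrac{1}{2}(1-2\rho)$.
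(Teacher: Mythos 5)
Your proof is correct and is essentially the same argument as the paper's: the paper multiplies the inequality $\pi \lessgtr 0.5$ by the positive factor $(1-2\rho)$ and adds $\rho$, observing that $(1-2\rho)\cdot 0.5 + \rho = 0.5$, which is exactly your centered factorization $\tilde{\pi}-0.5 = (1-2\rho)(\pi-0.5)$ in a slightly different guise. The extension to $\eta(\mathbf{x})$ by the identical pointwise computation also matches the paper.
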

	
	
	In our case, the cost sensitive label prediction requires the desired threshold to be $\frac{\alpha}{\gamma +(1-\gamma)\alpha}$ ($=p^*$) but the threshold which we can use is $0.5 ~(=p_0)$ implying that for us $r^* = \frac{\alpha}{\gamma(1-\alpha)}$. If $m_+$ and $m_{-}$ are number of positive and negative examples in $m_{tr}$, then we should re-sample such that the size of balanced dataset is $m_{tr,b} = m_+ + \left\lfloor{\frac{\alpha m_{-}}{\gamma(1-\alpha)}}\right\rfloor$. 
	As we have access to only corrupted data, the learning scheme is: re-balance the corrupted data using $r^*$ and then threshold $\tilde{\eta}$ at $0.5$. Since, for SLN model, predictions made by thresholding $\tilde{\eta}$ at $0.5$ are same as the predictions made by thresholding $\eta$ at $0.5$, for a test point $\mathbf{x}_0$ from $D$, predicted label is $sign(\tilde{\eta}(\mathbf{x}_0)-0.5).$ The main advantage of this algorithm is that it doesn't require the knowledge of true noise rates. Also, unlike Section \ref{sec: l_usq_positive}'s scheme involving $l_{\alpha,usq}$ based regularized  ERM, this algorithm uses $\tilde{\eta}$ estimates and hence is a generative learning scheme.
	
	
	Since, we do not want to lose any minority (rare) class examples, we reassign positive labels to the minority class WLOG, if needed, implying that negative class examples are always under-sampled. The performance of \ref{alg: eta_cost_SLN} is majorly dependent on sampling procedure and $\tilde{\eta}$ estimation methods used. 
	\begin{algorithm}[!htbp]
		\renewcommand{\thealgorithm}{Algorithm $\boldsymbol{(\tilde{\eta},\alpha)}$}
		\floatname{algorithm}{}
		\caption{$\tilde{\eta}$ based scheme to make cost sensitive label predictions from uniform noise corrupted data} 
		\label{alg: eta_cost_SLN}
		\begin{algorithmic}[1]
			\Statex \hspace{-0.44cm}\textbf{Input:} Training data $\tilde{S}_{tr} = \{(x_1,\tilde{y}_i)\}_{i=1}^{m_{tr}}$, test data $S_{te} = \{(x_i,y_i)\}_{i=1}^{m_{te}}$, cost $\alpha$, performance measure $PM \in \{Acc, AM, F, WC \}$ 
			\Statex \hspace{-0.44cm}\textbf{Output:} Predicted labels and $\tilde{\eta}$ estimate on test data $S_{te}$
			\State $\gamma_0 = \frac{\alpha}{1-\alpha} + 0.001,~~~~$  since for under-sampling $r^* = \frac{\alpha}{\gamma(1-\alpha)} < 1 \Rightarrow \gamma_0 > \frac{\alpha}{1-\alpha}$
			\State $\Gamma = \{\frac{\alpha}{1-\alpha} + i \times 0.05 ~~~ i=1,\ldots,12\}$
			\For{$\gamma \in \Gamma$}
			\State Under-sample the $-ve$ class to  get $\tilde{S}_{tr,b}$ such that $\vert \tilde{S}_{tr,b} \vert = m_+ + \left\lfloor{\frac{\alpha m_{-}}{\gamma(1-\alpha)}}\right\rfloor .$
			\State Use $5$-fold CV to estimate  $\tilde{\eta}$ from $\tilde{S}_{tr,b}$ via \textit{Lk-fun} or LSPC or KLIEP.
			\State Compute $5$-fold cross validated $PM$ from the partitioned data.
			\EndFor 
			\State $\gamma^* = \arg\max\limits_{\gamma \in \Gamma} PM$ if $PM = \{Acc,AM,F\}$ otherwise $\gamma^* = \arg\min\limits_{\gamma \in \Gamma} WC$. 
			\State Under-sample the $-ve$ class to  get $\tilde{S}_{tr,b}$ such that $\vert \tilde{S}_{tr,b} \vert = m_+ + \left\lfloor{\frac{\alpha m_{-}}{\gamma^*(1-\alpha)}}\right\rfloor .$
			\State Estimate $\tilde{\eta}$ from $\tilde{S}_{tr,b}$ using \textit{Lk-fun} method or LSPC or KLIEP.
			\For{$i=1,2,\ldots,m_{te}$}
			\State Compute $\tilde{\eta}(\mathbf{x}_i)$
			\State $\hat{y}_{i} = sign(\tilde{\eta}(\mathbf{x}_i) -0.5)$ 
			\EndFor
		\end{algorithmic}
	\end{algorithm}
	\begin{remark}\label{REM-Resampling}
		\ref{alg: eta_cost_SLN} exploits the fact that $sign(\tilde{\eta}-\frac{\alpha}{\gamma +(1-\gamma)\alpha}) = sign(\tilde{\eta}_b-0.5)$ where $\tilde{\eta}_b$ is learnt on re-sampled data. This implies $R_{D,\alpha}(sign(\tilde{\eta}_b-0.5))=R_{D,\alpha}(sign(\tilde{\eta}-\frac{\alpha}{\gamma +(1-\gamma)\alpha}))$ but due to counter-examples in Section \ref{sec: Cost_SLN_neg}, these risks may not be equal to $R_{D,\alpha}(sign(\eta-\frac{\alpha}{\gamma +(1-\gamma)\alpha}))$. Hence, this scheme is not in contradiction to Section \ref{sec: Cost_SLN_neg}. However, as $\tilde{\eta}$ estimation methods use a subset of $\mathcal{H}$ (e.g., LSPC and KLIEP use linear combinations of finite Gaussian kernels as basis functions), these risks may be equal to $R_{D,\alpha}(sign(\tilde{\eta}_e -\frac{\alpha}{\gamma +(1-\gamma)\alpha}))$  where $\tilde{\eta}_{e}$ is estimate of  $\tilde{\eta}$ obtained from strict subset of hypothesis class.
		Also, based on very good empirical performance of the scheme, we believe that $R_{D,\alpha}(sign(\tilde{\eta}_{e}-\frac{\alpha}{\gamma +(1-\gamma)\alpha})) = R_{D,\alpha}(sign(\eta_{e}-\frac{\alpha}{\gamma +(1-\gamma)\alpha}))$  where $\eta_e$ is an estimate of $\eta$.
	\end{remark}
	\section{Comparison  of $l_{\alpha,usq}$ based regularized ERM and \ref{alg: eta_cost_SLN} to existing methods on UCI datasets} \label{sec: experiments}
	In this section, we consider some UCI datasets \cite{UCI_dataset} and demonstrate that $l_{\alpha,usq}$ is $(\alpha,\gamma,\rho)$-robust. Also, we demonstrate the performance of \ref{alg: eta_cost_SLN} with $\tilde{\eta}$ estimated using  \textit{Lk-fun}, LSPC and KLIEP. In addition to Accuracy (Acc), Arithmetic mean (AM) of True positive rate (TPR) and True negative rate (TNR), we also consider two  measures suited for evaluating classifiers learnt on imbalanced data, viz., F measure and Weighted cost (WC) defined as below: 
	{\scriptsize
		$$ F =  \frac{2TP}{2TP + FP+ FN} ~~~\text{ and }~~~ WC = (1-\alpha)FN + \frac{\alpha}{\gamma}FP$$ 
	}
	where TP, TN, FP, FN are number of true positives, true negatives, false positives and false negatives for a classifier.
	
	To account for randomness in the flips to simulate a given noise rate, we repeat each experiment 10 times, with independent corruptions of the data set for same noise ($\rho$) setting. In every trial, the data is partitioned into train and test with $80$-$20$ split. Uniform noise induced $80\%$ data is used for training and validation (if there are any parameters to be tuned like $\gamma$).
	Finally, $20\%$ clean test data is used for evaluation.
	Regularization parameter, $\lambda$ is tuned over the set $\Lambda = \{0.01,0.1,1,10\}.$ 
	On a synthetic dataset, we observed that the performance of our methods and cost sensitive Bayes classifier on clean data w.r.t. Accuracy, AM, F and Weighted Cost measure is comparable for moderate noise rates; details in Supplementary material Section E.3. 
	In all the tables, values within $1\%$ of the best across a row are in bold. \\
	\textbf{Class imbalance and domain requirement of cost}
	We report the accuracy and AM values of logistic loss based unbiased estimator (MUB) approach and approach of surrogates for weighted 0-1 loss (S-W0-1), as it is from the work of \cite{natarajan18cost} and compare them to Accuracy and AM for our cost sensitive learning schemes. It is to be noted that \cite{natarajan18cost} assumes that the true noise rate $\rho$ is known and cost $\alpha$ is tuned. We are more flexible and user friendly as we don't need the noise rate $\rho$ and allow for user given misclassification cost $\alpha$ and tune $\gamma$. %
	
	It can be observed in Table \ref{tab: Ac_measure_cost} that as far as Accuracy is concerned \ref{alg: eta_cost_SLN} and $l_{\alpha,usq}$ based regularized ERM have comparable values to that from MUB and S-W0-1 on all datasets. As depicted in Table \ref{tab: AM_measure_cost}, the proposed algorithms have marginally better values of AM measure than that of MUB and S-W0-1 method. 
	When comparing the performance of classifiers learnt using $l_{\alpha,usq}$ based RegERM on noisy data and benchmark of classifiers learnt on clean data (i.e., $\rho=0.0$), we observe that in some cases like German dataset, the validity of robustness definition in Eq. \eqref{eq: alpha_noise_robustness} is questionable (See Table \ref{tab: Ac_measure_cost}). However, this can be explained due  to the availability of only a finite sample for training and evaluating the classifier.  
	We also investigated the performance of two proposed methods w.r.t. F measure and Weighted Cost measure on UCI datasets and observed that \ref{alg: eta_cost_SLN} works well on most datasets but the underlying $\eta$ estimation method used varies across datasets.
	However, due to space constraint the details are presented in Supplementary material Section E.1.
	\vspace{-0.5cm}
	\FloatBarrier
	\begin{table}[!ht]
		\centering
		{ \scriptsize
			\bgroup
			\def\arraystretch{0.87}%
			{\setlength{\tabcolsep}{0.21em}	
				\begin{tabular}{|c|c|c|c|c|c|c|c|c|c|}
					\hline
					\textbf{Dataset} & \textbf{Cost} & \textbf{$\rho$} & \multicolumn{4}{c|}{\textbf{\ref{alg: eta_cost_SLN}}: $\tilde{\eta}$ estimate from } & \textbf{\begin{tabular}[c]{@{}c@{}}$l_{\alpha,usq}$\\ RegERM\end{tabular}} & \textbf{MUB} & \textbf{S-W0-1} \\ \hline
					$(n,m_+,m_-)$& \textbf{$\alpha$} & \textbf{} & \textbf{\begin{tabular}[c]{@{}c@{}} \textit{Lk-fun} \\ with $l_{sq}$\end{tabular}} & \textbf{\begin{tabular}[c]{@{}c@{}}\textit{Lk-fun} \\ with $l_{log}$\end{tabular}} & \textbf{LSPC} & \textbf{KLIEP} & \textbf{} & \textbf{$l_{log}$} & \textbf{$l_{log}$} \\ \hline
					\textbf{\begin{tabular}[c]{@{}c@{}}Breast\\ Cancer \\ (9,77,186)\end{tabular}} & 0.2 & \begin{tabular}[c]{@{}c@{}}0.0 \\ 0.2\\ 0.4\end{tabular} & \begin{tabular}[c]{@{}c@{}}  0.70$\pm$0.06 \\0.71$\pm$0.06\\ 0.45$\pm$0.15 \end{tabular} & \begin{tabular}[c]{@{}c@{}}0.71$\pm$0.05 \\ \textbf{0.72$\pm$0.04} \\ 0.46 $\pm 0.16$\end{tabular} & \begin{tabular}[c]{@{}c@{}} \textbf{ 0.72$\pm$0.04} \\ \textbf{0.72$\pm$0.04}\\ 0.52$\pm$0.11\end{tabular} & \begin{tabular}[c]{@{}c@{}} \textbf{0.72$\pm$0.03} \\ 0.61$\pm$0.09\\ 0.54$\pm$0.13 \end{tabular} & \begin{tabular}[c]{@{}c@{}}0.71$\pm$0.05 \\ \textbf{0.72$\pm$0.04}\\ \textbf{0.66$\pm$0.06}\end{tabular} & \begin{tabular}[c]{@{}c@{}}- \\0.70\\ \textbf{0.67}\end{tabular} & \begin{tabular}[c]{@{}c@{}} - \\0.66\\ 0.56\end{tabular} \\ \hline
					\textbf{\begin{tabular}[c]{@{}c@{}}Pima\\ Diabetes \\ (8,268,500)\end{tabular}} & 0.25 & \begin{tabular}[c]{@{}c@{}}0.0 \\ 0.2\\ 0.4\end{tabular} & \begin{tabular}[c]{@{}c@{}} 0.75$\pm$0.03 \\ 0.74$\pm$0.02\\ 0.70$\pm$0.06\end{tabular} & \begin{tabular}[c]{@{}c@{}}\textbf{0.76$\pm$0.02} \\ 0.74$\pm$0.03\\ \textbf{0.71$\pm$0.04}\end{tabular} & \begin{tabular}[c]{@{}c@{}}0.75$\pm$0.03\\ 0.71$\pm$0.04\\ 0.51$\pm$0.09\end{tabular} & \begin{tabular}[c]{@{}c@{}}  0.71$\pm$0.03 \\ 0.63$\pm$0.03\\ 0.57$\pm$0.07\end{tabular} & \begin{tabular}[c]{@{}c@{}}\textbf{0.76$\pm$0.02} \\ \textbf{0.77$\pm$0.03}\\ \textbf{0.71$\pm$ 0.04}\end{tabular} & \begin{tabular}[c]{@{}c@{}}- \\ \textbf{0.76}\\ 0.65\end{tabular} & \begin{tabular}[c]{@{}c@{}}- \\ 0.73\\ 0.66\end{tabular} \\ \hline
					\textbf{\begin{tabular}[c]{@{}c@{}}German \\ (20,300,700)\end{tabular}} & 0.16 & \begin{tabular}[c]{@{}c@{}}0.0 \\ 0.2\\ 0.4  \end{tabular} & \begin{tabular}[c]{@{}c@{}}\textbf{0.76$\pm$0.03} \\ \textbf{0.74$\pm$0.02}\\ 0.64$\pm$0.05\end{tabular} & \begin{tabular}[c]{@{}c@{}}0.75$\pm$0.03 \\ \textbf{0.74$\pm$0.02}\\ 0.64$\pm$0.05\end{tabular} & \begin{tabular}[c]{@{}c@{}} 0.74$\pm$ 0.04 \\ 0.72$\pm$0.01\\ 0.58$\pm$0.04\end{tabular} & \begin{tabular}[c]{@{}c@{}}0.71$\pm$0.01 \\ 0.70$\pm$0.03 \\ \textbf{0.66$\pm$0.02}  \end{tabular} & \begin{tabular}[c]{@{}c@{}}0.75$\pm$0.02 \\ \textbf{0.74$\pm$0.03}\\ 0.64$\pm$0.03\end{tabular} & \begin{tabular}[c]{@{}c@{}}- \\ 0.66\\ 0.55\end{tabular} & \begin{tabular}[c]{@{}c@{}}- \\ 0.71\\ \textbf{0.67}\end{tabular} \\ \hline
					\textbf{\begin{tabular}[c]{@{}c@{}}Thyroid \\ (5,65,150)\end{tabular}} & 0.3 & \begin{tabular}[c]{@{}c@{}} 0.0 \\ 0.2\\ 0.4  \end{tabular} & \begin{tabular}[c]{@{}c@{}}0.86$\pm$0.04 \\ 0.84$\pm$0.04\\ 0.70$\pm$0.15\end{tabular} & \begin{tabular}[c]{@{}c@{}}0.89$\pm$0.04 \\0.85$\pm$0.05\\ 0.67$\pm$0.15\end{tabular} & \begin{tabular}[c]{@{}c@{}} \textbf{0.94$\pm$0.03}\\ \textbf{0.89$\pm$0.03}\\ 0.61$\pm$0.07\end{tabular} & \begin{tabular}[c]{@{}c@{}}0.78$\pm$0.10 \\ 0.76$\pm$0.13\\ 0.70$\pm$0.12\end{tabular} & \begin{tabular}[c]{@{}c@{}}0.86$\pm$0.03 \\ 0.84$\pm$0.03\\ \textbf{0.82$\pm$0.06} \end{tabular} & \begin{tabular}[c]{@{}c@{}}- \\ 0.87\\ \textbf{0.83}\end{tabular} & \begin{tabular}[c]{@{}c@{}}- \\ 0.82\\ 0.76\end{tabular} \\ \hline
		\end{tabular}}}
		\egroup
		\caption[Accuracy from \ref{alg: eta_cost_SLN} and $l_{\alpha,usq}$ on real datasets]{Averaged Acc $(\pm~s.d.)$ of the cost sensitive predictions made by \ref{alg: eta_cost_SLN} and $l_{\alpha,usq}$ based regularized ERM on UCI datasets corrupted by uniform noise. Accuracy values for \textbf{MUB} and \textbf{S-W0-1} are not available at $\rho=0.0$.}
		\label{tab: Ac_measure_cost} 
	\end{table}
	\vspace{-0.5cm}
	\FloatBarrier
	\begin{table}[!ht]
		\centering
		{ \scriptsize
			\bgroup
			\def\arraystretch{0.87}%
			{\setlength{\tabcolsep}{0.18em}	
				\begin{tabular}{|c|c|c|c|c|c|c|c|c|c|}
					\hline
					\textbf{Dataset} & \textbf{Cost} & \textbf{$\rho$} & \multicolumn{4}{c|}{\textbf{\ref{alg: eta_cost_SLN}}: $\tilde{\eta}$ estimate from } & \textbf{\begin{tabular}[c]{@{}c@{}}$l_{\alpha,usq}$\\ RegERM\end{tabular}} & \textbf{MUB} & \textbf{S-W0-1} \\ \hline
					$(n,m_+,m_-)$& \textbf{$\alpha$} & \textbf{} & \textbf{\begin{tabular}[c]{@{}c@{}} \textit{Lk-fun} \\ with $l_{sq}$\end{tabular}} & \textbf{\begin{tabular}[c]{@{}c@{}}\textit{Lk-fun} \\ with $l_{log}$\end{tabular}} & \textbf{LSPC} & \textbf{KLIEP} & \textbf{} & \textbf{$l_{log}$} & \textbf{$l_{log}$} \\ \hline
					\textbf{\begin{tabular}[c]{@{}c@{}}Breast\\ Cancer \\ (9,77,186)\end{tabular}} & 0.2 & \begin{tabular}[c]{@{}c@{}}0.0 \\ 0.2\\ 0.4\end{tabular} & \begin{tabular} [c]{@{}c@{}}0.61$\pm$ 0.06 \\ 0.62$\pm$0.06 \\ 0.49$\pm$0.04\end{tabular} & \begin{tabular}[c]{@{}c@{}}0.63$\pm$0.06 \\ \textbf{0.64$\pm$0.05} \\ 0.50$\pm$0.03 \end{tabular} & \begin{tabular}[c]{@{}c@{}}  0.62 $\pm$ 0.07 \\ 0.60$\pm$0.07\\ 0.54$\pm$0.05\end{tabular} & \begin{tabular}[c]{@{}c@{}}0.61$\pm$0.07 \\ 0.56$\pm$0.09\\ 0.52$\pm$0.05 \end{tabular} & \begin{tabular}[c]{@{}c@{}}\textbf{0.64$\pm$0.06} \\  0.59$\pm$0.04\\ \textbf{0.56$\pm$0.05}\end{tabular} & \begin{tabular}[c]{@{}c@{}}- \\0.59\\ 0.51\end{tabular} & \begin{tabular}[c]{@{}c@{}}- \\ \textbf{0.63}\\ \textbf{0.56}\end{tabular} \\ \hline
					\textbf{\begin{tabular}[c]{@{}c@{}}Pima\\ Diabetes \\ (8,268,500)\end{tabular}} & 0.25 & \begin{tabular}[c]{@{}c@{}}0.0 \\ 0.2\\ 0.4\end{tabular} & \begin{tabular}[c]{@{}c@{}}0.74$\pm$0.02 \\0.71$\pm$0.02\\ \textbf{0.67$\pm$0.06} \end{tabular} & \begin{tabular}[c]{@{}c@{}}0.74$\pm$0.02 \\ 0.72$\pm$0.02 \\ \textbf{0.68$\pm$0.07}\end{tabular} & \begin{tabular}[c]{@{}c@{}}0.71$\pm$0.03 \\  0.71$\pm$0.02\\ 0.54$\pm$0.03\end{tabular} & \begin{tabular}[c]{@{}c@{}}0.70$\pm$0.03 \\ 0.65$\pm$0.01\\ 0.57$\pm$0.06 \end{tabular} & \begin{tabular}[c]{@{}c@{}}\textbf{0.75$\pm$0.02} \\ \textbf{0.73$\pm$0.03}\\ \textbf{0.68$\pm$0.05}\end{tabular} &  \begin{tabular}[c]{@{}c@{}}- \\ 0.63\\ 0.56\end{tabular}& \begin{tabular}[c]{@{}c@{}}- \\ \textbf{0.74}\\ \textbf{0.67}\end{tabular} \\ \hline
					\textbf{\begin{tabular}[c]{@{}c@{}}German \\ (20,300,700)\end{tabular}} & 0.16 & \begin{tabular}[c]{@{}c@{}}0.0 \\ 0.2\\ 0.4  \end{tabular} & \begin{tabular}[c]{@{}c@{}}0.7$\pm$0.02 \\ 0.67$\pm$0.02\\ 0.57$\pm$0.06\end{tabular} & \begin{tabular}[c]{@{}c@{}}0.69$\pm$ 0.02 \\ 0.66$\pm$0.01\\ 0.56$\pm$0.06\end{tabular} & \begin{tabular}[c]{@{}c@{}} \textbf{ 0.71$\pm$0.03} \\ 0.63$\pm$0.03\\ 0.57$\pm$0.03\end{tabular} & \begin{tabular}[c]{@{}c@{}}0.67$\pm$0.02 \\0.59$\pm$0.04\\ 0.53$\pm$0.04 \end{tabular} & \begin{tabular}[c]{@{}c@{}}0.69$\pm$0.03 \\ 0.65$\pm$0.04\\ \textbf{0.60$\pm$0.03}\end{tabular} & \begin{tabular}[c]{@{}c@{}}- \\ 0.67\\ 0.51\end{tabular} & \begin{tabular}[c]{@{}c@{}}- \\ \textbf{0.69}\\ 0.56\end{tabular} \\ \hline
					\textbf{\begin{tabular}[c]{@{}c@{}}Thyroid \\ (5,65,150)\end{tabular}} & 0.3 & \begin{tabular}[c]{@{}c@{}}0.0\\ 0.0\\ 0.4  \end{tabular} & \begin{tabular}[c]{@{}c@{}}0.78$\pm$0.06 \\ 0.78$\pm$0.06 \\ 0.63$\pm$0.14\end{tabular} & \begin{tabular}[c]{@{}c@{}}0.84$\pm$0.06 \\ 0.80$\pm$0.05\\ 0.62$\pm$0.11 \end{tabular} & \begin{tabular}[c]{@{}c@{}}\textbf{0.92$\pm$0.04} \\ \textbf{0.85$\pm$0.06}\\ 0.64$\pm$0.06\end{tabular} & \begin{tabular}[c]{@{}c@{}}0.83$\pm$0.05 \\ 0.72$\pm$0.18\\ 0.63$\pm$0.19\end{tabular} & \begin{tabular}[c]{@{}c@{}}0.78$\pm$0.05 \\ 0.75$\pm$0.05\\ \textbf{0.74$\pm$0.08}\end{tabular} & \begin{tabular}[c]{@{}c@{}}- \\ 0.82\\ 0.53\end{tabular} & \begin{tabular}[c]{@{}c@{}}- \\ 0.78\\ 0.72\end{tabular}  \\ \hline
		\end{tabular}}}
		\egroup
		\caption[AM from \ref{alg: eta_cost_SLN} and $l_{\alpha,usq}$ on real datasets]{Averaged AM $(\pm~s.d.)$ of the cost sensitive predictions made by \ref{alg: eta_cost_SLN} and $l_{\alpha,usq}$ based regularized ERM on UCI datasets corrupted by uniform noise. AM values for \textbf{MUB} and \textbf{S-W0-1} are not available at $\rho=0.0$.}
		\label{tab: AM_measure_cost}
	\end{table}
	\subsubsection{Class balanced and domain requirement of cost}
	We consider the Bupa dataset \cite{UCI_dataset} with $6$ features where the label is $+1$ if the value of feature $6$ is greater than $3$ otherwise $-1$ ($m_+ = 176, m_- = 169$). We learn  a cost sensitive classifier by implementing \ref{alg: eta_cost_SLN} with $\alpha = 0.25$ and $\tilde{\eta}$ estimated using \textit{Lk-fun} with logistic loss.
	When comparing tuned $\gamma$ case (from the set $\Gamma = \{0.5,0.8,1,1.2,1.5\}$) and  $\gamma=1$ case, we observed in Table \ref{tab: Re-lg-ERM-Bupa} that tuning $\gamma$ is favorable for all measures. 
	Increase in noise rates from $(0.1/0.2/0.3)$ to $0.4$ leads to change in $\gamma$ values from $0.5$ to $1$ or $1.5$ due to imbalance induced by noise. 
	Results for $\tilde{\eta}$ estimated using \textit{Lk-fun} with $l_{sq}$ are available in Supplementary material E.2.
	\vspace{-0.5cm}
	\begin{table}[!htbp]
		\centering
		\bgroup
		\def\arraystretch{0.99}%
		{\setlength{\tabcolsep}{0.22em}		
			{ \scriptsize
				\begin{tabular}{|c|c|c|c|c|c|c|c|c|}
					\hline
					& \multicolumn{8}{c|}{ \ref{alg: eta_cost_SLN}: $\tilde{\eta}$ estimate from \textit{Lk-fun} with $l_{log}$ } \\
					\hline
					& \multicolumn{4}{c|}{\textbf{$\gamma$ tuned}} & \multicolumn{4}{c|}{\textbf{$\gamma = 1$}} \\ \hline
					\textbf{$\rho$} & \textbf{Acc} & \textbf{AM} & \textbf{F} & \textbf{WC} & \textbf{Acc} & \textbf{AM} & \textbf{F} & \textbf{WC} \\ \hline
					0.0 & \textbf{1.0$\pm$0.0} & \textbf{1.0$\pm$0.0} & \textbf{1.0$\pm$0.0} & \textbf{0.0$\pm$0.0} & 0.99$\pm$0.00 & 0.99$\pm$0.005 & 0.99$\pm$0.005 & 0.05$\pm$0.1 \\ \hline
					0.1 & \textbf{0.96$\pm$0.02} & \textbf{0.96$\pm$0.03} & \textbf{0.96$\pm$0.04} & \textbf{2.36$\pm$1.01} & 0.88$\pm$0.88 & 0.88$\pm$0.07 & 0.89$\pm$0.05 & \textbf{2.22$\pm$1.29} \\ \hline
					0.2 & \textbf{0.91$\pm$0.03} & \textbf{0.90$\pm$0.03} & \textbf{0.91$\pm$0.02} & \textbf{4.09$\pm$1.33} & 0.68$\pm$0.11 & 0.68$\pm$0.11 & 0.77$\pm$0.06 & 5.95$\pm$1.88 \\ \hline
					0.3 & \textbf{0.88$\pm$0.03} & \textbf{0.88$\pm$0.03} & \textbf{0.82$\pm$0.1} & \textbf{5.47$\pm$0.32} & 0.55$\pm$0.05 & 0.55$\pm$0.05 & 0.69$\pm$0.02 & 8.02$\pm$0.96 \\ \hline
					0.4 & \textbf{0.61$\pm$0.11} & \textbf{0.58$\pm$0.09} & \textbf{0.67$\pm$0.00} & \textbf{5.64$\pm$0.29} & 0.51$\pm$0.01 & 0.51$\pm$0.01 & \textbf{0.67$\pm$0.01} & 8.3$\pm$0.475 \\ \hline
		\end{tabular}}}
		\egroup
		\caption{Dataset: BUPA Liver Disorder $(6,176,169)$. The above table depicts the performance of \ref{alg: eta_cost_SLN} when $\tilde{\eta}$ estimate is from \textit{Lk-fun} with $l_{log}$ based regularized ERM  ($\lambda$ tuned via CV). Here, $\alpha = 0.25$ and $\gamma$ is tuned.} 
		\label{tab: Re-lg-ERM-Bupa}
	\end{table}	
	\section{Discussion} \label{sec: discussion}
	\vspace{-0.3cm}
	We considered the binary classification  problem of cost sensitive learning in the presence of uniform label noise. We are interested in the scenarios where there can be two separate costs: $\alpha$, the one fixed due to domain requirement and $\gamma$ that can be tuned to capture class imbalance. We first show that weighted $0$-$1$ loss is neither uniform noise robust with measurable class of classifiers nor with linear classifiers. In spite of this, we propose two schemes to address the problem in consideration, without requiring the knowledge of true noise rates.
	
	For the first scheme, we show that linear classifiers obtained using weighted uneven margin squared loss $l_{\alpha,usq}$ is uniform noise robust and incorporates cost sensitivity. This classifier is obtained by solving $l_{\alpha,usq}$ based regularized ERM, that only requires a matrix inversion.
	Also, a performance bound with respect to clean distribution for such a classifier is provided. One possible issue here could be weighted uneven margin squared loss function's susceptibility to outliers. However, in our experiments, this scheme performed well. The second scheme is a re-sampling based algorithm using the corrupted in-class probability estimates. It handles class imbalance in the presence of uniform label noise but it can be heavily influenced by the quality of $\tilde{\eta}$ estimates. However, we chose the $\tilde{\eta}$ estimation method based on their label prediction ability (as discussed in Supplementary material Section F.6) and obtained good empirical results. 
	
	We provide empirical evidence for the performance of our schemes. We observed that the Accuracy and AM values for our schemes are comparable to those given by \cite{natarajan18cost}. Our schemes have other measures like F and Weighted Cost comparable to that of SVMs trained on clean data; details are in Supplementary material Section E.1.
	The two proposed schemes have comparable performance measures among themselves.
	
	An interesting direction to explore would be cost sensitive learning in Class Conditional Noise (CCN) models without using/tuning noise rates $\rho_+$ and $\rho_-$. 
	We could invoke Elkan's result  \cite{elkan2001foundations} in the re-sampling scheme due to the uniform nature of noise. However, the estimation of $\tilde{\eta}$ induces an iota of approximation bias in the scheme.
	A theoretical proof for correctness of re-sampling scheme  could provide directions in dealing with CCN models,   including perhaps a bound on the error in SLN models; see Remark \ref{REM-Resampling}.   Even though the constant term appears in the upper bound of Theorem \ref{thm: perf_bound_f_from_usq}, it would be interesting to understand whether it is due to the affect of noise on the systemic component of risk or something else. Also, using synthetic data, one can comment on the tightness of the bound. A more fundamental question would be ``Is lack of guarantee on consistency a price that one has to always pay while learning to get away without requiring the knowledge of noise rate?" 
	\vspace{-0.3cm}
	\bibliographystyle{plain}
	\bibliography{PAKDD19}
	\newpage
	\appendix
	\hrule 
	\begin{center}
		{\Large \textbf{ Supplementary Material}} \\
		{\large Cost Sensitive Learning in the Presence of Symmetric Label Noise}
	\end{center}
	\hrule 
	\section{Proofs}
	\subsection{Proof of Theorem \ref{the: un-sq robustness}} \label{app: un-sq robustness}
	\begin{proof}
		Let the linear classifier be of the form $\mathbf{x}^T\mathbf{w}_{\alpha} + b = \tilde{\mathbf{w}}_{\alpha}^T\tilde{\mathbf{x}}$ where $\tilde{\mathbf{w}}_{\alpha} = [\mathbf{w}_{\alpha} ~b]^T$ and $\tilde{\mathbf{x}}= [\mathbf{x} ~1]^T$ trained on clean data $D$ and $\tilde{\mathbf{w}}_{\rho,\alpha}$ trained on $\tilde{D}$ with noise rate $\rho.$ %
		For $\lambda >0$, the weighted clean regularized risk is
		{\scriptsize
			\begin{eqnarray} \label{eq: weighted uneven squared risk clean} \nonumber
			R^r_{D,l_{\alpha,usq}}(\tilde{\mathbf{w}}_{\alpha}) &=& E_{D}[(1-\alpha)\mathbf{1}_{\{y=1\}}(f(\mathbf{x})-y)^2 + \alpha\mathbf{1}_{\{y=-1\}}\frac{1}{\gamma}( \gamma f(\mathbf{x}) - y )^2] + \lambda\Vert f \Vert_2^2 \\
			&=& E_{D}[(1-\alpha)\mathbf{1}_{\{y=1\}}(\tilde{\mathbf{x}}^T\tilde{\mathbf{w}}_{\alpha}-y)^2 + \alpha\mathbf{1}_{\{y=-1\}}\frac{1}{\gamma}(\gamma\tilde{\mathbf{x}}^T\tilde{\mathbf{w}}_{\alpha}-y )^2] + \lambda\Vert \tilde{\mathbf{w}}_{\alpha} \Vert_2^2
			\end{eqnarray}}
		Differentiating \eqref{eq: weighted uneven squared risk clean} w.r.t. $\tilde{\mathbf{w}}_{\alpha}$ and equating to $0$, we get
		\begin{small}
			\begin{eqnarray*} \nonumber
				E_{D}[2\tilde{\mathbf{x}}(1-\alpha)\mathbf{1}_{\{y=1\}}(\tilde{\mathbf{x}}^T\tilde{\mathbf{w}}_{\alpha}-y) + 2\tilde{\mathbf{x}}\alpha\mathbf{1}_{\{y=-1\}}(\gamma\tilde{\mathbf{x}}^T\tilde{\mathbf{w}}_{\alpha}- y )] + 2\lambda \tilde{\mathbf{w}}_{\alpha}= 0 \\ \nonumber
				E_{D}[(1-\alpha)\mathbf{1}_{\{y=1\}}(\tilde{\mathbf{x}}\tilde{\mathbf{x}}^T\tilde{\mathbf{w}}_{\alpha}-\tilde{\mathbf{x}}y) + \alpha\mathbf{1}_{\{y=-1\}}(\gamma\tilde{\mathbf{x}}\tilde{\mathbf{x}}^T\tilde{\mathbf{w}}_{\alpha}- \tilde{\mathbf{x}}y )] + \lambda \tilde{\mathbf{w}}_{\alpha} = 0 
			\end{eqnarray*}
		\end{small}
		{\small
			\begin{eqnarray} \nonumber
			\tilde{\mathbf{w}}^*_{\alpha} &= [E_D[(1-\alpha)\mathbf{1}_{\{y=1\}}\tilde{\mathbf{x}}\tilde{\mathbf{x}}^T + \gamma\alpha\mathbf{1}_{\{y=-1\}}\tilde{\mathbf{x}}\tilde{\mathbf{x}}^T]  \\ \label{w_cost_clean}
			&+ \lambda \mathbf{I}]^{-1}E_D[(1-\alpha)\mathbf{1}_{\{y=1\}}\tilde{\mathbf{x}}y + \alpha\mathbf{1}_{\{y=-1\}}\tilde{\mathbf{x}}y]
			\end{eqnarray}}
		
		where $\mathbf{I}$ is ${n+1\times n+1}$ dimensional identity matrix. 
		Now, consider the weighted corrupted regularized risk as:
		{ \scriptsize
			\begin{eqnarray} \label{eq: weighted uneven squared risk noisy} \nonumber
			R^r_{\tilde{D},l_{\alpha,usq}}(\tilde{\mathbf{w}}_{\rho,\alpha}) &=& E_{\tilde{D}}[(1-\alpha)\mathbf{1}_{\{\tilde{y}=1\}}(\tilde{\mathbf{x}}^T\tilde{\mathbf{w}}_{\rho,\alpha}-\tilde{y})^2 + \alpha\mathbf{1}_{\{\tilde{y}=-1\}}\frac{1}{\gamma}( \gamma \tilde{\mathbf{x}}^T\tilde{\mathbf{w}}_{\rho,\alpha} - \tilde{y} )^2] + \lambda\Vert \tilde{\mathbf{w}}_{\rho,\alpha} \Vert_2^2 \\ \nonumber
			&=& (1-\rho)E_{D}[(1-\alpha)\mathbf{1}_{\{y=1\}}(\tilde{\mathbf{x}}^T\tilde{\mathbf{w}}_{\rho, \alpha}-y)^2 + \alpha\mathbf{1}_{\{y=-1\}}\frac{1}{\gamma}(\gamma\tilde{\mathbf{x}}^T\tilde{\mathbf{w}}_{\rho, \alpha}-y )^2]   \\
			& & + \rho E_{D}[(1-\alpha)\mathbf{1}_{\{y=1\}}(\tilde{\mathbf{x}}^T\tilde{\mathbf{w}}_{\rho, \alpha}+y)^2 + \alpha\mathbf{1}_{\{y=-1\}}\frac{1}{\gamma}(\gamma\tilde{\mathbf{x}}^T\tilde{\mathbf{w}}_{\rho, \alpha}+y )^2] + \lambda\Vert \tilde{\mathbf{w}}_{\rho,\alpha} \Vert_2^2
			\end{eqnarray}}
		Differentiating \eqref{eq: weighted uneven squared risk noisy} w.r.t. $\tilde{\mathbf{w}}_{\rho,\alpha}$, we get
		{ \scriptsize
			\begin{align*}
			& (1-\rho)E_{D}[2\tilde{\mathbf{x}}(1-\alpha)\mathbf{1}_{\{y=1\}}(\tilde{\mathbf{x}}^T\tilde{\mathbf{w}}_{\rho, \alpha}-y) + 2\tilde{\mathbf{x}}\alpha\mathbf{1}_{\{y=-1\}}(\gamma\tilde{\mathbf{x}}^T\tilde{\mathbf{w}}_{\rho, \alpha}-y )] \\ &+ \rho E_{D}[2\tilde{\mathbf{x}}(1-\alpha)\mathbf{1}_{\{y=1\}}(\tilde{\mathbf{x}}^T\tilde{\mathbf{w}}_{\rho, \alpha}+y) + 2\tilde{\mathbf{x}} \alpha\mathbf{1}_{\{y=-1\}}(\gamma\tilde{\mathbf{x}}^T\tilde{\mathbf{w}}_{\rho, \alpha}+y )] + 2\lambda \Vert \tilde{\mathbf{w}}_{\rho,\alpha} \Vert= 0 \\
			&(1-\rho)E_D[(1-\alpha)\mathbf{1}_{\{y=1\}}\tilde{\mathbf{x}}\tilde{\mathbf{x}}^T\tilde{\mathbf{w}} + \gamma\alpha\mathbf{1}_{\{y=-1\}}\tilde{\mathbf{x}}\tilde{\mathbf{x}}^T\tilde{\mathbf{w}}] + \rho E_D[(1-\alpha)\mathbf{1}_{\{y=1\}}\tilde{\mathbf{x}}\tilde{\mathbf{x}}^T\tilde{\mathbf{w}} + \gamma \alpha\mathbf{1}_{\{y=-1\}}\tilde{\mathbf{x}}\tilde{\mathbf{x}}^T\tilde{\mathbf{w}}] \\
			&+ \lambda \Vert \tilde{\mathbf{w}}_{\rho,\alpha} \Vert 
			= (1-\rho)E_D[(1-\alpha)\mathbf{1}_{\{y=1\}}\tilde{\mathbf{x}}y + \alpha\mathbf{1}_{\{y=-1\}}\tilde{\mathbf{x}}y] -\rho E_{D}[(1-\alpha)\mathbf{1}_{\{y=1\}}\tilde{\mathbf{x}}y + \alpha\mathbf{1}_{\{y=-1\}}\tilde{\mathbf{x}}y]
			\end{align*}}
		Therefore, we get the optimal noisy weighted classifier as
		{\scriptsize
			\begin{eqnarray} \label{w_cost_noisy} \nonumber
			\tilde{\mathbf{w}}^*_{\rho,\alpha} &=& (1-2\rho) [E_D[(1-\alpha)\mathbf{1}_{\{y=1\}}\tilde{\mathbf{x}}\tilde{\mathbf{x}}^T + \gamma\alpha\mathbf{1}_{\{y=-1\}}\tilde{\mathbf{x}}\tilde{\mathbf{x}}^T]+ \lambda \mathbf{I}]^{-1}E_D[(1-\alpha)\mathbf{1}_{\{y=1\}}\tilde{\mathbf{x}}y + \alpha\mathbf{1}_{\{y=-1\}}\tilde{\mathbf{x}}y] \\
			&=& (1-2\rho)\mathbf{w}^*_{\alpha}
			\end{eqnarray}}
		Taking the transpose in above equation and post multiplying by $\tilde{\mathbf{x}}$, we get $\tilde{\mathbf{w}}^{*T}_{\rho,\alpha}\tilde{\mathbf{x}} = (1-2\rho)\mathbf{w}^{*T}_{\alpha}\tilde{\mathbf{x}} $ and hence,
		\begin{eqnarray} \label{eq: f_sq_cost_clean_noisy}
		\tilde{f}l^*_{r,l_{\alpha,usq}} = (1-2\rho)fl^*_{r,l_{\alpha,usq}},
		\end{eqnarray}
		where $fl^*_{r, l_{\alpha,usq}}$ and $\tilde{f}l^*_{r, l_{\alpha,usq}}$ are the optimal liner classifiers learnt using $l_{\alpha,usq}$ on $D$ and $\tilde{D}$ respectively.
		Since the noise rate $\rho < 0.5$, $sign(\tilde{f}l^*_{r,l_{\alpha,usq}})$ = $sign(fl^*_{r,l_{\alpha,usq}})$. This implies that \eqref{eq: alpha_noise_robustness} holds for $(l_{\alpha,usq},\mathcal{H}_{lin}).$
	\end{proof} 
	
	\subsection{Proof of Proposition \ref{prop: l_sq_closed_form}} \label{app: l_sq_closed_form}
	\begin{proof}
		Consider the empirical noisy regularized risk $\hat{R}^r_{\tilde{D}, l_{\alpha,usq}}(f)$ as follows:
		{ \scriptsize
			\begin{eqnarray} \nonumber
			\hat{R}^r_{\tilde{D}, l_{\alpha,usq}}(f) &=& \frac{1}{m}\sum\limits_{i=1}^{m}[\mathbf{1}_{[\tilde{y}_i =1]}(\mathbf{w}^T\mathbf{x}_i +b -y_i)^2(1-\alpha) + \frac{\alpha}{\gamma}\mathbf{1}_{[\tilde{y}_i = -1]}(\gamma \mathbf{w}^T\mathbf{x}_i + \gamma b -y_i)^2] + \lambda \Vert (\mathbf{w},b) \Vert_{2}^2 \\ \label{eq: emp-risk-sq-uneven}
			&=& \frac{1}{m}\sum\limits_{i=1}^{m}[\mathbf{1}_{[\tilde{y}_i =1]}(\mathbf{w}^T\mathbf{x}_i +b -1)^2(1-\alpha) + \frac{\alpha}{\gamma}\mathbf{1}_{[\tilde{y}_i = -1]}(\gamma \mathbf{w}^T\mathbf{x}_i + \gamma b +1)^2] + \lambda \Vert (\mathbf{w},b) \Vert_{2}^2
			\end{eqnarray}}
		Differentiating equation \eqref{eq: emp-risk-sq-uneven} w.r.t. $w_{j}$ and equating to $0$, we get
		{ \scriptsize
			\begin{equation} \label{eq: der-w-sq}
			\frac{1}{m}\sum\limits_{i=1}^{m}x_{ij}\left[ 
			\left(\sum\limits_{k=1}^n w_k x_{ik} + b\right)(\mathbf{1}_{[\tilde{y}_i=1]}(1-\alpha) +\gamma\alpha\mathbf{1}_{[\tilde{y}_i=-1]}) - (\mathbf{1}_{[\tilde{y}_i=1]}(1-\alpha) - \alpha\mathbf{1}_{[\tilde{y}_i=-1]})\right] + \lambda w_j= 0
			\end{equation}}
		Differentiating equation \eqref{eq: emp-risk-sq-uneven} w.r.t. $b$ and equating to $0$, we get
		{ \scriptsize
			\begin{equation} \label{eq: der-b-sq}
			\frac{1}{m}\sum\limits_{i=1}^{m}\left[ 
			\left(\sum\limits_{k=1}^n w_k x_{ik} + b\right)(\mathbf{1}_{[\tilde{y}_i=1]}(1-\alpha) +\gamma\alpha\mathbf{1}_{[\tilde{y}_i=-1]}) - (\mathbf{1}_{[\tilde{y}_i=1]}(1-\alpha) - \alpha\mathbf{1}_{[\tilde{y}_i=-1]})\right] + \lambda b= 0
			\end{equation}}
		
		Let $a_i = (\mathbf{1}_{[\tilde{y}_i=1]}(1-\alpha) +\gamma\alpha\mathbf{1}_{[\tilde{y}_i=-1]})$ and $c_i = (\mathbf{1}_{[\tilde{y}_i=1]}(1-\alpha) - \alpha\mathbf{1}_{[\tilde{y}_i=-1]})$. Writing equation \eqref{eq: der-w-sq} and \eqref{eq: der-b-sq} in matrix form, we get the following system of linear equations:
		{ \footnotesize
			\begin{eqnarray} 
			{\underbrace{\left[ {\begin{array}{ccccc}
						\sum\limits_{i=1}^m x_{i1}^2a_i + \lambda & \sum\limits_{i=1}^m x_{i1}x_{i2}a_i & \ldots & \sum\limits_{i=1}^m x_{i1}x_{in}a_i & \sum\limits_{i=1}^m x_{i1}a_i \\
						\sum\limits_{i=1}^m x_{i2}x_{i1}a_i & \sum\limits_{i=1}^m x_{i2}^2a_i + \lambda & \ldots & \sum\limits_{i=1}^m x_{i2}x_{in}a_i & \sum\limits_{i=1}^m x_{i2}a_i\\
						\vdots & \vdots & \ddots & \vdots & \vdots\\
						\sum\limits_{i=1}^m x_{in}x_{i1}a_i & \sum\limits_{i=1}^m x_{in}x_{i2}a_i & \ldots & \sum\limits_{i=1}^m x_{in}^2a_i + \lambda & \sum\limits_{i=1}^m x_{in}a_i\\
						\sum\limits_{i=1}^m x_{i1}a_i & \sum\limits_{i=1}^m x_{i2}a_i & \ldots & \sum\limits_{i=1}^m x_{in}a_i & \sum\limits_{i=1}^m a_i + \lambda \\
						\end{array} } \right]}_{A + \lambda \mathbf{I}} } 
			\underbrace{\begin{bmatrix}
				w_1 \\
				w_2 \\
				\vdots \\
				w_n \\
				b
				\end{bmatrix}}_{\bar{\mathbf{w}}} 
			= \underbrace{\begin{bmatrix}
				\sum\limits_{i=1}^{m}x_{i1}c_i \\
				\sum\limits_{i=1}^{m}x_{i2}c_i  \\
				\vdots \\
				\sum\limits_{i=1}^{m}x_{in}c_i  \\
				\sum\limits_{i=1}^{m}c_i 
				\end{bmatrix}}_{\mathbf{c}} 
			\end{eqnarray}}
		where $A$ is a $(n+1 \times n+1)$ dimensional known symmetric matrix, $\bar{\mathbf{w}}$ is $n+1$ dimensional vector of variables and $\mathbf{c}$ is $n+1$ dimensional known vector. The above linear system of equations can be written as $ (A+ \lambda \mathbf{I})\bar{\mathbf{w}} = \mathbf{c}$ and the cost-sensitive and $(\alpha,\gamma,\rho)$-robust classifier $\hat{f}_r$ is as follows:
		\begin{equation}\label{eq: f-emp-uneven-sq}
		\hat{f}_r = \bar{\mathbf{w}}^* = (A + \lambda \mathbf{I})^{-1}\mathbf{c}
		\end{equation}
	\end{proof}
	
	\subsection{Proof of Lemma \ref{lem: max_dev_Rad}} \label{app: max_dev_Rad}
	\begin{proof}
		Consider the difference between corrupted true risk and corrupted empirical risk, i.e.,
		$R_{\tilde{D},l_{\alpha,usq}}(f) - \hat{R}_{\tilde{D},l_{\alpha,usq}}(f)$ 
		Now, using the Rademacher bound (\cite{Mohri}) on the maximal deviation between risks and empirical risks over $f \in \mathcal{H}_{lin}$, we get with probability at least $1-\delta$
		\begin{equation}
		\max\limits_{f\in \mathcal{H}_{lin}}|\hat{R}_{\tilde{D}, l_{\alpha,usq}}(f) - R_{\tilde{D},l_{\alpha,usq}}(f)| \leq 2\mathfrak{R}(l_{\alpha,usq} \circ \mathcal{H}_{lin}) + \sqrt{\frac{log(1/\delta)}{2m}},
		\end{equation}
		where $\mathfrak{R}(l_{\alpha,usq} \circ \mathcal{H}_{lin}) := E_{\tilde{D},\sigma}\left[\sup\limits_{f \in \mathcal{H}_{lin}}\frac{1}{m}\sum\limits_{i=1}^{m}\sigma_i l_{\alpha,usq}(f(\mathbf{x}_i,\tilde{y}_i))\right].$
		Now, using the Lipschitz composition property of the Rademacher averages, we have
		$ \mathfrak{R}(l_{\alpha,usq}\circ \mathcal{H}_{lin})\leq L \mathfrak{R}(\mathcal{H}_{lin}).$ Hence, the inequality in lemma statement follows.
	\end{proof}	
	
	\subsection{Proof of Lemma \ref{lem: R_noisy_clean_l_sq}} \label{app: R_noisy_clean_l_sq}
	\begin{proof}
		Consider the corrupted $l_{\alpha,usq}$ risk of a classifier $f \in \mathcal{H}$ as given in equation \eqref{eq: weighted uneven squared risk noisy}.
		{ \footnotesize
			\begin{eqnarray*}
				R_{\tilde{D},l_{\alpha,usq}}(f) &=& (1-\rho)E_{D}[(1-\alpha)\mathbf{1}_{\{y=1\}}(f(\mathbf{x})-y)^2 + \alpha\mathbf{1}_{\{y=-1\}}\frac{1}{\gamma}(\gamma f(\mathbf{x})-y )^2]  \\ 
				& & + ~ \rho E_{D}[(1-\alpha)\mathbf{1}_{\{y=1\}}(f(\mathbf{x})+y)^2 + \alpha\mathbf{1}_{\{y=-1\}}\frac{1}{\gamma}(\gamma f(\mathbf{x})+y )^2]  \\
				&=& E_{D}[(1-\alpha)\mathbf{1}_{\{y=1\}}(f(\mathbf{x})-y)^2 + \alpha\mathbf{1}_{\{y=-1\}}\frac{1}{\gamma}(\gamma f(\mathbf{x})-y )^2] \\
				& +& \rho E_D[(1-\alpha)\mathbf{1}_{\{y=1\}}(f(\mathbf{x})+y)^2 + \alpha\mathbf{1}_{\{y=-1\}}\frac{1}{\gamma}(\gamma f(\mathbf{x})+y )^2  \\
				&- & (1-\alpha)\mathbf{1}_{\{y=1\}}(f(\mathbf{x})-y)^2 - \alpha\mathbf{1}_{\{y=-1\}}\frac{1}{\gamma}(\gamma f(\mathbf{x})-y )^2] \\
				&=& R_{D,l_{\alpha,usq}}(f) + 4\rho E_{D}[yf(\mathbf{x})[(1-\alpha)\mathbf{1}_{\{y=1\}} + \alpha\mathbf{1}_{\{y=-1\}}] \\
		\end{eqnarray*}}
	\end{proof}
	
	\subsection{Proof of Theorem \ref{thm: perf_bound_f_from_usq}} \label{app: perf_bound_f_from_usq}
	\begin{proof}
		Let $fl^*_{l_{\alpha,usq}} \in \mathcal{H}_{lin}$ be the minimizer of $R_{D,l_{\alpha,usq}}$. Then, the excess $D$ risk of $\hat{f}_r$ is 
		\begin{eqnarray*}
			& & R_{D,l_{\alpha,usq}}(\hat{f}_r) - R_{D,l_{\alpha,usq}}(fl^*_{r,l_{\alpha,usq}}) \\
			&=&  R_{D,l_{\alpha,usq}}(\hat{f}_r) - \hat{R}^r_{\tilde{D},l_{\alpha,usq}}(\hat{f}_r) + \hat{R}^r_{\tilde{D},l_{\alpha,usq}}(\hat{f}_r) - \hat{R}^r_{\tilde{D},l_{\alpha,usq}}(fl^*_{\alpha,usq}) \\
			& & +  \hat{R}^r_{\tilde{D},l_{\alpha,usq}}(fl^*_{\alpha,usq}) - R_{D,l_{\alpha,usq}}(fl^*_{r,l_{\alpha,usq}}) \\
			& \leq&  0 + R_{D,l_{\alpha,usq}}(\hat{f}_r) - \hat{R}^r_{\tilde{D},l_{\alpha,usq}}(\hat{f}_r) + \hat{R}^r_{\tilde{D},l_{\alpha,usq}}(fl^*_{\alpha,usq}) - R_{D,l_{\alpha,usq}}(fl^*_{r,l_{\alpha,usq}})
		\end{eqnarray*}
		The last inequality holds because $\hat{f}_r$ is the minimizer of $\hat{R}^r_{\tilde{D},l_{\alpha,usq}}(f)$.  
		Using Lemma \ref{lem: R_noisy_clean_l_sq}, we get
		{ \scriptsize
			\begin{eqnarray*}
				& & R_{D,l_{\alpha,usq}}(\hat{f}_r) - R_{D,l_{\alpha,usq}}(fl^*_{r,l_{\alpha,usq}}) \\
				&\leq& R_{\tilde{D},l_{\alpha,usq}}(\hat{f}_r) - 4\rho E_{D}[\hat{f}_r(\mathbf{x})[(1-\alpha)\mathbf{1}_{[y=1]} - \alpha\mathbf{1}_{[y=-1]}]] - \hat{R}^r_{\tilde{D},l_{\alpha,usq}}(\hat{f}_r) \\
				&-& R_{\tilde{D},l_{\alpha,usq}}(fl^*_{l_{\alpha,usq}}) + 4\rho E_{D}[fl^*_{l_{\alpha,usq}}(\mathbf{x})[(1-\alpha)\mathbf{1}_{[y=1]} - \alpha\mathbf{1}_{[y=-1]}]] + \hat{R}^r_{\tilde{D},l_{\alpha,usq}}(fl^*_{l_{\alpha,usq}})  \\
				&\leq & 2\max\limits_{f\in \mathcal{H}_{lin}}|\hat{R}^r_{\tilde{D},l_{\alpha,usq}}(f) - R_{\tilde{D},l_{\alpha,usq}}(f)| + 4\rho E_{D}[((1-\alpha)\mathbf{1}_{[y=1]} + \alpha\mathbf{1}_{[y=-1]})y(fl^*_{l_{\alpha,usq}} - \hat{f}_r)]  \\
				&= & 2\max\limits_{f\in \mathcal{H}_{lin}}|\hat{R}_{\tilde{D},l_{\alpha,usq}}(f) - R_{\tilde{D},l_{\alpha,usq}}(f) + \lambda \Vert f\Vert_2^2| + 4\rho E_{D}[((1-\alpha)\mathbf{1}_{[y=1]} + \alpha\mathbf{1}_{[y=-1]})y(fl^*_{l_{\alpha,usq}} - \hat{f}_r)]  \\
				&\leq & 2\max\limits_{f\in \mathcal{H}_{lin}}|\hat{R}_{\tilde{D},l_{\alpha,usq}}(f) - R_{\tilde{D},l_{\alpha,usq}}(f)| + 2\lambda \max\limits_{f\in \mathcal{H}_{lin}} \Vert f\Vert_2^2  \\
				& & + 4\rho E_{D}[((1-\alpha)\mathbf{1}_{[y=1]} + \alpha\mathbf{1}_{[y=-1]})y(fl^*_{l_{\alpha,usq}} - \hat{f}_r)]  \\
				&\leq& 4L\mathfrak{R}(\mathcal{H}_{lin}) + 2\sqrt{\frac{log(1/\delta)}{2m}} + 2\lambda W^2 + 4\rho E_{D}\left[((1-\alpha)\mathbf{1}_{[y=1]} + \alpha\mathbf{1}_{[y=-1]})y \left(\frac{\tilde{f}l^*_{l_{\alpha,usq}}}{1-2\rho} - \hat{f}_r \right)\right]
		\end{eqnarray*}}
		Last second inequality follows by using Triangle's inequality for absolute deviations.	The last inequality follows from Lemma \ref{lem: max_dev_Rad} and equation \eqref{eq: f_sq_cost_clean_noisy}. Next, we simplify the term including expectation
		{ \footnotesize
			\begin{eqnarray*}
				& &	E_{D}\left[((1-\alpha)\mathbf{1}_{[y=1]} + \alpha\mathbf{1}_{[y=-1]})y \left(\frac{\tilde{f}l^*_{l_{\alpha,usq}}}{1-2\rho} - \hat{f}_r\right)\right]  \\
				&=& E_{\mathbf{X}}\left[\left(\frac{\tilde{f}l^*_{l_{\alpha,usq}}(\mathbf{x})}{1-2\rho} - \hat{f}_r(\mathbf{x})\right)E_{Y|{\mathbf{X}}}\left[y((1-\alpha)\mathbf{1}_{[y=1]} + \alpha\mathbf{1}_{[y=-1]})\right]\right] \\
				&=&E_{{\mathbf{X}}}\left[\left(\frac{\tilde{f}l^*_{l_{\alpha,usq}}(\mathbf{x})}{1-2\rho} - \hat{f}_r(\mathbf{x})\right)\left[ (1-\alpha)\eta(\mathbf{x}) -\alpha(1-\eta(\mathbf{x}))\right]\right] \\
				&=&E_{\mathbf{X}}\left[ \left(\frac{\tilde{f}l^*_{l_{\alpha,usq}}(\mathbf{x})}{1-2\rho} - \hat{f}_r(\mathbf{x})\right)(\eta(\mathbf{x}) - \alpha)\right] 
		\end{eqnarray*}}
		Substituting it back in the upper bound we get,
		\begin{eqnarray*}
			R_{D,l_{\alpha,usq}}(\hat{f}_r) &\leq& \min\limits_{f \in \mathcal{H}_{lin}}R_{D,l_{\alpha,usq}}(f) + 4L\mathfrak{R}(\mathcal{H}_{lin}) + 2\sqrt{\frac{log(1/\delta)}{2m}}  + 2\lambda W^2  \\ 
			& & + \frac{4\rho}{(1-2\rho)}E_{{\mathbf{X}}}[(\tilde{f}l^*_{l_{\alpha,usq}}(\mathbf{x}) -\hat{f}_r(\mathbf{x}))(\eta(\mathbf{x}) - \alpha)]  \\ 
			& & + \frac{8\rho^2}{(1-2\rho)}E_{{\mathbf{X}}}[\hat{f}_r(\mathbf{x})(\eta(\mathbf{x}) - \alpha)] ).
		\end{eqnarray*}
		Subtracting $\min\limits_{f\in \mathcal{H}}R_{D,l_{\alpha,usq}}(f)$ from both sides of above inequality and invoking equation \eqref{eq: alpha_CC_cond} we get the second result in the theorem statement.
	\end{proof}
	
	\subsection{Proof of Lemma \ref{lem: SLN_pi_mono}} \label{app: SLN_pi_mono}
	
	\begin{proof}
		Consider the case when $\pi < 0.5$ and $\rho < 0.5$ as follows:
		\begin{eqnarray*}
			\pi &<& 0.5  \Rightarrow (1-2\rho)\pi + \rho < (1-2\rho)0.5 + \rho \quad \text{ since } \rho <0.5 \\
			\pi &<& 0.5 \Rightarrow  \tilde{\pi}  < 0.5
		\end{eqnarray*}
		The case of $\pi > 0.5 \Rightarrow \tilde{\pi} > 0.5$ can be proved in similar lines as above by reversing the inequality.
		
		Since, $\tilde{\eta}(\mathbf{x}) = (1-2\rho)\eta(\mathbf{x})+\rho$ has the same coefficients as those relating $\pi$ and $\tilde{\pi}$, monotonicity w.r.t. $0.5$ holds for $\eta(\mathbf{x})$ too.
	\end{proof}
	
	\section{Details about various counter-examples}
	\subsection{Details of the counter-example \ref{exam: unifDis_noise}} \label{exam: unifDis_noise_details}
	
	Let $Y$ has a Bernoulli distribution with parameter $p=0.2$. Let $\mathbf{X} \subset \mathbb{R}$  be such that such that  $\mathbf{X}|Y=1 \sim Uniform(0,p)$ and $\mathbf{X}|Y=-1 \sim Uniform(1-p,1)$. Then, the in-class probability $\eta(\mathbf{x})$ is given as follows:
	{ \footnotesize
		\begin{eqnarray*}
			\eta(\mathbf{x}) = P(Y=1|\mathbf{X}=\mathbf{x}) &=& \frac{P(Y=1)P(\mathbf{X}=\mathbf{x}|Y=1)}{P(\mathbf{X}=\mathbf{x})} \\
			&=& \frac{P(Y=1)P(\mathbf{X}=\mathbf{x}|Y=1)}{P(Y=1)P(\mathbf{X}=\mathbf{x}|Y=1) + P(Y=-1)P(\mathbf{X}=\mathbf{x}|Y=-1)} \\
			&=& \frac{p\frac{1}{p}}{p\frac{1}{p} + (1-p)\frac{1}{p}} = p =0.2
	\end{eqnarray*}}
	Suppose uniform noise rate be $\rho = 0.3$. Then, $\tilde{\eta}(\mathbf{x}) = (1-2\rho)\eta(\mathbf{x})+\rho = 0.38$. Now, if we are given that positive class is more important with misclassification costs with $\alpha = 0.25$, then the corresponding $l_{0-1,\alpha}$ based optimal classifier on $D$ and $\tilde{D}$ are as follows:
	$$ f^*_{\alpha}(\mathbf{x}) = -1 \text{ and } \tilde{f}^*_{\alpha}(\mathbf{x}) = 1  ~~~ \forall \mathbf{x} \in \mathbf{X}.$$
	Consider the $\alpha$ weighted $0$-$1$ risk of the above classifiers  as below :
	{ \scriptsize
		\begin{eqnarray*}
			R_{D,\alpha}(f^*_{\alpha}) &=& E_{D}[l_{0-1,\alpha}(f^*_{\alpha}(\mathbf{x}),y)] \\
			&=& \sum\limits_{y}\int\limits_{\mathbf{x}}(\mathbf{1}_{\{f^*_{\alpha}(\mathbf{x}) > 0, y=-1\}}\alpha  + \mathbf{1}_{\{f^*_{\alpha}(\mathbf{x}) \leq 0, y=1\}}(1-\alpha)) P(\mathbf{X}=\mathbf{x}|Y=y)P(Y=y)d\mathbf{x} \\
			&=& \int\limits_{1-p}^{1} (\mathbf{1}_{\{f^*_{\alpha}(\mathbf{x}) > 0\}}\alpha) P(\mathbf{X}=\mathbf{x}|Y=-1)(1-p)d\mathbf{x}  +\int\limits_{0}^{p} (\mathbf{1}_{\{f^*_{\alpha}(\mathbf{x}) \leq 0\}}(1-\alpha)) P(\mathbf{X}=\mathbf{x}|Y=1)p d\mathbf{x}  \\
			&=& (1-\alpha)p  ~~~~~~~~~~~~~\text{ since } f^*_{\alpha}(\mathbf{x}) \leq 0 ~~ \forall \mathbf{x} \\
		\end{eqnarray*}
		\begin{eqnarray*}
			R_{D,\alpha}( \tilde{f}^*_{\alpha}) &=& E_{D}[l_{0-1,\alpha}(\tilde{f}^*_{\alpha}(\mathbf{x}),y)] \\
			&=& \sum\limits_{y}\int\limits_{\mathbf{x}}(\mathbf{1}_{\{\tilde{f}^*_{\alpha}(\mathbf{x}) > 0, y=-1\}}\alpha  + \mathbf{1}_{\{\tilde{f}^*_{\alpha}(\mathbf{x}) \leq 0, y=1\}}(1-\alpha)) P(\mathbf{X}=\mathbf{x}|Y=y)P(Y=y)d\mathbf{x} \\
			&=& \int\limits_{1-p}^{1} (\mathbf{1}_{\{\tilde{f}^*_{\alpha}(\mathbf{x}) > 0\}}\alpha) P(\mathbf{X}=\mathbf{x}|Y=-1)(1-p)d\mathbf{x} + \int\limits_{0}^{p} (\mathbf{1}_{\{\tilde{f}^*_{\alpha}(\mathbf{x}) \leq 0\}}(1-\alpha)) P(\mathbf{X}=\mathbf{x}|Y=1)pd\mathbf{x}  \\
			&=& \alpha (1-p)  ~~~~~~~~~~~~~\text{ since } \tilde{f}^*_{\alpha}(\mathbf{x}) > 0 ~~ \forall \mathbf{x}
	\end{eqnarray*}}
	Therefore, $R_{D,\alpha}( f^*_{\alpha}) \neq R_{D,\alpha}( \tilde{f}^*_{\alpha})$ implying that the $\alpha$ weighted $0$-$1$ loss function $l_{0-1,\alpha}$ is not uniform noise robust. Note that in this example $D$ is linearly separable because of $p < 0.5$. The counter-example suggests that even in the easiest case when the clean distribution is linearly separable, $l_{0-1,\alpha}$  is not uniform noise tolerant. If $p>0.5$, $D$ will be linearly inseparable and by changing $\alpha$ value another counter-example can be generated.
	
	\subsection{Details of counter-example \ref{exam: unifDis_noise_lin}} \label{exam: unifDis_noise_lin_details}
	Consider the training set $\{(3,-1),(8,-1),(12,1)\}$. Let the probability distribution on the feature space be uniformly concentrated on the training dataset. Let the linear classifier be of the form $fl = sign(\mathbf{x}+b)$. Let $\alpha = 0.3$ and the uniform noise be $\rho = 0.42$. Then,
	{ \footnotesize
		\begin{eqnarray*}
			fl^*_{\alpha} &=& \arg\min\limits_{fl} E_{D}[l_{0-1,\alpha}(y,fl)] \\
			&=& \arg\min\limits_{b} \frac{1}{3}[\alpha \mathbf{1}_{\{3+b > 0,y=-1\}} + \alpha \mathbf{1}_{\{8+b > 0,y=-1\}} + (1-\alpha) \mathbf{1}_{\{12+b \leq 0,y=1\}}] 
	\end{eqnarray*}}
	Therefore, $fl^*_{\alpha} = b^* \in (-8,-12) ~~\text{ with }~~ R_{\alpha,D}(fl^*_{\alpha}) = 0.$ Also,
	\begin{eqnarray*}
		\tilde{fl}^*_{\alpha} &=& \arg\min_{\tilde{fl}}E_{\tilde{D}}[l_{0-1,\alpha}(\tilde{y},\tilde{fl})] \\
		&=& \arg\min_{\tilde{fl}} (1-\rho)E_{D}[l_{0-1,\alpha}(y,\tilde{fl})] + \rho E_{D}[l_{0-1,\alpha}(-y,\tilde{fl})] \\
		&=& \arg\min_{\tilde{b}} \frac{1-\rho}{3}[\alpha \mathbf{1}_{\{3+\tilde{b} > 0,y=-1\}} + \alpha \mathbf{1}_{\{8+\tilde{b} > 0,y=-1\}} + (1-\alpha) \mathbf{1}_{\{12+\tilde{b}\leq 0,y=1\}}] \\
		& & + \frac{\rho}{3}[(1-\alpha) \mathbf{1}_{\{3+\tilde{b}\leq 0,y=1\}} + (1-\alpha) \mathbf{1}_{\{8+\tilde{b}\leq 0,y=1\}} + \alpha \mathbf{1}_{\{12+\tilde{b} > 0,y=-1\}}] \\
		&=& \arg\min_{\tilde{b}} \frac{\alpha}{3} \left[(1-\rho)( \mathbf{1}_{\{3+\tilde{b} >  0,y=-1\}} + \mathbf{1}_{\{8+\tilde{b} > 0,y=-1\}} ) + \rho \mathbf{1}_{\{12+\tilde{b}>  0,y=-1\}} \right]  \\
		&& + \frac{1-\alpha}{3} \left[\rho( \mathbf{1}_{\{3+\tilde{b}\leq 0,y=1\}} + \mathbf{1}_{\{8+\tilde{b}\leq 0,y=1\}} ) + (1-\rho) \mathbf{1}_{\{12+\tilde{b}\leq 0,y=1\}} \right] \\
		&=& \arg\min_{\tilde{b}} \begin{cases}
			0.474 ~~~ \text{  when }~~ \tilde{b} \in (-3,\infty) \\
			0.994 ~~~ \text{  when }~~ \tilde{b} \in (-\infty,-12)
		\end{cases}
	\end{eqnarray*}
	Therefore, $\tilde{fl}^*_{\alpha} = \tilde{b}^* \in (-3,\infty)$ with  $R_{\alpha,D}(\tilde{fl}^*_{\alpha}) = 0.2.$ Hence, $R_{\alpha,D}(fl^*_{\alpha}) \neq  R_{\alpha,D}(\tilde{fl}^*_{\alpha})$ implying that weighted $0$-$1$ loss is not uniform noise robust even with a classifier in $\mathcal{H}_{lin}.$
	\subsection{Details of counter-example \ref{exam: unifDis_noise_squ}} \label{exam: unifDis_noise_squ_details}
	Consider the minimizer of $R_{D,l_{\alpha,usq}}(f)$, the clean $l_{\alpha,usq}$-risk of $f$ given below:
	\begin{eqnarray} \nonumber
	f^*_{l_{\alpha,usq}} &=&
	\arg\min\limits_{f \in \mathcal{H}} R_{D,l_{\alpha,usq}}(f)  \\ \nonumber
	&=& \arg\min\limits_{f \in \mathcal{H}} E_{D}[(1-\alpha)\mathbf{1}_{\{y=1\}}(1-f(\mathbf{x}))^2 + \alpha\mathbf{1}_{\{y=-1\}}\frac{1}{\gamma}(1+\gamma f(\mathbf{x}))^2] \\ \nonumber
	& = & \arg\min\limits_{f \in \mathcal{H}} E_{\mathbf{X}}[E_{Y|\mathbf{X}}[(1-\alpha)\eta(\mathbf{x})(f(\mathbf{x})-y)^2 + \frac{\alpha}{\gamma}(1-\eta(\mathbf{x}))(1+ \gamma f(\mathbf{x}))]]  \\ \label{eq: f-sq-uneven-bayes_clean}
	&=& \frac{\eta(\mathbf{x}) - \alpha}{\eta(\mathbf{x})(1-\alpha) + \gamma\alpha(1-\eta(\mathbf{x}))}
	\end{eqnarray}
	Given $\mathbf{x}$, the optimal decision would be $sign(f^*_{l_{\alpha,usq}}) = sign(\eta(\mathbf{x}) - \alpha)$ since the denominator in the RHS of equation \eqref{eq: f-sq-uneven-bayes_clean} is always positive.
	In  similar lines as above, one can show that the minimizer of $R_{\tilde{D},l_{\alpha,usq}}(f)$ is as follows:
	\begin{equation} \label{eq: f-sq-uneven-bayes_noisy}
	\tilde{f}^*_{l_{\alpha,usq}} = \frac{\tilde{\eta}(\mathbf{x}) - \alpha}{\tilde{\eta}(\mathbf{x})(1-\alpha) + \gamma\alpha(1-\tilde{\eta}(\mathbf{x}))}
	\end{equation} 
	We show using following counter-example that clean weighted risks of $f^*_{l_{\alpha,usq}}$ and $\tilde{f}^*_{l_{\alpha,usq}}$ need not be equal.
	
	Consider the settings as in Example \ref{exam: unifDis_noise}. Let $\alpha = 0.25$ and $\gamma = 0.4$. Then, from equation \eqref{eq: f-sq-uneven-bayes_clean} and \eqref{eq: f-sq-uneven-bayes_noisy}, we get
	$$ f^*_{l_{\alpha,usq}} = -0.21 < 0 ~~~~~~~~~~~ \tilde{f}^*_{l_{\alpha,usq}} = 0.37 >0 ~~ \forall x.$$
	Now, we need to check for the clean $l_{0-1,\alpha}$ risk of above two classifiers. 
	This can be computed in the same way as in Example \ref{exam: unifDis_noise}. Therefore, we get
	$$R_{D,\alpha}(f^*_{l_{\alpha,usq}}) = (1-\alpha)p = 0.15, ~~~~R_{D,\alpha}(\tilde{f}^*_{l_{\alpha,usq}}) = \alpha (1-p) = 0.2$$
	Hence, $R_{D,\alpha}(f^*_{l_{\alpha,usq}}) \neq R_{D,\alpha}(\tilde{f}^*_{l_{\alpha,usq}})$ 
	implying that the classifiers $ f \in \mathcal{H}$ learnt from $l_{\alpha,usq}$ based ERM may not be both cost-sensitive and uniform noise robust.
	\subsection{Another counter-example to show that weighted $0$-$1$ loss, $l_{0-1,\alpha}$ is not cost sensitive uniform noise robust} \label{exam: normDis_noise}
	Following example shows that cost sensitive risk minimization under $l_{0-1,\alpha}$ is not uniform noise tolerant. 
	\begin{example} 
		Let $Y$ has a Bernoulli distribution with parameter $p=0.8$. Let $\mathbf{X} \subset \mathbb{R}$  be such that  $\mathbf{X}|Y=1 \sim N(2,1)$ and $\mathbf{X}|Y=-1 \sim N(-2.5,1)$. Then, using Lemma \ref{lem: eta-normal}, the in-class probability $\eta(\mathbf{x})$ is given as follows:
		\begin{eqnarray} \label{eq: eta_eg2}
		\eta(\mathbf{x})=  \frac{1}{1+ 0.25\exp(-1.125-4.5\mathbf{x})}.
		\end{eqnarray}
		Suppose we are given that the negative class is more important with misclassification cost $\alpha = 0.65$. Also, let the uniform noise rate to corrupt the data be $\rho = 0.3$. Now, using equation \eqref{eq: bayes_0-1_clean_alpha} and \eqref{eq: bayes_0-1_corrup_alpha}, the thresholds corresponding to $\eta(\mathbf{x})$ are $0.65$ and $0.875$. Then, the optimal classifiers are as follows: 
		{ \footnotesize
			\begin{minipage}[!htbp]{0.49\linewidth}
				\begin{eqnarray} \nonumber
				f^*_{\alpha}(\mathbf{x}) &=& \begin{cases}
				+1  ~~~~\text{if} ~~\eta(\mathbf{x}) > 0.65 \\ 
				-1  ~~~~\text{if} ~~\eta(\mathbf{x}) \leq 0.65  
				\end{cases} \\ \label{eq: eg2-f-clean}
				&=& \begin{cases}
				+1  ~~~~\text{if} ~~\mathbf{x} > 0.195629 \\ 
				-1  ~~~~\text{if} ~~\mathbf{x} \leq 0.195629 
				\end{cases}
				\end{eqnarray}
			\end{minipage}
			\begin{minipage}[!htbp]{0.49\linewidth}
				\begin{eqnarray} \nonumber
				\tilde{f}^*_{\alpha}(\mathbf{x}) &=& \begin{cases}
				+1  ~~~~\text{if} ~~\eta(\mathbf{x}) > 0.875 \\ 
				-1  ~~~~\text{if} ~~\eta(\mathbf{x}) \leq 0.875  
				\end{cases} \\ \label{eq: eg2-f-noisy}
				&=& \begin{cases}
				+1  ~~~~\text{if} ~~\mathbf{x} > 0.490489 \\ 
				-1  ~~~~\text{if} ~~\mathbf{x} \leq 0.490489 
				\end{cases}
				\end{eqnarray}
		\end{minipage}}
		
		The last equality is obtained by substituting the value of $\eta(\mathbf{x})$ in \eqref{eq: eta_eg2} and solving for $\mathbf{x}$. Now we compute the weighted $0$-$1$ risks of these classifiers show that they are different. 
		{\footnotesize
			\begin{eqnarray*}
				& & R_{D,\alpha}(f^*_{\alpha}) = E_{D}[l_{0-1,\alpha}(f^*_{\alpha}(\mathbf{x}),y)] \\
				&=& \sum\limits_{y}\int\limits_{\mathbf{x}}(\mathbf{1}_{\{f^*_{\alpha}(\mathbf{x}) > 0, y=-1\}}\alpha  + \mathbf{1}_{\{f^*_{\alpha}(\mathbf{x}) \leq 0, y=1\}}(1-\alpha)) P(\mathbf{X}=\mathbf{x}|Y=y)P(Y=y)d\mathbf{x} \\
				&=& \int\limits_{-\infty}^{\infty} (\mathbf{1}_{\{f^*_{\alpha}(\mathbf{x}) > 0\}}\alpha) P(\mathbf{X}=\mathbf{x}|Y=-1)P(Y=-1)d\mathbf{x}   \\
				& & + \int\limits_{-\infty}^{\infty} (\mathbf{1}_{\{f^*_{\alpha}(\mathbf{x}) \leq 0\}}(1-\alpha)) P(\mathbf{X}=\mathbf{x}|Y=1)P(Y=1)d\mathbf{x}  \\
				&=& (1-p)\alpha\int\limits_{0.1956}^{\infty} \frac{1}{\sqrt{2\pi}}\exp(-\frac{1}{2}(\mathbf{x}+2.5)^2) d\mathbf{x} + p (1-\alpha)\int\limits_{-\infty}^{0.1956} \frac{1}{\sqrt{2\pi}}\exp(-\frac{1}{2}(\mathbf{x}-2)^2) d\mathbf{x} \\
				&=& 0.13(1- \Phi(2.69)) + 0.28\Phi(-1.81)  ~~ \text{where } \Phi(\mathbf{x}) \text{ is the c.d.f of standard normal dist at } \mathbf{x}.  \\
				&=& 0.0103 
			\end{eqnarray*}
			\begin{eqnarray*}
				& & R_{D,\alpha}(\tilde{f}^*_{\alpha}) = E_{D}[l_{0-1,\alpha}(\tilde{f}^*_{\alpha}(\mathbf{x}),y)] \\
				&=& \sum\limits_{y}\int\limits_{\mathbf{x}}(\mathbf{1}_{\{\tilde{f}^*_{\alpha}(\mathbf{x}) > 0, y=-1\}}\alpha  + \mathbf{1}_{\{\tilde{f}^*_{\alpha}(\mathbf{x}) \leq 0, y=1\}}(1-\alpha)) P(\mathbf{X}=\mathbf{x}|Y=y)P(Y=y)d\mathbf{x} \\
				&=& \int\limits_{-\infty}^{\infty} (\mathbf{1}_{\{\tilde{f}^*_{\alpha}(\mathbf{x}) > 0\}}\alpha) P(\mathbf{X}=\mathbf{x}|Y=-1)P(Y=-1)d\mathbf{x}   \\
				& & + \int\limits_{-\infty}^{\infty} (\mathbf{1}_{\{\tilde{f}^*_{\alpha}(\mathbf{x}) \leq 0\}}(1-\alpha)) P(\mathbf{X}=\mathbf{x}|Y=1)P(Y=1)d\mathbf{x}  \\
				&=& (1-p)\alpha\int\limits_{0.4908}^{\infty} \frac{1}{\sqrt{2\pi}}\exp(-\frac{1}{2}(\mathbf{x}+2.5)^2) d\mathbf{x} + p  (1-\alpha)\int\limits_{-\infty}^{0.4908} \frac{1}{\sqrt{2\pi}}\exp(-\frac{1}{2}(\mathbf{x}-2)^2) d\mathbf{x} \\
				&=& 0.13(1- \Phi(2.99)) + 0.28\Phi(-1.51)  ~~ \text{where } \Phi(\mathbf{x}) \text{ is the c.d.f of standard normal dist at } \mathbf{x}.  \\
				&=& 0.0185
		\end{eqnarray*}}	
		Therefore, $R_{D,\alpha}(f^*_{\alpha}) \neq R_{D,\alpha}(\tilde{f}^*_{\alpha})$ implying that the risk minimization under weighted $0$-$1$ loss $l_{0-1,\alpha}$ is not uniform robust.
	\end{example}
	\section{Explicit form for $\eta$ when conditional distributions are normally distributed} \label{sec: eta_true_normal}
	Here, we present a result which provides the explicit from of in-class probability $\eta$ when the conditional distributions $\mathbf{X}|Y$ are normally distributed.
	\begin{lemma} \label{lem: eta-normal}
		Let Y has Bernoulli distribution with parameter $p$. Let $\mathbf{X} \subset \mathbb{R}^n$  be such that $\mathbf{X}|Y = 1 \sim N(\boldsymbol{\mu}_+,\Sigma_+)$ and $\mathbf{X}|Y = -1 \sim N(\boldsymbol{\mu}_-,\Sigma_-)$. Then, the in-class probability $\eta(\mathbf{x}) = P(Y=1|\mathbf{X}=\mathbf{x})$ is given as follows:
		{ \scriptsize
			\begin{equation}\label{eq: eta-normal}
			\eta(\mathbf{x}) = \frac{1}{1 + \frac{1-p}{p}\sqrt[]{\frac{|\Sigma_+|}{|\Sigma_-|}}\exp(-\frac{1}{2}( \mathbf{x}^T(\Sigma^{-}_- - \Sigma^{-}_+)\mathbf{x} - 2\mathbf{x}^T(\Sigma^{-}_-\boldsymbol{\mu}_- - \Sigma^{-}_+\boldsymbol{\mu}_+) + \boldsymbol{\mu}_-^T\Sigma_-^{-}\boldsymbol{\mu}_- - \boldsymbol{\mu}_+^T\Sigma_+^{-}\boldsymbol{\mu}_+))}
			\end{equation}}
		where $\Sigma_+^{-}$ and $\Sigma_-^{-}$ are the inverse of $\Sigma_+$ and $\Sigma_-$ respectively.
	\end{lemma}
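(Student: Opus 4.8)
The plan is to apply Bayes' rule directly and then reduce everything to evaluating a single likelihood ratio of the two Gaussian densities. First I would write
$$\eta(\mathbf{x}) = \frac{P(Y=1)\,f_{+}(\mathbf{x})}{P(Y=1)\,f_{+}(\mathbf{x}) + P(Y=-1)\,f_{-}(\mathbf{x})},$$
where $f_{+}$ and $f_{-}$ denote the densities of $N(\boldsymbol{\mu}_+,\Sigma_+)$ and $N(\boldsymbol{\mu}_-,\Sigma_-)$ respectively, and $P(Y=1)=p$, $P(Y=-1)=1-p$. Dividing numerator and denominator by the numerator puts this into the target one-over-one-plus-ratio form
$$\eta(\mathbf{x}) = \frac{1}{1 + \dfrac{1-p}{p}\cdot\dfrac{f_{-}(\mathbf{x})}{f_{+}(\mathbf{x})}},$$
so the entire task reduces to computing $f_{-}(\mathbf{x})/f_{+}(\mathbf{x})$.

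Next I would substitute the explicit multivariate normal densities. The normalizing constants $(2\pi)^{n/2}$ cancel, the determinant prefactors combine into $\sqrt{|\Sigma_+|/|\Sigma_-|}$, and the exponentials merge into the exponential of the difference of the two quadratic forms, namely
$$\frac{f_{-}(\mathbf{x})}{f_{+}(\mathbf{x})} = \sqrt{\frac{|\Sigma_+|}{|\Sigma_-|}}\,\exp\!\left(-\tfrac{1}{2}\big[(\mathbf{x}-\boldsymbol{\mu}_-)^T\Sigma_-^{-}(\mathbf{x}-\boldsymbol{\mu}_-) - (\mathbf{x}-\boldsymbol{\mu}_+)^T\Sigma_+^{-}(\mathbf{x}-\boldsymbol{\mu}_+)\big]\right).$$

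The only remaining work is a routine expansion. I would expand each quadratic form as $\mathbf{x}^T\Sigma^{-}\mathbf{x} - 2\mathbf{x}^T\Sigma^{-}\boldsymbol{\mu} + \boldsymbol{\mu}^T\Sigma^{-}\boldsymbol{\mu}$, subtract the two, and collect the quadratic, linear, and constant terms in $\mathbf{x}$. This produces exactly $\mathbf{x}^T(\Sigma_-^{-}-\Sigma_+^{-})\mathbf{x} - 2\mathbf{x}^T(\Sigma_-^{-}\boldsymbol{\mu}_- - \Sigma_+^{-}\boldsymbol{\mu}_+) + \boldsymbol{\mu}_-^T\Sigma_-^{-}\boldsymbol{\mu}_- - \boldsymbol{\mu}_+^T\Sigma_+^{-}\boldsymbol{\mu}_+$, which substituted back yields equation \eqref{eq: eta-normal}. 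There is no genuine obstacle here: the statement is a direct consequence of Bayes' rule, and the only place demanding care is the sign bookkeeping when subtracting the two quadratic forms (the $\boldsymbol{\mu}_+$ cross term re-enters with a positive sign through the double negation) and keeping the determinant ratio oriented as $\sqrt{|\Sigma_+|/|\Sigma_-|}$, inverted relative to the $f_{-}$ density it arises from.
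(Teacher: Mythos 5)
Your proposal is correct and matches the paper's own proof: both apply Bayes' rule, divide through by the numerator to reach the $1/(1+\text{ratio})$ form, substitute the Gaussian densities so the determinant prefactors give $\sqrt{|\Sigma_+|/|\Sigma_-|}$, and expand the difference of quadratic forms to collect the quadratic, linear, and constant terms exactly as in equation \eqref{eq: eta-normal}. Your sign bookkeeping for the cross terms and the orientation of the determinant ratio are both right.
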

	\begin{proof}
		Consider the in-class probability $\eta(\mathbf{x})$ given below:
		{\scriptsize
			\begin{eqnarray*}
				& & P(Y=1|\mathbf{X}=\mathbf{x}) = \frac{P(Y=1,\mathbf{X}=\mathbf{x})}{P(\mathbf{X}=\mathbf{x})} \\
				&=& \frac{P(\mathbf{X}=\mathbf{x}|Y=1)P(Y=1)}{\sum\limits_{y \in \{-1,1\}}P(\mathbf{X}=\mathbf{x}|Y=y)P(Y=y)} \\
				&=& \frac{\frac{1}{\sqrt{2\pi |\Sigma_+|}}\exp(-\frac{1}{2}(\mathbf{x}-\boldsymbol{\mu}_+)^T\Sigma_+^-(\mathbf{x}-\boldsymbol{\mu}_+)) \times p}{\frac{1}{\sqrt{2\pi |\Sigma_+|}}\exp(-\frac{1}{2}(\mathbf{x}-\boldsymbol{\mu}_+)^T\Sigma_+^-(\mathbf{x}-\boldsymbol{\mu}_+)) \times p + \frac{1}{\sqrt{2\pi |\Sigma_-|}}\exp(-\frac{1}{2}(\mathbf{x}-\boldsymbol{\mu}_-)^T\Sigma_-^-(\mathbf{x}-\boldsymbol{\mu}_-)) \times (1-p)} \\
				&=& \frac{1}{1+ \frac{1-p}{p}\sqrt[]{\frac{|\Sigma_+|}{|\Sigma_-|}}\exp(-\frac{1}{2}[(\mathbf{x}-\boldsymbol{\mu}_-)^T\Sigma_-^-(\mathbf{x}-\boldsymbol{\mu}_-) - (\mathbf{x}-\boldsymbol{\mu}_+)^T\Sigma_+^-(\mathbf{x}-\boldsymbol{\mu}_+)] )}
		\end{eqnarray*}}
		The last second equality uses the fact that $P(Y=1) = p$ and $\mathbf{X}|Y$ is normally distributed. Now, expanding the product terms in exponent of the last equality and collecting the common terms, we get equation \eqref{eq: eta-normal}.
	\end{proof}
	This result was used in synthetic data experiments to get the get the Bayes classifiers using $\eta$.
	
	\section{Experiments demonstrating the role of $\alpha$ and $\gamma$ in cost sensitive learning on clean data} \label{sec: role_alpha_gamma}
	In this section, our goal is to understand the role of $\alpha$ and $\gamma$ in $l_{\alpha,un}(f(\mathbf{x},y))$. Specifically, we work with $\alpha$-weighted uneven margin squared loss defined below:
	\begin{equation} \label{eq: un-sq_loss_supp}
	l_{\alpha,usq}(f(\mathbf{x}),y)= (1-\alpha)\mathbf{1}_{\{y=1\}}(1-f(\mathbf{x}))^2 + \alpha\mathbf{1}_{\{y=-1\}}\frac{1}{\gamma}(1+\gamma f(\mathbf{x}))^2, ~~ \gamma >0
	\end{equation}
	We consider $3$ performance measures, viz., Accuracy (Acc), Arithmetic mean (AM) and F measure.
	
	Every dataset is partitioned into training and test set with $80$-$20$ split. Further, for tuning the parameters like $\gamma$, we split the training data into $65$-$15$ and use $15\%$ as validation set for tuning the parameters
	To account for the randomness while partitioning of the data, this process is repeated 10 times. The reported values are performance measures averaged over 10 trials along with their standard deviation. To choose the value of the regularization parameter $\lambda$, we perform a CV over the set $\Lambda = \{.01, .1, 1, 10\}$. Next, we separately provide examples for the cases where there is a need for differential costing of misclassifications.
	
	\subsection{Data has only class imbalance and no user given cost}
	This is the case when there is only imbalance in the dataset and there is no explicit requirement of costing by the domain. In literature, researchers use only $\alpha$ to account for the imbalance, i.e., use surrogate loss function $l_{\alpha}(f(\mathbf{x}),y)$ given below:
	\begin{equation} \label{eq: l_alpha_supp}
	l_{\alpha}(f(\mathbf{x}),y) = (1-\alpha)l_1(f(\mathbf{x})) + \alpha l_{-1}(f(\mathbf{x}))
	\end{equation}
	where $\alpha > 0$ is tuned. However, as stated in \cite{scott2012calibrated}, uneven margins $\gamma \neq 1$ have been found to yield improved empirical performance in classification problems involving imbalanced data. With this motivation, we demonstrate the performance of cost sensitive linear classifier from $l_{\alpha,usq}$ based regularized ERM on two synthetic datasets across different level of imbalances given in the set $Im = \{0.2,0.3,0.4,0.5,0.6,0.7,0.8\}$ with 3 approaches given below:
	\begin{enumerate}
		\item Ap1 : $\alpha$ is tuned and $\gamma =1$ 
		\item Ap2 : $\gamma$ is tuned and $\alpha =0.5$ 
		\item Ap3 : $\alpha$ and $\gamma$ both are tuned 
	\end{enumerate}
	Results on the following two synthetic datasets w.r.t. Accuracy, AM, and F measure are presented in Table \ref{tab: cost_clean_acc_case1}, \ref{tab: cost_clean_AM_case1} and \ref{tab: cost_clean_F_case1} respectively.
	
	\textbf{Syn-dataset1}
	We first generate 1000 binary class labels $Y$ from Bernoulli distribution with parameter $\pi \in Im$ where $\pi = P(Y=1).$ Then a 2-dimensional feature vector $\mathbf{X}$ for each label is drawn from two different Gaussian distributions: $\mathbf{X}|Y =1 \sim N([1,0],\Sigma)$ $\&$ $\mathbf{X}|Y =-1 \sim N([-1,0],\Sigma)$ where $\Sigma = [[1, -0.25];[-0.25,1]]$.
	
	\textbf{Syn-dataset2}
	We first generate 1000 binary class labels $Y$ from Bernoulli distribution with parameter $\pi \in Im$ where $\pi = P(Y=1).$ Then, a 2-dimensional feature vector $\mathbf{X}$ for each label is drawn from two different Gaussian distributions: $\mathbf{X}|Y =1 \sim N([0.9,-0.5],\Sigma)$ $\&$ $\mathbf{X}|Y =-1 \sim N([-0.9,-0.8],\Sigma)$ where $\Sigma = [[1, 0.5];[0.5,1]]$.
	\FloatBarrier
	\begin{table}[H]
		\centering
		{ \scriptsize
			\begin{tabular}{|c|c|c|c|c|c|c|}
				\hline
				& \multicolumn{3}{c|}{\textbf{Syn-dataset1}} & \multicolumn{3}{c|}{\textbf{Syn-dataset2}} \\ \hline
				$\pi$ & \textbf{\begin{tabular}[c]{@{}c@{}}$\alpha$ tuned; \\ $\gamma=1$ \end{tabular}} & \textbf{\begin{tabular}[c]{@{}c@{}} $\gamma$ tuned; \\ $\alpha=0.5$ \end{tabular}} & \textbf{\begin{tabular}[c]{@{}c@{}}$\alpha$ $\&$ $\gamma$\\ tuned\end{tabular}} & \textbf{\begin{tabular}[c]{@{}c@{}}$\alpha$ tuned; \\ $\gamma=1$ \end{tabular}} & \textbf{\begin{tabular}[c]{@{}c@{}} $\gamma$ tuned; \\ $\alpha=0.5$ \end{tabular}} & \textbf{\begin{tabular}[c]{@{}c@{}}$\alpha$ $\&$ $\gamma$\\ tuned\end{tabular}} \\ \hline
				\textbf{0.2} & 0.877  $\pm$  0.017 & 0.873  $\pm$  0.017 & 0.877  $\pm$  0.018 & 0.877  $\pm$  0.018 & 0.872  $\pm$  0.016 & 0.871  $\pm$  0.014 \\ \hline
				\textbf{0.3} & 0.853  $\pm$  0.024 & 0.852  $\pm$  0.025 & 0.853  $\pm$  0.025 & 0.852  $\pm$  0.022 &  0.856  $\pm$  0.025 & 0.854  $\pm$  0.024 \\ \hline
				\textbf{0.4} & 0.850  $\pm$  0.016 & 0.852  $\pm$  0.016 & 0.852  $\pm$  0.016 & 0.841  $\pm$  0.018 & 0.843  $\pm$  0.014& 0.841  $\pm$  0.012 \\ \hline
				\textbf{0.5} & 0.851  $\pm$  0.016 & 0.849  $\pm$  0.0160 & 0.852  $\pm$  0.016 & 0.825  $\pm$  0.018 & 0.834  $\pm$  0.018 & 0.8325  $\pm$  0.0172 \\ \hline
				\textbf{0.6} & 0.859  $\pm$  0.018 & 0.857  $\pm$  0.019 & 0.857  $\pm$  0.019 & 0.852  $\pm$  0.022 & 0.85  $\pm$  0.021 & 0.852  $\pm$  0.022 \\ \hline
				\textbf{0.7} & 0.885  $\pm$  0.020 & 0.884  $\pm$  0.016 & 0.882  $\pm$  0.017 & 0.869  $\pm$  0.015 & 0.872  $\pm$  0.023 & 0.874  $\pm$  0.018 \\ \hline
				\textbf{0.8} & 0.895  $\pm$  0.009 & 0.900  $\pm$  0.010 & 0.9015  $\pm$  0.0105 & 0.892  $\pm$  0.0120 & 0.890  $\pm$  0.014 & 0.895  $\pm$  0.013 \\ \hline
		\end{tabular}}
		\caption{Averaged accuracy over 10 trials of the linear classifiers obtained from $l_{\alpha,usq}$ based regularized ERM on two synthetic datasets. The 3 columns for each dataset correspond to the three approaches mentioned when there is no user given cost but imbalance in the data.}
		\label{tab: cost_clean_acc_case1}
	\end{table}
	\FloatBarrier
	\begin{table}[H]
		\centering
		{ \scriptsize
			\begin{tabular}{|c|c|c|c|c|c|c|}
				\hline
				& \multicolumn{3}{c|}{\textbf{Syn-dataset1}} & \multicolumn{3}{c|}{\textbf{Syn-dataset2}} \\ \hline
				$\pi$ & \textbf{\begin{tabular}[c]{@{}c@{}}$\alpha$ tuned; \\ $\gamma=1$ \end{tabular}} & \textbf{\begin{tabular}[c]{@{}c@{}} $\gamma$ tuned; \\ $\alpha=0.5$ \end{tabular}} & \textbf{\begin{tabular}[c]{@{}c@{}}$\alpha$ $\&$ $\gamma$\\ tuned\end{tabular}} & \textbf{\begin{tabular}[c]{@{}c@{}}$\alpha$ tuned; \\ $\gamma=1$ \end{tabular}} & \textbf{\begin{tabular}[c]{@{}c@{}} $\gamma$ tuned; \\ $\alpha=0.5$ \end{tabular}} & \textbf{\begin{tabular}[c]{@{}c@{}}$\alpha$ $\&$ $\gamma$\\ tuned\end{tabular}} \\ \hline
				\textbf{0.2} & 0.812  $\pm$  0.033 & 0.8195  $\pm$  0.03 & 0.782  $\pm$  0.034 & 0.8136  $\pm$  0.028 & 0.78  $\pm$  0.041 & 0.816  $\pm$  0.024 \\ \hline
				\textbf{0.3} & 0.831  $\pm$  0.0270 & 0.839  $\pm$  0.0281 & 0.828  $\pm$  0.032 & 0.830  $\pm$  0.036 & 0.829  $\pm$  0.031 & 0.840  $\pm$  0.034 \\ \hline
				\textbf{0.4} & 0.842  $\pm$  0.021 & 0.847  $\pm$  0.018 & 0.849  $\pm$  0.016 & 0.821  $\pm$  0.017 & 0.827  $\pm$  0.020 & 0.827  $\pm$  0.019 \\ \hline
				\textbf{0.5} & 0.850  $\pm$  0.0170 & 0.848  $\pm$  0.016 & 0.848  $\pm$  0.014 & 0.831  $\pm$  0.019 & 0.837  $\pm$  0.0186 & 0.837  $\pm$  0.018 \\ \hline
				\textbf{0.6} & 0.841  $\pm$  0.023 & 0.850  $\pm$  0.024 & 0.849  $\pm$  0.019 & 0.846  $\pm$  0.023 & 0.840  $\pm$  0.026 & 0.842  $\pm$  0.023 \\ \hline
				\textbf{0.7} & 0.859  $\pm$  0.025 & 0.862  $\pm$  0.019 & 0.867  $\pm$  0.018 & 0.838  $\pm$  0.018 & 0.843  $\pm$  0.031 & 0.844  $\pm$  0.027 \\ \hline
				\textbf{0.8} & 0.845  $\pm$  0.031 & 0.846  $\pm$  0.032 & 0.850  $\pm$  0.029 & 0.844  $\pm$  0.024 & 0.841  $\pm$  0.030 & 0.843  $\pm$  0.024 \\ \hline
				
		\end{tabular}}
		\caption{Averaged AM over 10 trials of the linear classifiers obtained from $l_{\alpha,usq}$ based regularized ERM on two synthetic datasets. The 3 columns for each dataset correspond to the three approaches mentioned when there is no user given cost but imbalance in the data.}
		\label{tab: cost_clean_AM_case1}
	\end{table}
	\FloatBarrier
	\begin{table}[H]
		\centering
		{ \scriptsize
			\begin{tabular}{|c|c|c|c|c|c|c|}
				\hline
				& \multicolumn{3}{c|}{\textbf{Syn-dataset1}} & \multicolumn{3}{c|}{\textbf{Syn-dataset2}} \\ \hline
				$\pi$ & \textbf{\begin{tabular}[c]{@{}c@{}}$\alpha$ tuned; \\ $\gamma=1$ \end{tabular}} & \textbf{\begin{tabular}[c]{@{}c@{}} $\gamma$ tuned; \\ $\alpha=0.5$ \end{tabular}} & \textbf{\begin{tabular}[c]{@{}c@{}}$\alpha$ $\&$ $\gamma$\\ tuned\end{tabular}} & \textbf{\begin{tabular}[c]{@{}c@{}}$\alpha$ tuned; \\ $\gamma=1$ \end{tabular}} & \textbf{\begin{tabular}[c]{@{}c@{}} $\gamma$ tuned; \\ $\alpha=0.5$ \end{tabular}} & \textbf{\begin{tabular}[c]{@{}c@{}}$\alpha$ $\&$ $\gamma$\\ tuned\end{tabular}} \\ \hline
				\textbf{0.2} & 0.671  $\pm$  0.054 & 0.670  $\pm$  0.054 & 0.677  $\pm$  0.057 & 0.661  $\pm$  0.034 & 0.659  $\pm$  0.057 & 0.670  $\pm$  0.052 \\ \hline
				\textbf{0.3} & 0.744  $\pm$  0.039 & 0.753  $\pm$  0.040 & 0.751  $\pm$  0.043 & 0.748  $\pm$  0.035 & 0.746  $\pm$  0.044 & 0.745  $\pm$  0.041 \\ \hline
				\textbf{0.4} & 0.807  $\pm$  0.023 & 0.812  $\pm$  0.021 & 0.815  $\pm$  0.020 & 0.782  $\pm$  0.020 & 0.787  $\pm$  0.023 & 0.788  $\pm$  0.021 \\ \hline
				\textbf{0.5} & 0.847  $\pm$  0.011 & 0.846  $\pm$  0.013 & 0.849$\pm$  0.010 & 0.833  $\pm$  0.018 & 0.837  $\pm$  0.017 & 0.837  $\pm$  0.017 \\ \hline
				\textbf{0.6} & 0.885  $\pm$  0.015 & 0.886  $\pm$  0.013 & 0.885  $\pm$  0.015 & 0.886 $\pm$  0.015 & 0.882  $\pm$  0.016 & 0.882  $\pm$  0.015 \\ \hline
				\textbf{0.7} & 0.919  $\pm$  0.015 & 0.919  $\pm$  0.013 & 0.919  $\pm$  0.013 & 0.909  $\pm$  0.013 & 0.909  $\pm$  0.010 & 0.909  $\pm$  0.01 \\ \hline
				\textbf{0.8} & 0.936  $\pm$  0.005 & 0.939  $\pm$  0.006 & 0.939  $\pm$  0.006 & 0.935  $\pm$  0.007 & 0.934  $\pm$  0.007 & 0.936  $\pm$  0.008 \\ \hline	
		\end{tabular}}
		\caption{Averaged F measure over 10 trials of the linear classifiers obtained from $l_{\alpha,usq}$ based regularized ERM on two synthetic datasets. The 3 columns for each dataset correspond to the three approaches mentioned when there is no user given cost but imbalance in the data.}
		\label{tab: cost_clean_F_case1}
	\end{table}
	
	\textbf{Observations}
	In Table \ref{tab: cost_clean_acc_case1}, we observe that when there is only imbalance in the data and no user given cost, all three approaches, i.e., Ap1, Ap2 and Ap3 have almost equal accuracy on both the synthetic datasets considered. When AM measure is used for evaluation, one can observe in Table \ref{tab: cost_clean_AM_case1} that Ap2 has marginally better performance for some values of $\pi$. Again with F measure, deciding on one approach to handle only class imbalance is not easy. We believe that one can consider either tuning $\alpha$ with $\gamma=1$ or tuning $\gamma$ with $\alpha=0.5$. Also, as can be seen in Table \ref{tab: cost_clean_acc_case1}, Accuracy values exhibits convexity w.r.t the imbalance $\pi$. This provides us with an evidence that Accuracy is not the correct measure when there is imbalance in the data. It will lead to high values of Accuracy in the presence of imbalance.
	
	\subsection{Data has no class imbalance but there is a user given cost}
	This is the case where there a user given cost $\alpha$ and $\gamma$ can be set to $1$. However, using Bupa dataset \cite{UCI_dataset} we show that, in this case also tuning $\gamma$ can lead to improvement in the performance of the classifier.
	\begin{table}[htbp!]
		\centering
		\begin{tabular}{|c|c|c|}
			\hline
			& \textbf{$\gamma$ tuned $\&$ $\alpha = 0.25$} & \textbf{$\gamma=1$ $\&$ $\alpha = 0.25$} \\ \hline
			\textbf{Acc} & 0.971  $\pm$  0.019 & 0.915  $\pm$  0.035 \\ \hline
			\textbf{AM} & 0.971  $\pm$  0.019 & 0.914  $\pm$  0.035 \\ \hline
			\textbf{F} & 0.966  $\pm$  0.017 & 0.923 $\pm$  0.029 \\ \hline
		\end{tabular}
		\caption{Averaged accuracy, AM and F measures of the linear classifiers obtained from $l_{\alpha,usq}$ based regularized ERM on Bupa dataset. The dataset has 6 features and is balanced. This is the case when there is a user given cost but the data has class imbalance. The above results show that in this case too tuning $\gamma$ improves the performance w.r.t all 3 measures. }
		\label{tab: cost_clean_all_case2_BUPA}
	\end{table}
	\subsection{Data has class imbalance and there is a user given cost too}
	This is the scenario when there is a domain requirement of differential costing and the dataset available is imbalanced too. In this case, we fix the value of $\alpha$ and let the data decide the value of $\gamma$. Here, we present the performance of linear classifiers learnt from $l_{\alpha, usq}$ based regularized ERM on Syn-dataset 2. The results are presented across different level of imbalances from the set $Im$ with the corresponding cost set as $C = \{0.25,0.35,0.45,0.4,0.55,0.65,0.75\}$, i.e., $\alpha \in C$.
	
	\begin{table}[htbp!]
		\centering
		\begin{tabular}{|c|c|c|c|c|}
			\hline
			\textbf{p} & $\alpha$& \textbf{Acc} & \textbf{AM} & \textbf{F} \\ \hline
			\textbf{0.2} & 0.25 &0.870  $\pm$  0.014 & 0.816  $\pm$  0.030 & 0.678  $\pm$  0.040 \\ \hline
			\textbf{0.3} & 0.35& 0.853  $\pm$  0.023 & 0.841  $\pm$  0.033 & 0.746  $\pm$  0.039 \\ \hline
			\textbf{0.4} & 0.45&0.836  $\pm$  0.019 & 0.825  $\pm$  0.018 & 0.786  $\pm$  0.020 \\ \hline
			\textbf{0.5} & 0.4&0.829  $\pm$  0.015 & 0.834  $\pm$  0.017 & 0.833  $\pm$  0.019 \\ \hline
			\textbf{0.6} & 0.55&0.853  $\pm$  0.020 & 0.842 $\pm$  0.024& 0.879  $\pm$  0.014 \\ \hline
			\textbf{0.7} & 0.65&0.870  $\pm$  0.019 & 0.846  $\pm$  0.025 & 0.909  $\pm$  0.013 \\ \hline
			\textbf{0.8} & 0.75&0.882  $\pm$  0.018 & 0.842  $\pm$  0.027 & 0.926  $\pm$  0.012 \\ \hline
		\end{tabular}
		\caption{Averaged accuracy, AM and F measures of the classifiers obtained from $l_{\alpha,usq}$ based regularized ERM on Syn-dataset2. This is the case when there is a user given cost $\alpha$ and there is imbalance also in the dataset as given by $p$.}
		\label{tab: cost_clean_all_case3_syn2}
	\end{table}
	
	\textbf{Observations} In this case, the user given cost is $\alpha$ and the value of $\gamma$ is tuned to account for the class imbalance in data. We observe that the values of $\gamma$ picked up when evaluation measure was Accuracy are almost complementary to the value of $\gamma$ picked up when evaluation measure is AM. Specifically, with Accuracy measure, for $p < 0.5$ the $\gamma$ value is less than $1$ and for $p>0.5$, it is more than 1. However, with AM measure, for $p <0.5$ the $\gamma$ value is greater than $1$ and for $p>0.5$, it is less than 1. Based on the functional form of $l_{\alpha,un}$, the correct thing would to choose the value of $\gamma$ as given by AM measure.  The F measure based evaluation leads to $\gamma$ values lying somewhere between the values obtained from Accuracy and AM measure. For $p < 0.5$, there is a mix of both $\gamma$ less than 1 and more than 1 but for $p>0.5$, all values of $\gamma$ are more than 1.
	
	\section{Additional experimental results when there is a need for differential costing and data has uniform label noise}
	
	\subsection{F and Weighted Cost measure of $l_{\alpha,usq}$-regularized ERM based classifiers and Re-sampling based algorithm on UCI datasets} \label{sec: real_F_WC_supp}
	In this section, we present the performance of \ref{alg: eta_cost_SLN} and $l_{\alpha,usq}$ based regularized ERM on UCI datasets w.r.t. F measure and Weighted Cost measure.
	From Table \ref{tab: F_measure_cost}, we conclude that for Breast Cancer and German dataset, \ref{alg: eta_cost_SLN} performs better among the two proposed schemes and for the other two datasets $l_{\alpha,usq}$ based RegERM is good. With the WC criteria in Table \ref{tab: WC_measure_cost}, \ref{alg: eta_cost_SLN} does well on all datasets. 
	\FloatBarrier
	\begin{table}[!ht]
		\centering
		{ \footnotesize
			\bgroup
			\def\arraystretch{0.87}%
			{\setlength{\tabcolsep}{0.22em}	
				\begin{tabular}{|c|c|c|c|c|c|c|c|c|}
					\hline
					\textbf{Dataset} & \textbf{Cost} & \textbf{$\rho$} & \multicolumn{4}{c|}{\textbf{\ref{alg: eta_cost_SLN}}: $\tilde{\eta}$ estimate from } & \textbf{\begin{tabular}[c]{@{}c@{}}$l_{\alpha,usq}$\\ RegERM\end{tabular}} \\ \hline
					$(n,m_+,m_-)$& $\alpha$ & \textbf{} & \textbf{\begin{tabular}[c]{@{}c@{}} \textit{Lk-fun} \\ with $l_{sq}$\end{tabular}} & \textbf{\begin{tabular}[c]{@{}c@{}}\textit{Lk-fun} \\ with $l_{log}$\end{tabular}} & \textbf{LSPC} & \textbf{KLIEP} & \\ \hline
					\textbf{\begin{tabular}[c]{@{}c@{}}Breast\\ Cancer \\ (9,77,186)\end{tabular}} & 0.2  & \begin{tabular}[c]{@{}c@{}}0.0 \\ 0.2\\ 0.4\end{tabular} & \begin{tabular}[c]{@{}c@{}}0.46$\pm$0.1 \\ \textbf{0.49$\pm$0.02}\\ \textbf{0.44$\pm$0.01}\end{tabular} & \begin{tabular}[c]{@{}c@{}}\textbf{0.48$\pm$0.09}\\  \textbf{0.50$\pm$0.04}\\ \textbf{0.43$\pm$0.01}\end{tabular} & \begin{tabular}[c]{@{}c@{}} 0.47$\pm$0.09 \\ 0.46$\pm$0.07\\ \textbf{0.44$\pm$0.03}\end{tabular} & \begin{tabular}[c]{@{}c@{}}0.46$\pm$0.11 \\ 0.45$\pm$0.06\\ \textbf{0.43$\pm$0.02} \end{tabular} & \begin{tabular}[c]{@{}c@{}}0.47$\pm$0.11\\ 0.35$\pm$0.10 \\ 0.32$\pm$0.14\end{tabular} \\ \hline
					\textbf{\begin{tabular}[c]{@{}c@{}}Pima\\ Diabetes \\ (8,268,500)\end{tabular}} & 0.25  & \begin{tabular}[c]{@{}c@{}}0.0 \\ 0.2\\ 0.4\end{tabular} & \begin{tabular}[c]{@{}c@{}}0.67$\pm$0.02 \\ 0.60$\pm$0.03 \\ 0.51$\pm$0.01\end{tabular} & \begin{tabular}[c]{@{}c@{}}0.66 $\pm$ 0.02 \\ 0.62$\pm$0.04\\ 0.52$\pm$0.61\end{tabular} & \begin{tabular}[c]{@{}c@{}} 0.64$\pm$0.03 \\ 0.58$\pm$0.03\\ 0.51$\pm$0.04\end{tabular} & \begin{tabular}[c]{@{}c@{}}0.62$\pm$0.03 \\ 0.58$\pm$0.02\\ 0.52$\pm$0.37\end{tabular} & \begin{tabular}[c]{@{}c@{}}\textbf{0.68$\pm$0.03} \\ \textbf{0.64$\pm$0.04}\\ \textbf{0.58$\pm$0.07}\end{tabular} \\ \hline
					\textbf{\begin{tabular}[c]{@{}c@{}}German \\ (20,300,700)\end{tabular}} & 0.16  & \begin{tabular}[c]{@{}c@{}}0.0 \\ 0.2\\ 0.4\end{tabular} & \begin{tabular}[c]{@{}c@{}}0.58$\pm$0.03 \\ \textbf{0.52$\pm$0.01}\\ \textbf{0.46$\pm$0.1}\end{tabular} & \begin{tabular}[c]{@{}c@{}}0.57$\pm$0.03 \\ \textbf{0.53$\pm$0.02}\\ \textbf{0.46$\pm$0.01}\end{tabular}  & \begin{tabular}[c]{@{}c@{}} \textbf{0.59$\pm$0.04} \\  \textbf{0.52$\pm$0.03}\\ 0.45$\pm$0.01\end{tabular} & \begin{tabular}[c]{@{}c@{}}0.54$\pm$0.03\\0.48$\pm$0.02\\ \textbf{0.46$\pm$0.01} \end{tabular} & \begin{tabular}[c]{@{}c@{}}0.57$\pm$0.04 \\ 0.49$\pm$0.72\\ 0.45$\pm$0.04\end{tabular} \\ \hline
					\textbf{\begin{tabular}[c]{@{}c@{}}Thyroid \\ (5,65,150)\end{tabular}} & 0.3  & \begin{tabular}[c]{@{}c@{}}0.0 \\ 0.2\\ 0.4\end{tabular} & \begin{tabular}[c]{@{}c@{}}0.71$\pm$0.09 \\ 0.63$\pm$0.12\\ 0.44$\pm$0.04 \end{tabular} & \begin{tabular}[c]{@{}c@{}}0.79$\pm$ 0.08 \\ 0.63$\pm$0.10\\ 0.46$\pm$0.02 \end{tabular}  & \begin{tabular}[c]{@{}c@{}}\textbf{ 0.90$\pm$0.05}\\  \textbf{0.76$\pm$0.06}\\ 0.46$\pm$0.0\end{tabular} & \begin{tabular}[c]{@{}c@{}}0.73$\pm$0.08 \\ 0.56$\pm$0.26\\ 0.43$\pm$0.13\end{tabular} & \begin{tabular}[c]{@{}c@{}}0.72$\pm$0.09 \\ 0.65$\pm$0.09\\ \textbf{0.63$\pm$0.14}\end{tabular} \\ \hline
		\end{tabular}}}
		\egroup
		\caption[F measure from \ref{alg: eta_cost_SLN} and $l_{\alpha,usq}$ on real datasets]{Averaged F measure $(\pm~s.d.)$  of the cost sensitive predictions made by \ref{alg: eta_cost_SLN} and $l_{\alpha,usq}$ based regularized ERM on UCI datasets corrupted by uniform label noise.}
		\label{tab: F_measure_cost}
	\end{table}
	\FloatBarrier
	\begin{table}[!ht]
		\centering
		{ \footnotesize
			\bgroup
			\def\arraystretch{0.87}%
			{\setlength{\tabcolsep}{0.22em}	
				\begin{tabular}{|c|c|c|c|c|c|c|c|c|c|}
					\hline
					\textbf{Dataset} & \textbf{Cost} & \textbf{$\rho$} & \multicolumn{4}{c|}{\textbf{\ref{alg: eta_cost_SLN}}: $\tilde{\eta}$ estimate from } & \textbf{\begin{tabular}[c]{@{}c@{}}$l_{\alpha,usq}$\\ RegERM\end{tabular}} \\ \hline
					$(n,m_+,m_-)$& $\alpha$ & \textbf{} & \textbf{\begin{tabular}[c]{@{}c@{}} \textit{Lk-fun} \\ with $l_{sq}$\end{tabular}} & \textbf{\begin{tabular}[c]{@{}c@{}}\textit{Lk-fun} \\ with $l_{log}$\end{tabular}} & \textbf{LSPC} & \textbf{KLIEP} & \\ \hline
					\textbf{\begin{tabular}[c]{@{}c@{}}Breast\\ Cancer \\ (9,77,186)\end{tabular}} & 0.2 & \begin{tabular}[c]{@{}c@{}}0.0 \\ 0.2\\ 0.4\end{tabular} & \begin{tabular}[c]{@{}c@{}} 8.16$\pm$1.6 \\ \textbf{8.40$\pm$1.16}\\ 9.93$\pm$0.09\end{tabular} & \begin{tabular}[c]{@{}c@{}}8.19$\pm$1.5 \\ 8.95$\pm$0.91\\ 9.98$\pm$0.0\end{tabular} & \begin{tabular}[c]{@{}c@{}}\textbf{7.18$\pm$1.63} \\9.88$\pm$1.41\\ 9.96$\pm$ 0.07\end{tabular} & \begin{tabular}[c]{@{}c@{}}8.02$\pm$1.75 \\ \textbf{7.89$\pm$0.27}\\ \textbf{8.66$\pm$1.13} \end{tabular} & \begin{tabular}[c]{@{}c@{}}9.10$\pm$1.70\\ 12.30$\pm$1.75 \\ 15.09 $\pm$2.62\end{tabular} \\ \hline
					\textbf{\begin{tabular}[c]{@{}c@{}}Pima\\ Diabetes \\ (8,268,500)\end{tabular}} & 0.25 & \begin{tabular}[c]{@{}c@{}}0.0 \\ 0.2\\ 0.4\end{tabular} & \begin{tabular}[c]{@{}c@{}}16.36$\pm$2.16 \\ \textbf{22.34$\pm$2.84}\\ 28.19$\pm$0.51\end{tabular} & \begin{tabular}[c]{@{}c@{}}16.43$\pm$2.14 \\ \textbf{22.01$\pm$2.76} \\ 28.07$\pm$0.61\end{tabular} & \begin{tabular}[c]{@{}c@{}}  \textbf{15.92$\pm$2.67} \\ 28.21$\pm$0.11\\ 28.26$\pm$3.55\end{tabular} & \begin{tabular}[c]{@{}c@{}}17.54$\pm$2.26 \\ \textbf{21.83$\pm$1.62}\\ \textbf{24.09$\pm$0.89} \end{tabular} & \begin{tabular}[c]{@{}c@{}}19.14$\pm$2.20 \\  26.22$\pm$3.36 \\ 33.02$\pm$5.09\end{tabular} \\ \hline
					\textbf{\begin{tabular}[c]{@{}c@{}}German \\ (20,300,700)\end{tabular}} & 0.16 & \begin{tabular}[c]{@{}c@{}}0.0 \\ 0.2\\ 0.4\end{tabular} & \begin{tabular}[c]{@{}c@{}}22.57$\pm$2.55 \\ 27.57$\pm$1.11\\ 30.03$\pm$0.45\end{tabular} & \begin{tabular}[c]{@{}c@{}}22.95$\pm$2.16 \\ 26.98$\pm$1.07\\ 30.20$\pm$0.14\end{tabular} & \begin{tabular}[c]{@{}c@{}}  \textbf{19.84$\pm$2.65}  \\ 29.51$\pm$0.97\\ 30.16$\pm$0.08 \end{tabular} & \begin{tabular}[c]{@{}c@{}}24.30$\pm$0.54\\ \textbf{24.56$\pm$0.50} \\ \textbf{24.65$\pm$0.37} \end{tabular} & \begin{tabular}[c]{@{}c@{}}27.10$\pm$2.82 \\ 43.60$\pm$5.56\\ 59.54$\pm$5.10\end{tabular} \\ \hline
					\textbf{\begin{tabular}[c]{@{}c@{}}Thyroid \\ (5,65,150)\end{tabular}} & 0.3  & \begin{tabular}[c]{@{}c@{}}0.0 \\ 0.2\\ 0.4\end{tabular} & \begin{tabular}[c]{@{}c@{}}3.81$\pm$1.13 \\ 5.63$\pm$2.36\\ 9.57$\pm$0.78 \end{tabular} & \begin{tabular}[c]{@{}c@{}}2.76$\pm$1.19 \\  6.29$\pm$2.79\\ 9.34$\pm$0.58 \end{tabular} & \begin{tabular}[c]{@{}c@{}}\textbf{1.46$\pm$0.72} \\ \textbf{3.07$\pm$0.69}\\ 9.18$\pm$0.0\end{tabular} & \begin{tabular}[c]{@{}c@{}}3.47$\pm$1.09 \\ 7.54$\pm$2.17\\  9.90 $\pm$\ 2.22 \end{tabular} & \begin{tabular}[c]{@{}c@{}}3.89$\pm$1.02 \\ 4.68$\pm$0.99\\ \textbf{5.38$\pm$1.87}\end{tabular} \\ \hline
		\end{tabular}}}
		\egroup
		\caption[ Weighted cost from \ref{alg: eta_cost_SLN} and $l_{\alpha,usq}$ on real datasets]{Averaged Weighted Cost $(\pm~s.d.)$  of the cost sensitive predictions made by \ref{alg: eta_cost_SLN} and $l_{\alpha,usq}$ based regularized ERM on UCI datasets corrupted by uniform label noise.}
		\label{tab: WC_measure_cost}
	\end{table}		
	
	\subsection{Result of \ref{alg: eta_cost_SLN} on Bupa dataset when $\tilde{\eta}$ is estimated using \textit{Lk-fun} with squared loss} \label{sec: Bupa_squared_loss_experiments}
	
	In this section, we obtained the classifiers from \ref{alg: eta_cost_SLN} when $\tilde{\eta}$ is estimated using \textit{Lk-fun} with $l_{sq}$ on Bupa datasets. We compared the cases when the classifiers are learnt by tuning $\gamma$ and when $\gamma =1$. We observed that tuning $\gamma$ is always better than fixing $\gamma$ at 1 because all the classifiers have improved value of all performance measures with the former.
	\begin{table}[H]
		\centering
		\bgroup
		\def\arraystretch{0.87}%
		{\setlength{\tabcolsep}{0.22em}		
			{ \scriptsize
				\begin{tabular}{|c|c|c|c|c|c|c|c|c|}
					\hline
					& \multicolumn{8}{c|}{ \ref{alg: eta_cost_SLN}: $\tilde{\eta}$ estimate from \textit{Lk-fun} with $l_{sq}$ } \\
					\hline
					& \multicolumn{4}{c|}{\textbf{$\gamma$ tuned}} & \multicolumn{4}{c|}{\textbf{$\gamma = 1$}} \\ \hline
					\textbf{$\rho$} & \textbf{Acc} & \textbf{AM} & \textbf{F} & \textbf{WC} & \textbf{Acc} & \textbf{AM} & \textbf{F} & \textbf{WC} \\ \hline
					0.0 & 0.96$\pm$0.02 & 0.97$\pm$0.02 & 0.96$\pm$0.02 & 1.49$\pm$1.02 & 0.90$\pm$0.03 & 0.90$\pm$0.03 & 0.91$\pm$0.02 & 1.67$\pm$0.84 \\ \hline
					0.1 & 0.96$\pm$0.02 & 0.96$\pm$0.03 & 0.97$\pm$0.02 & 3.70$\pm$2.05 & 0.71$\pm$0.88 & 0.70$\pm$0.09 & 0.77$\pm$0.05 & 6.12$\pm$1.61 \\ \hline
					0.2 & 0.9$\pm$0.03 & 0.89$\pm$0.03 & 0.90$\pm$0.03 & 5.65$\pm$0.05 & 0.56$\pm$0.05 & 0.55$\pm$0.06 & 0.69$\pm$0.02 & 7.95$\pm$0.79 \\ \hline
					0.3 & 0.85$\pm$0.04 & 0.85$\pm$0.04 & 0.81$\pm$0.1 & 5.4$\pm$1.74 & 0.51$\pm$0.01 & 0.50$\pm$0.01 & 0.67$\pm$0.01 & 8.5$\pm$0.23 \\ \hline
					0.4 & 0.59$\pm$0.09 & 0.58$\pm$0.09 & 0.67$\pm$0.00 & 5.66$\pm$0.00 & 0.51$\pm$0.0 & 0.50 $\pm$0.01 & 0.67$\pm$0.01 & 8.32$\pm$0.37 \\ \hline
		\end{tabular}}}
		\egroup
		\caption{Dataset: BUPA Liver Disorder. The above table depicts the performance of \ref{alg: eta_cost_SLN} when $\tilde{\eta}$ estimate is from \textit{Lk-fun} with $l_{sq}$ based regularized ERM ($\lambda$ tuned via CV). Here, $\alpha =0.25$ and $\gamma$ is tuned.}
		\label{tab: Re-sq-ERM-Bupa}
	\end{table}

	\subsection{Performance of $l_{\alpha,usq}$-regularized ERM based classifier and Re-sampling based algorithm on Synthetic dataset} \label{sec: syn_experiments}
	
	$\mathbf{Syn3D_{imb}}$ (adapted from \cite{efron1997improvements}): Generate $4000$ binary class labels $Y \sim Bern(0.35)$. Then, a $3$-dimensional feature vector $\mathbf{X}$ for each label is drawn from two different Gaussian distributions: $\mathbf{X}|Y=1 \sim N([1.5;0;-1],\Sigma)$ $\&$ $\mathbf{X}|Y=-1 \sim N([-1.5; 1; 1], \Sigma)$ where $\Sigma = [[2,-0.3,-0.3];[-0.3,2,-0.3];[-0.3,-0.3,2]]$. \\
	For $\alpha = 0.3$ and $\gamma$ tuned suitably for the measure in consideration, we compare the quality of cost sensitive predictions from \ref{alg: eta_cost_SLN} and $l_{\alpha,usq}$ based regularized ERM implemented on corrupted data with cost sensitive Bayes classifier $f^*_{0-1,\alpha,\gamma}(\mathbf{x}) = sign(\eta(\mathbf{x})-\frac{\alpha}{\gamma + \alpha(1-\gamma)})$ learnt on clean data.
	
	It can be observed in Table \ref{tab: syn_cost_AC_AM} that Accuracy and AM values for \ref{alg: eta_cost_SLN} and $l_{\alpha,usq}$ based classifiers are comparable to that of $f^*_{0-1,\alpha,\gamma}$ till the noise rate $\rho$ does not cross $0.4$ after which it starts deteriorating. However, as shown in Table \ref{tab: syn_cost_F_WC}, for F measure and Weighted Cost, this threshold is reached at close to $\rho = 0.3$ implying that these measures are more sensitive to noise.  \FloatBarrier
	\begin{table}[!htbp]
		\centering
		{\footnotesize
			\begin{tabular}{|c|c|c|c|c|}
				\hline
				& \multicolumn{4}{c|}{\textbf{$f^*_{0-1,\alpha,\gamma}$ performance}} \\ \hline
				& \textbf{AC} & \textbf{AM} & \textbf{F} & \textbf{WC} \\ \hline
				\textbf{$\rho = 0$} & 0.90$\pm$0.01 & 0.88$\pm$0.01 & 0.85$\pm$0.01 & 42.91$\pm$4.01 \\ \hline
		\end{tabular}}
		\caption{Performance of cost weighted Bayes classifier on $Syn3D_{imb}$ w.r.t. various performance measures. The misclassification cost $\alpha$ is taken as 0.3 and $\gamma$ is tuned.}
		\label{tab: syn_cost_bayes}
	\end{table}
	\vspace{-1.5cm}
	\FloatBarrier
	\begin{table}[!htbp]
		\centering
		{ \scriptsize
			\begin{tabular}{|c|c|c|c|c|c|c|}
				\hline
				\textbf{} & \multicolumn{3}{c|}{\textbf{Accuracy}} & \multicolumn{3}{c|}{\textbf{AM}} \\ \hline
				\textbf{} & \multicolumn{2}{c|}{\ref{alg: eta_cost_SLN}: $\tilde{\eta}$ from} & & \multicolumn{2}{c|}{\ref{alg: eta_cost_SLN}: $\tilde{\eta}$ from} &  \\ \hline
				\textbf{$\rho$} & \textbf{\begin{tabular}[c]{@{}c@{}} Link-func \\ with $l_{log}$ \end{tabular} } & \textbf{ \begin{tabular}[c]{@{}c@{}} LSPC \end{tabular} } & \textbf{\begin{tabular}[c]{@{}c@{}}$l_{\alpha,usq}$\\ RegERM\end{tabular}} & \textbf{\begin{tabular}[c]{@{}c@{}}  Link-func \\ with $l_{log}$\end{tabular}} & \textbf{\begin{tabular}[c]{@{}c@{}}  LSPC \end{tabular}} & \textbf{\begin{tabular}[c]{@{}c@{}}$l_{\alpha,usq}$\\ RegERM\end{tabular}} \\ \hline
				\textbf{0.1}  & 0.90$\pm$0.01 &0.89$\pm$0.01 & 0.90$\pm$0.01 & 0.89$\pm$0.01 &0.89$\pm$0.01 &0.88$\pm$0.01 \\ \hline
				\textbf{0.2}  & 0.89$\pm$0.01 &0.89$\pm$0.00 & 0.90$\pm$0.01 & 0.88$\pm$0.01 &0.88$\pm$0.01 &0.89$\pm$0.01 \\ \hline
				\textbf{0.3} & 0.89$\pm$0.01 &0.89$\pm$0.01 & 0.89$\pm$0.01 & 0.88$\pm$0.01 &0.88$\pm$0.01 &0.88$\pm$0.01 \\ \hline
				\textbf{0.4} &  0.88$\pm$0.01 &0.71$\pm$0.04 &0.89$\pm$0.01 & 0.87$\pm$0.01 & 0.69$\pm$0.03&0.88$\pm$0.01 \\ \hline
				\textbf{0.45} & 0.83$\pm$0.03 & 0.60$\pm$0.06&0.80$\pm$0.06 & 0.81$\pm$0.07 & 0.57$\pm$0.05& 0.81$\pm$0.07 \\ \hline
				\textbf{0.49} &0.46$\pm$0.16 &0.38$\pm$0.02 & 0.48$\pm$0.16 & 0.53$\pm$0.14 & 0.51$\pm$0.01& 0.54$\pm$0.15 \\ \hline
		\end{tabular}}
		\caption{Comparison of \ref{alg: eta_cost_SLN} and $l_{\alpha,usq}$ based regularized ERM's linear classifier on Accuracy and AM measure. The value of $\alpha = 0.3$ and $\gamma$ is tuned.}
		\label{tab: syn_cost_AC_AM}
	\end{table}
	\vspace{-1cm}
	\FloatBarrier
	\begin{table}[!htbp]
		\centering
		{ \scriptsize
			\begin{tabular}{|c|c|c|c|c|c|c|}
				\hline
				\textbf{} & \multicolumn{3}{c|}{\textbf{F measure}} & \multicolumn{3}{c|}{\textbf{Weighted Cost}} \\ \hline
				\textbf{} & \multicolumn{2}{c|}{\ref{alg: eta_cost_SLN}: $\tilde{\eta}$ from} & & \multicolumn{2}{c|}{\ref{alg: eta_cost_SLN}: $\tilde{\eta}$ from} &  \\ \hline
				\textbf{$\rho$} & \textbf{\begin{tabular}[c]{@{}c@{}} Link-func \\ with $l_{log}$ \end{tabular} } & \textbf{ \begin{tabular}[c]{@{}c@{}} LSPC \end{tabular} } & \textbf{\begin{tabular}[c]{@{}c@{}}$l_{\alpha,usq}$\\ RegERM\end{tabular}} & \textbf{\begin{tabular}[c]{@{}c@{}}  Link-func \\ with $l_{log}$\end{tabular}} & \textbf{\begin{tabular}[c]{@{}c@{}}  LSPC \end{tabular}} & \textbf{\begin{tabular}[c]{@{}c@{}}$l_{\alpha,usq}$\\ RegERM\end{tabular}} \\ \hline
				\textbf{0.1} &  0.85$\pm$0.02 &0.85$\pm$0.01 & 0.85$\pm$0.02 & 45.62$\pm$3.37 &40.16$\pm$3.61 &44.06$\pm$3.58 \\ \hline
				\textbf{0.2} & 0.84$\pm$0.01 &0.83$\pm$0.01 &0.84$\pm$0.01 & 60.40$\pm$4.14 &58.01$\pm$7.14 &58.09$\pm$4.75 \\ \hline
				\textbf{0.3} & 0.69$\pm$0.03 &0.61$\pm$0.01 &0.68$\pm$0.01 & 106.9$\pm$7.35 &109.77$\pm$11..24 &107.47$\pm$6.84 \\ \hline
				\textbf{0.4} & 0.53$\pm$0.01 &0.52$\pm$0.01 &0.53$\pm$0.01 & 187.41$\pm$6.45 &158.76$\pm$1.76 &190.10$\pm$6.20 \\ \hline
				\textbf{0.45} & 0.51$\pm$0.00 &0.51$\pm$0.00 &0.52$\pm$0.00 & 200.37$\pm$0.24 &159.55$\pm$0.0 &200.49$\pm$0.00 \\ \hline
				\textbf{0.49} & 0.51$\pm$0.00 &0.52$\pm$0.02 &0.52$\pm$0.00 & 200.49$\pm$2.84 & 159.55$\pm$0.0 & 200.49$\pm$0.00 \\ \hline
		\end{tabular}}
		\caption{Comparison of \ref{alg: eta_cost_SLN} and $l_{\alpha,usq}$ based regularized ERM's linear classifier on F and Weighted Cost measure. The value of $\alpha = 0.3$ and $\gamma$ is tuned.}
		\label{tab: syn_cost_F_WC}
	\end{table}
	
	\section{Various in-class probability estimation methods} \label{sec: eta_comaparative}
	In this section, we present a comparative study between 3 existing in-class probability $\eta$ estimation methods and a minimum squared deviation method. The later method is based on our idea that $\eta$ is a conditional expectation and hence a minimizer of a suitable expected squared loss. We first present the details of 3 existing $\eta$ estimation methods viz., Link function based method, LSPC, KLIEP and $k$-Nearest Neighbour ($k$-NN). Then, we present our method and various estimators based on KLIEP.
	
	\subsection{Link function based approach \cite{reid2010composite}} \label{sec: link_classifier_eta}
	This method is the most commonly used approach for getting the in-class probabilities from a given dataset. In this method, first a classifier is learnt from the data using a special class of loss functions and then a suitable transformation is used to get the $\eta$ estimates. This special class of loss functions is the class of strongly proper loss functions defined below:
	
	\begin{defn}[Strongly proper loss \cite{agarwal2013surrogate}]
		Let $c:\{\pm1\} \times [0,1] \mapsto \mathbb{R}_{+}$ be a binary Class probability estimation (CPE) loss and let $\lambda >0 $.Then, $c$ is $\lambda$-strongly proper if $\forall \eta,\hat{\eta} \in [0,1]$ 
		$$ C_c(\eta,\hat{\eta}) -  H_{c}(\eta)\geq \frac{\lambda}{2}(\eta-\hat{\eta})^2$$
		where $H_c(\eta) = \inf\limits_{{\eta^{\prime}} \in [0,1]} C_{c}(\eta,{\eta^{\prime}})$.
	\end{defn}
	
	Therefore, a loss $l: \{\pm 1\}\times \mathbb{R} \mapsto \mathbb{R}_+$ is said to be strongly proper composite if it can be written as 
	$$ l(y,\hat{y}) = c(y,\psi^{-1}(\hat{y}))$$
	for some strongly proper loss $c$ and and strictly increasing link function $\psi : [0,1]\mapsto \mathbb{R}$.
	
	Given a  strongly proper loss function $l$, one can obtain an optimal classifier $f^*(\mathbf{x})$ by minimizing $l$ based empirical risk minimization (ERM). Then, the in-class probability can be computed as $\eta_{l}(\mathbf{x}) = \psi^{-1}_l(f(x))$. Some examples of strongly proper composite loss functions along with their corresponding inverse link function $\psi^{-1}(\cdot)$ \cite{agarwal2013surrogate,zhang2004statistical} are as follows:
	\begin{enumerate}
		\item Logistic loss: $l_{log}(f(\mathbf{x}),y) = \log(1+e^{-yf(\mathbf{x})})/\log(2)$, $\psi^{-1}_{l_{log}}(f(\mathbf{x})) = \frac{1}{1+ e^{-f(\mathbf{x})}}$ 
		\item Squared loss: $l_{sq}(f(\mathbf{x}),y) = (1-yf(\mathbf{x}))^2$, $\psi^{-1}_{l_{sq}}(f(\mathbf{x})) = \frac{1+f(\mathbf{x})}{2}$ 
		\item Mod-squared loss: $l_{msq}(f(\mathbf{x}),y) = (\max\{0,(1-yf(\mathbf{x}))\})$, $\psi^{-1}_{l_{msq}}(f(\mathbf{x})) = \frac{1+T(f(\mathbf{x}))}{2}$ where $T(p)= \min\{\max\{p,-1\},1\}$ 
		\item Exponential loss: $l_{exp}(f(\mathbf{x}),y) = e^{-yf(\mathbf{x})}$, $\psi^{-1}_{l_{log}}(f(\mathbf{x})) = \frac{1}{1+ e^{-2f(\mathbf{x})}}$ 
	\end{enumerate}
	
	It is shown in \cite{agarwal2013surrogate} that if $l$ is a strongly proper loss then the $L_2$ distance between $\psi^{-1}_l(f(\mathbf{X}))$ and $\eta_l(\mathbf{X})$ can be upper bounded in terms of $l$-regret of $f$. Some special cases of this result are also presented in \cite{zhang2004statistical}, \cite{steinwart2007compare} (for logistic loss). In this method, for some loss functions like squared loss and mod-squared loss, one has to suitably truncate the inverse link function $\psi^{-1}_l$ so that it is a valid probability. 
	
	The quality of ${\eta}$ estimate will depend upon the quality of the classifier too. So, whether one uses linear or nonlinear (kernel)  classifiers is expected to make a difference. Also, in case of non-linear classifier, regularization parameter tuning is required.
	
	As hinge loss is not a strongly proper loss, one cannot use the above approach to get the $\eta(\mathbf{x})$. It was shown in \cite{platt2000} that one can get the in-class probability by using a sigmoid transformation and solving an optimization problem in 2 variables. However, in \cite{zhang2004statistical}, it is shown that SVM formulation doesn't lead to reliable probability estimates. 
	
	For the sake of convenience, a compact form of this scheme is available in \ref{alg: eta_via_classifier}. 
	\FloatBarrier
	\begin{algorithm}[H]
		\renewcommand{\thealgorithm}{Algorithm $\boldsymbol{(f,\hat{\eta})}$}
		\floatname{algorithm}{}
		\caption{$\hat{\eta}$ via classifier scheme} 
		\label{alg: eta_via_classifier}
		\begin{algorithmic}[1]
			\Statex \hspace{-0.44cm}\textbf{Input:} Training data $\tilde{S}_{tr} = \{(x_1,{y}_i)\}_{i=1}^{m_{tr}}$, test data $S_{te} = \{(x_i,y_i)\}_{i=1}^{m_{te}}$, strongly proper composite loss $l(f(x),{y})$, its inverse link $\psi^{-1}_{l}(.)$.		\Statex \hspace{-0.44cm}\textbf{Output:} $\hat{\eta}$ estimate of $\eta$ on test data $S_{te}$
			\State Compute $\hat{f}^*_{{D},l} = \arg\min\limits_{f\in \mathcal{H}_{lin}}\sum\limits_{i=1}^{m\_tr}l(f(x_i),{y}_i)$.
			\For{$i=1,2,\ldots,m_{te}$}
			\State Compute the estimate $\hat{{\eta}}(x_i)= \psi^{-1}_{l}(\hat{f}^*_{{D},l}(x_i))$.
			\EndFor
			
		\end{algorithmic}
	\end{algorithm}
	
	\subsection{Least Squared Probabilistic Classifier (LSPC) \cite{sugiyama2010superfast}} \label{sec: eta_LSPC}
	Least Squared Probabilistic Classifier (LSPC) \cite{sugiyama2010superfast} is a classifier which makes predictions based on the in-class probability $\eta$. This is a direct $\eta$ estimation method. Suppose the in-class probability is modeled as follows :
	$$ q(y|\mathbf{x},\boldsymbol{\alpha}) := \sum\limits_{l=1}^b\alpha_l\phi_l(\mathbf{x},y) = \boldsymbol{\alpha}^T\phi(\mathbf{x},y)$$
	where $\boldsymbol{\alpha}$, a $b$-dimensional vector is to be learned from the sample and $\phi(\mathbf{x},y)$ is non-negative $b$-dimensional vector of basis functions.
	The parameter $\boldsymbol{\alpha}$ is obtained so that the following squared error $J$ is minimized:
	\begin{eqnarray}
	J = \frac{1}{2}\sum\limits_{y=1}^{c}\int(q(y|\mathbf{x},\boldsymbol{\alpha}) - p(y|\mathbf{x}))^2p(\mathbf{x})d\mathbf{x}
	\end{eqnarray}
	Here, $c$ is the number of classes. Solving a regularized empirical version of the above optimization problem leads to the following closed form expression for the optimal value of $\boldsymbol{\alpha}$,
	$$ \hat{\boldsymbol{\alpha}} = (\hat{H} + \lambda I_b)^{-1}\hat{h}$$ 
	where  $\hat{H}=\frac{1}{m}\sum\limits_{y=1}^{c}\sum\limits_{i=1}^{m}\phi(\mathbf{x}_i,y_i)\phi(\mathbf{x}_i,y_i)^T	 $ and  $ \hat{h}= \frac{1}{m}\sum\limits_{i=1}^{m}\phi(\mathbf{x}_i,y_i)$ and $\lambda$ is th regularization parameter. The final solution is obtained by normalizing the solution to get,
	\begin{eqnarray} \label{eq: LSPC_final_sol}
	\hat{p}(y|\mathbf{x})= \frac{\max{(0,\hat{\boldsymbol{\alpha}}^T \phi(\mathbf{x},y))}}{\sum\limits_{y'=1}^{c}\max{(0,\hat{\boldsymbol{\alpha}}^T \phi(\mathbf{x},y'))}}
	\end{eqnarray}
	
	In particular, LSPC uses the Gaussian kernels centered at training points. Use of kernel functions, allows to learn the parameters in a class-wise manner.
	The kernel parameters are chosen based on the dataset and the regularization parameter is chosen by CV. It is claimed that LSPC can be viewed as an application of density ratio estimation method called unconstrained Least Squared Importance Fitting method (uLSIF \cite{kanamori2009least}) and hence the theoretical guarantees like consistency and stability will follow.  
	
	However, the consistency results of \cite{kanamori2009least} are under the following assumption. If $n_{nu}$ and $n_{de}$ are the data points from the numerator and denominator  density respectively, then $n_{nu} = \omega(n_{de}^2)$. This means that there exists $c$ such that $\vert cn_{de}^2 \vert < \vert n_{nu}\vert$. Now, the density ratio in case of in-class probability is $\eta(\mathbf{x}) = \frac{P(\mathbf{x},y)}{P(\mathbf{x})}$. Therefore, the number of sample points in numerator and denominator would be equal i.e., $n_{nu} = n_{de}$ and the assumption of uLSIF is not satisfied. 
	
	Centiles from the data are known to be good candidates for Gaussian kernel width. Pseudo code given in \ref{alg: centile_method} is borrowed from the code made available by \cite{sugiyama2010superfast}. It generates the set of kernel width $\sigma$ to be chosen from for tuning Gaussian kernel's parameter $\sigma$. 
	
	\begin{algorithm}[H]
		\renewcommand{\thealgorithm}{Pseudocode Centile method}
		\floatname{algorithm}{}
		\caption{Method for generating a set of $\sigma$ values for Gaussian kernels.} 
		\label{alg: centile_method}
		\begin{algorithmic}[1]
			\Statex \hspace{-0.44cm}\textbf{Input:} Sample points $\mathbf{x}_i, i=1,\ldots, m$, Centile Vector $C$ of size $k$ e.g. C = $\{10,20,\ldots,90\}$ implies k=9.		\Statex \hspace{-0.44cm}\textbf{Output:} $\Sigma_{set}$, a vector of size $k$ containing possible $\sigma$ values.
			\State Set number of pairs for which the distance is computed as $np = \min(2000,m)$ and $n_{cen}=k.$
			\State randOrder1 = $\pi(1:m)$ and randOrder2 = $\pi(1:m)$ where $\pi(\cdot)$ is a permutation of $(1, \cdots, m)$.
			\For{$i=1,2,\ldots,np$}
			\State dist[i] = $\Vert \mathbf{x}_{randOrder1[i]} - \mathbf{x}_{randOrder2[i]} \Vert_2^2$
			\EndFor
			\State Sort the vector dist.
			\For{$i=1,\ldots,n_{cen}$}
			\State out[i] = $dist\left[ np \times \frac{C[i]}{100}\right]$
			\EndFor
			\State Return $\Sigma_{set} = \sqrt[]{out}.$			
		\end{algorithmic}
	\end{algorithm}
	
	\subsection{Kullback-Leibler Importance Estimation Procedure (KLIEP) \cite{sugiyama2008direct}}
	The in-class probability for a class $y$ can be written as
	\begin{equation} \label{KLIEP_depen}
	P(Y=y|\mathbf{x}) = \frac{P(\mathbf{x},y)}{P(\mathbf{x})} = \frac{P(\mathbf{x}|Y=y)}{P(\mathbf{x})}P(Y=y)
	\end{equation}
	The authors in \cite{sugiyama2008direct} deal with the problem of estimating importance $w(\mathbf{x}):= \frac{p_{nu}(\mathbf{x})}{p_{de}(\mathbf{x})}$ using $\{\mathbf{x}_i^{de}\}_{i=1}^{n_{de}}$ and $\{\mathbf{x}_j^{nu}\}_{j=1}^{n_{nu}}$ from the probability density $p_{de}$ and $p_{nu}$ respectively. The estimated ratio is given a linear parametric form of $\hat{w}(\mathbf{x}) = \sum\limits_{l=1}^{b}\alpha_{l}\phi_l(\mathbf{x})$. Here, $\phi_{l}$ are known non negative basis functions and $\alpha_l$ are the parameters to be estimated. The numerator density can be estimated as $\hat{p}_{nu}(\mathbf{x}) = \hat{w}(\mathbf{x}) p_{de}(\mathbf{x})$. The parameters are determined by  minimizing the KL divergence between the true and estimated numerator density as follows:
	\begin{eqnarray}  \nonumber 
	KL[p_{nu}(\mathbf{x})\Vert\hat{p}_{nu}(\mathbf{x})] &=& \int p_{nu}(\mathbf{x})\log\frac{p_{nu}(\mathbf{x})}{\hat{w}(\mathbf{x})p_{de}(\mathbf{x})}d\mathbf{x}  \label{eq: KL_KLIEP}  \\
	&=& \int p_{nu}(\mathbf{x})\log\frac{p_{nu}(\mathbf{x})}{p_{de}(\mathbf{x})}d\mathbf{x} - \int p_{nu}(\mathbf{x})\log \hat{w}(\mathbf{x})d\mathbf{x} 
	\end{eqnarray}
	The first term in equation \eqref{eq: KL_KLIEP} is independent of the $\{\alpha_l\}_{l=1}^{b}$, and incorporating the fact that $\hat{p}_{nu}(\mathbf{x})$ is probability density, the following optimization problem is obtained:
	
	\begin{eqnarray} \label{eq: optProb_KLIEP}
	& & \max\limits_{\{\alpha_l\}_{l=1}^{b}}\left[ \sum\limits_{j=1}^{n_{nu}} \log \left( \sum\limits_{l=1}^{b}\alpha_l\phi_l(\mathbf{x}^{nu}_j) \right) \right] \\
	& &\text{ subject to } \sum\limits_{i=1}^{n_{de}}\sum\limits_{l=1}^{b}\alpha_l\phi_l(\mathbf{x}^{de}_i) = n_{de} ~~ \text{ and } ~~ \alpha_1,\ldots,\alpha_b \geq 0
	\end{eqnarray}
	This is a convex optimization problem and the authors have provided an algorithm based on gradient ascent to solve this problem. The solution is claimed to be sparse and hence leads to a reduction in computational complexity.
	
	\subsection{$k$-Nearest Neighbour method for $\eta$ estimation \cite{chen2018explaining}}
	In classical binary setup, $k$-NN works based on the labels of the neighbours. Firstly, an estimate of in-class probability is computed and then it is thresholded to make the label prediction. $$\hat{\eta}_{k-NN}(\mathbf{x}) = \frac{1}{k}\sum\limits_{i=1}^{k}Y_{(i)}(\mathbf{x})$$
	Here, $Y_{(i)}$'s are ordered w.r.t. some distance measure $\rho.$ Choice of $k$ and $\rho$ are the two levers available while learning. Also, there is a trade-off between bias and variance.
	
	\subsection{Minimization of Squared Deviation method} \label{sec: eta_min_sq}
	This is a direct $\eta$ estimation method, i.e., it doesn't involve learning a classifier. It is known that the expected squared loss is minimized by a suitable conditional expectation. In particular, given two r.v.s $P,Q$ such that $Q$ is integrable, $Z^*=E[Q|P]$ is the minimizer of $E[(Q-Z)^2]$ a.s. among all random variables $Z$, i.e.,
	$$ \forall h(P), ~~~~ E[(Q-Z^*)^2] \leq  E[(Q-h(P))^2]$$
	where $h$ belongs to the class of all measurable functions.
	Given $(\mathbf{X},Y), ~~\mathbf{X} \in \mathbb{R}^n, ~Y\in \{-1,1\}$ and interpretation of $\eta(\mathbf{x}) = P(Y=1|\mathbf{X})$ as the conditional expectation, we are interested in knowing what is the corresponding $P,Q$ in the binary classification framework.
	\begin{eqnarray*}
		E[Q|P] &=& P(Y=1|\mathbf{X})~~a.s. \\
		&=& E[\mathbf{1}_{\{Y=1\}}|\mathbf{X}] ~~ a.s. 
	\end{eqnarray*}
	Therefore, we have $P = \mathbf{X}$ and $Q= \mathbf{1}_{\{Y=1\}}$ and the minimization problem is 
	\begin{equation} \label{eq: opt_prob}
	\min\limits_{Z}E_{\mathbf{X}\times Y}[(\mathbf{1}_{\{Y=1\}}-Z)^2]
	\end{equation}
	From above, $Z^* = \eta = P[Y = 1|\mathbf{X}]~a.s. $ and hence $Z^*\in [0,1] ~a.s. $ as a function of $\mathbf{X}$. This has some resemblance to the squared loss for CPE which is $l_{1} = (1-\hat{\eta})^2, ~~l_{-1}=\hat{\eta}^2.$ The advantage here is that the minimizer is the conditional expectation which in our case is the conditional probability. This need not be true for the log-loss ( $l_{1}(\hat{\eta}) = log(\hat{\eta}), ~~l_{-1}(\hat{\eta}) =log(1-\hat{\eta})$ ),  etc.
	
	Consider the class of positive valued functions with an element denoted by $\phi_l(\cdot,\cdot)$. Let $\phi_l$ be called a basis function. Later, we will be specializing this class to be the class of Gaussian kernels. Now, consider the following functional form of $Z$ in equation \eqref{eq: opt_prob}, 
	\begin{equation} \label{Z_form}
	Z = \sum\limits_{l=1}^{b}\alpha_l\phi_l(\mathbf{X},Y) = \boldsymbol{\alpha}^T\phi(\mathbf{X},{Y})
	\end{equation}
	where $\boldsymbol{\alpha} \in \mathbb{R}^b$ is the variable to be optimized over and $\phi(\mathbf{X},Y) \geq 0$ is the vector of  basis functions $\phi_{l}, ~~l= 1,\ldots,b$, i.e., $Z$ is a weighted combination of the positive basis functions. An $L_2$ regularized version of equation \eqref{eq: opt_prob} can be rewritten as follows:
	\begin{equation} \label{eq: opt_prob_reg}
	\min\limits_{\boldsymbol{\alpha} \in \mathbb{R}^b}E_{\mathbf{X}\times Y}[(\mathbf{1}_{\{Y=1\}}-\boldsymbol{\alpha}^T\phi(\mathbf{X},Y))^2] + \lambda\boldsymbol{\alpha}^T\boldsymbol{\alpha}
	\end{equation}
	Here, $\lambda >0$ is the regularization parameter which is usually determined using CV.
	If one considers the multi-class classification problem with $c$ classes, then the in-class probability for each class is defined as follows:
	\begin{equation}
	\eta_{y}(\mathbf{x}) = P(Y=y|\mathbf{X}=\mathbf{x}), ~~~ y\in \{1,2,\ldots,c\}
	\end{equation}
	The idea that expected squared loss is minimized by a suitable conditional expectation can be extended to multi class problems too by formulating following optimization problems, one for each class $y \in \{1,2,\ldots,c\}$, to get the $\eta_{y}(\mathbf{x})$'s.
	\begin{equation}
	\min\limits_{\boldsymbol{\alpha}^{(y)}}E_{\mathbf{X}\times Y}[(\mathbf{1}_{\{Y=y\}}-\boldsymbol{\alpha}^{(y)T}\phi(\mathbf{X},Y))^2] + \lambda\boldsymbol{\alpha}^{(y)T}\boldsymbol{\alpha}^{(y)}, ~~~ \lambda >0
	\end{equation}
	Solving for $\boldsymbol{\alpha}^{(y)}$ will lead to the following system of equations,
	\begin{equation}
	\boldsymbol{\alpha}^{(y)*}_{reg} = [E[\phi(\mathbf{X},Y)\phi^T(\mathbf{X},Y)] + \lambda I_{b}]^{-1}E[\phi(\mathbf{X},Y)\mathbf{1}_{\{Y=y\}}]
	\end{equation}
	Given the multi-class data $(\mathbf{x}_i,y_i)$, $i \in \{1,2,\ldots,m\}, \mathbf{x}_i \in \mathbf{R}^n, ~ y \in \{1,\ldots,c\}$, we have the sample versions of the solution as follows:
	\begin{eqnarray}
	\hat{\boldsymbol{\alpha}}^{(y)}_{reg} &=& \left[\underbrace{\frac{1}{m}\sum\limits_{i=1}^m\phi(\mathbf{x}_i,y_i)\phi^T(\mathbf{x}_i,y_i)}_{\hat{A}} + \lambda \mathbf{1}_b\right]^{-1}\left[\underbrace{\frac{1}{m}\sum\limits_{i\in m_y}\phi(\mathbf{x}_i,y_i)\mathbf{1}_{ \{y_i=y\} }}_{\hat{a}^{(y)}}\right] \\
	&=& [\hat{A}+\lambda\mathbf{1}_b]^{-1}\hat{\mathbf{a}}^{(y)}
	\end{eqnarray}
	where $m_y$ is the index of examples with label $y$. One important question here is to decide about the nature of basis functions $\phi$. Kernel function $K(\mathbf{x},\mathbf{x}^{\prime})$ comes as natural choice for the basis functions due to the information they carry. Using these kernels, the elements of $\hat{A}$ and $\hat{\mathbf{a}}^{(y)}$ can be written as follows:
	\begin{eqnarray*}
		\hat{A}_{l,l'} &=& \frac{1}{m}\sum\limits_{i=1}^{m}K(\mathbf{x}_i,\mathbf{x}_l)K(\mathbf{x}_{i},\mathbf{x}_{l'}) \\
		\hat{\mathbf{a}}^{(y)} &=& \frac{1}{m}\sum\limits_{i \in m_y}K(\mathbf{x}_i,\mathbf{x}_l)
	\end{eqnarray*}
	Therefore, here we would get $c$ sets of $\hat{\boldsymbol{\alpha}}^{(y)}$, one from each system of equation. To get the final estimates of the in-class probabilities, one has to do the max and the normalization as follows:
	\begin{equation} \label{eq: final_eta_multiclass}
	\hat{\eta}_{y}(x) = \frac{\max\{0, \sum\limits_{l=1}^{m}\hat{\boldsymbol{\alpha}}^{(y)}K(\mathbf{x},\mathbf{x}_l)\}}{\sum\limits_{y'=1}^{c}\max\{0, \sum\limits_{l=1}^{m}\hat{\boldsymbol{\alpha}}^{(y')}K(\mathbf{x},\mathbf{x}_l)\}} ~~~ \forall y \in \{1,2,\ldots,c\}
	\end{equation}
	In-class probability for a binary classification problem can be written as a special case of equation \eqref{eq: final_eta_multiclass}.
	
	
	
	\begin{remark}
		It is not necessary that the regularized expected squared loss is minimized by conditional expectation. Mathematically,
		\begin{equation}
		\min\limits_{\boldsymbol{\alpha} \in \mathbf{R}^b}E[(\mathbf{1}_{\{Y=1\}}-Z)^2] + \lambda \boldsymbol{\alpha}^T\boldsymbol{\alpha}
		\end{equation}
		where $Z = \boldsymbol{\alpha}^T \phi(\mathbf{X},Y)$. Here, it is not necessary that $Z^* = \eta= E[\mathbf{1}_{\{Y=1\}}|\mathbf{X}]$. So, there would be an approximation error if one tries to use minimizer of regularized expected squared loss to estimate in-class probability. Let us denote the estimates of in-class probability from regularized and unregularized version by \eqref{eq: opt_prob}  as $\hat{\eta}_{reg}$ and $\hat{\eta}$ respectively. We expect $\hat{\eta}$ to have model explaining properties and $\hat{\eta}_{reg}$ to have good prediction properties due to regularization and avoidance of over-fitting. 
	\end{remark}
	
	\begin{remark}
		It can be observed that the solution in equation \eqref{eq: final_eta_multiclass} and \eqref{eq: LSPC_final_sol} are same even though the initial objective is quite different. Therfore, in the experiments, we denote the results by the name of LSPC.
	\end{remark}	
	
	\subsection{Various estimators based on KLIEP} \label{sec: eta_KLIEP}
	We use KLIEP to estimate $\frac{P(\mathbf{x}|Y=y)}{P(\mathbf{x})}$, ratio of two continuous densities. One requirement in this method is that the ratio is such that its product with estimated class marginal, $\hat{\pi} =  \hat{P}(Y=y)$ is a valid probability i.e., $\hat{P}(Y=y|\mathbf{x}) < 1.$ This condition need not be true in general as we observed in our experiments. 
	
	For choosing the number of basis functions $b$, the authors in \cite{sugiyama2008direct} suggest to use $b = \min(100, n_{nu})$. In our case, $n_{nu}$ would be number of points from class $y$. As seen in equation \eqref{KLIEP_depen}, one can either use KLIEP with positive class points in the numerator and get $\hat{{\eta}}$ or use KLIEP twice, one with positive class in the numerator and other with negative class in the numerator and get  $\hat{{\eta}}$.
	
	Based on these observations, we have come up with following estimates of $\eta$ using KLIEP. 
	\begin{enumerate}
		\item $\mathbf{\hat{\eta}_{pos}:}$ This estimator is obtained by implementing KLIEP with positive class points in the numerator and obtaining the ratio $\hat{w}_{pos}$. That is, the ratio of density to be estimated is $\frac{P(\mathbf{x}|Y=1)}{P(\mathbf{X})}.$
		\begin{equation*}
		\hat{\eta}_{pos}(\mathbf{x}) = \hat{w}_{pos}(\mathbf{x})\hat{\pi}
		\end{equation*}
		This estimator of $\eta$ integrates to 1 but there is no guarantee that it will always be less than 1.
		\item $\mathbf{\hat{\eta}_{neg}:}$ This estimator is obtained by implementing KLIEP with negative class points in the numerator and obtaining the ratio $\hat{w}_{neg}$. That is, the ratio of density to be estimated is $\frac{P(\mathbf{x}|Y=-1)}{P(\mathbf{X})}.$
		\begin{equation*}
		\hat{\eta}_{neg}(\mathbf{x}) = 1 - \hat{w}_{neg}(\mathbf{x})(1-\hat{\pi})
		\end{equation*}
		This estimator of $\eta$ integrates to 1 but it is prone to being less than 0.
		\item $\mathbf{\hat{\eta}_{norm}:}$ This estimator is obtained by implementing two KLIEPs, one with positive class in the numerator and other with negative class in the numerator. Then, the estimates are combined as follows:
		\begin{equation*}
		\hat{\eta}_{norm} = \frac{\hat{\eta}_{pos}}{\hat{\eta}_{pos}+\hat{\eta}_{neg}}
		\end{equation*}
		This $\eta$ estimate is a valid density as it integrates to 1 and always lies in the interval $[0,1].$ 
		\item $\mathbf{\hat{\eta}_{pos,s}:}$ This is a scaled version of $\hat{\eta}_{pos}$. Let $\hat{\eta}_{max,p}$ be the maximum value of $\hat{\eta}_{pos}$ in the training data. Then,
		\begin{equation*}
		\hat{\eta}_{pos,s} = \frac{\hat{\eta}_{pos}}{\hat{\eta}_{max,p}}
		\end{equation*}
		This estimator will integrate to 1 and will be in the interval $[0,1]$ for training points but for test points it can cross 1.
		\item $\mathbf{\hat{\eta}_{neg,s}:}$ This is a scaled version of $\hat{\eta}_{neg}$. Let $\hat{\eta}_{max,n}$ be the maximum value of $\hat{\eta}_{neg}$ in the training data. Then,
		\begin{equation*}
		\hat{\eta}_{neg,s} = \frac{\hat{\eta}_{neg}}{\hat{\eta}_{max,n}}
		\end{equation*}
		This estimator will integrate to 1 and will be in the interval $[0,1]$ for training points but for test points it can take values less than 0.
		\item $\mathbf{\hat{\eta}_{norm,s}:}$ This is obtained by normalizing the estimators $\hat{\eta}_{pos,s}$ and $\hat{\eta}_{neg,s}$ as follows:
		\begin{equation*}
		\hat{\eta}_{norm,s} = \frac{\hat{\eta}_{pos,s}}{\hat{\eta}_{pos,s}+\hat{\eta}_{neg,s}}
		\end{equation*}
		This is a valid probability density estimate as it integrates to 1 and lies in $[0,1].$
		\item $\mathbf{\hat{\eta}_{Cnorm,s}:}$ This is also a normalized estimator but the normalizing is different from $\hat{\eta}_{norm,s}$. It is given as below:
		\begin{equation*}
		\hat{\eta}_{Cnorm,s} = \frac{\hat{\eta}_{neg,s}}{\hat{\eta}_{neg,s}+\hat{\eta}_{pos,s}}
		\end{equation*}
		This is a valid probability density estimate as it integrates to 1 and lies in $[0,1].$
		\item $\mathbf{\hat{\eta}_{avg,s}:}$ This is the averaged version of scaled estimators $\hat{\eta}_{pos,s}$ and $\hat{\eta}_{neg,s}$ and given as follows:
		\begin{equation*}
		\hat{\eta}_{avg,n} = \frac{\hat{\eta}_{pos,s} + \hat{\eta}_{neg,s}}{2} 
		\end{equation*}
		This estimator integrates to 1 but for test data points it is prone to lying outside the interval $[0,1]$.
	\end{enumerate}
	The estimators which does not lie inside the interval $[0,1]$ can be truncated to lie inside the interval. However, there performance might be affected. A compact form of the above estimators is available in Table \ref{tab: KLIEP_diff_est}. 
	\begin{table}[H]
		\centering
		\begin{tabular}{|c|c|}
			\hline
			\textbf{$\hat{\eta}$} & \textbf{$\hat{\eta}(\mathbf{x})\in [0,1]$ for $\mathbf{x}$ in test set} \\ \hline
			$\hat{\eta}_{pos}(\mathbf{x}) = \hat{w}_{pos}(\mathbf{x})\hat{\pi}$ & May not  \\ \hline
			$\hat{\eta}_{neg}(\mathbf{x}) = 1 - \hat{w}_{neg}(\mathbf{x})(1-\hat{\pi})$& May not \\ \hline
			$\hat{\eta}_{norm} = \frac{\hat{\eta}_{pos}}{\hat{\eta}_{pos}+\hat{\eta}_{neg}}$& Yes \\ \hline
			$\hat{\eta}_{pos,s} = \frac{\hat{\eta}_{pos}}{\hat{\eta}_{max,p}}$ & May not \\ \hline
			$\hat{\eta}_{neg,s} = \frac{\hat{\eta}_{neg}}{\hat{\eta}_{max,n}}$& May not \\ \hline
			$\hat{\eta}_{norm,s} = \frac{\hat{\eta}_{pos,s}}{\hat{\eta}_{pos,s}+\hat{\eta}_{neg,s}}$& Yes \\  \hline
			$\hat{\eta}_{Cnorm,s} = \frac{\hat{\eta}_{neg,s}}{\hat{\eta}_{neg,s}+\hat{\eta}_{pos,s}}$& Yes \\  \hline
			$\hat{\eta}_{avg,n} = \frac{\hat{\eta}_{pos,s} + \hat{\eta}_{neg,s}}{2} $& May not \\ \hline
		\end{tabular}
		\caption{Details about various KLIEP based $\eta$ estimates and whether they are valid probability or not. We only check for $\hat{\eta} \in [0,1]$ as all the estimates integrate to 1.}
		\label{tab: KLIEP_diff_est}
	\end{table}
	The comparison between these estimators w.r.t. various measures is done in Section \ref{sec: Exp_eta}.  
	In our implementation, we have used Gaussian kernels as the basis functions with centres as the sample points from the numerator density (positive/negative class). 
	The kernel width $\sigma$ is tuned by KLIEP's inbuilt CV procedure where the criterion is maximum objective value of the KL based optimization problem. The possible candidates for kernel width $\sigma$ to be used in KLIEP's CV procedure were generated through Centile method described in \ref{alg: centile_method}.

	\subsection{An experimental comparative study of various in-class probability estimation methods} \label{sec: Exp_eta}
	
	In this section, we compare the performance of the $\eta$ estimation methods described in Section \ref{sec: link_classifier_eta}, \ref{sec: eta_LSPC} and \ref{sec: eta_KLIEP}. We implemented \ref{alg: eta_via_classifier} with squared loss, logistic loss and modified squared loss and LSPC, KLIEP and $k$-NN on 3 synthetic datasets so that we can comment on the quality of $\eta$ estimates. In case of $k$-NN, the value of number of neighbours $k$ to be used is selected by cross validating w.r.t. the measure to be evaluated on. The dataset generation scheme is adapted from \cite{efron1997improvements}. We consider 9 performance measures out of which first 5 (MSE, RMSE, MAD, MD, KL) are purely for $\eta$ estimation, next 2 (Acc, BS) are for the scenario when these estimates are used in label prediction and the last 2 measure the algorithms estimation capability at the boundary i.e., maximum and minimum. For an estimate $\hat{\eta}$, when the train data is $\{(\mathbf{x}_i,y_i)\}_{i=1}^{m_{tr}}$ and test data set is $\{(\mathbf{x}_i,y_i)\}_{i=1}^{m_{te}}$, the details about all these measures are given below:
	\begin{enumerate}
		\item Mean squared error (MSE): $MSE = \frac{1}{m_{te}}\sum\limits_{i=1}^{m_{te}}(\hat{\eta}(\mathbf{x}_{i}) - \eta(\mathbf{x}_i))^2$
		\item Root mean squared error (RMSE): $RMSE = \sqrt[]{MSE}$
		\item Mean absolute deviation (MAD): $MAD = \frac{1}{m_{te}}\sum\limits_{i=1}^{m_{te}}\vert \hat{\eta}(\mathbf{x}_{i}) - \eta(\mathbf{x}_i)\vert$
		\item Mean deviation (MD): $MD = \frac{1}{m_{te}}\sum\limits_{i=1}^{m_{te}} (\hat{\eta}(\mathbf{x}_{i}) - \eta(\mathbf{x}_i))$
		\item Averaged Kullback-Leibler divergence (KL): Since $\eta$ is a density, we can compute the KL between its estimate and true value as follows:
		\begin{eqnarray*}
			KL &=& \frac{1}{m}\sum\limits_{i=1}^{m_{te}}KL[\eta(\mathbf{x}_{i})\Vert\hat{\eta}(\mathbf{x}_{i})] \\
			&=& \frac{1}{m}\sum\limits_{i=1}^{m_{te}} \eta(\mathbf{x}_i)\log\frac{\eta(\mathbf{x}_{i})}{\hat{\eta}(\mathbf{x}_{i})} + (1-\eta(\mathbf{x}_i))\log\frac{1-\eta(\mathbf{x}_{i})}{1-\hat{\eta}(\mathbf{x}_{i})}
		\end{eqnarray*}
		\item Accuracy (Acc): $Acc = \frac{1}{m_{te}}\sum\limits_{i=1}^{m_{te}}\mathbf{1}_{[(2\hat{\eta}(\mathbf{x}_{i}) -1)y_{i} \leq 0]}$
		\item Brier Score (BS): Brier score is a proper score function which measures the accuracy of probabilistic predictions and is defined below:
		\begin{equation*}
		BS = \frac{1}{m_{te}}\sum\limits_{i=1}^{m_{te}}(\hat{\eta}(\mathbf{x}_i) -y_{i})^2
		\end{equation*}
		\item DiffMax: In some applications, one requires how good the estimates are at the boundary. In such cases, we have to compare the maximum value of true $\eta$ to the maximum value of $\hat{\eta}$ on the training set. This measure plays an important role when one is interested in estimating noise rates as some noise estimating schemes use corrupted ${\eta}$ values \cite{liu2016classification}. Therefore,
		$$ DiffMax : = \max_{i=1,\ldots,m_{tr}}\hat{\eta}(\mathbf{x}_{i}) - \max_{i=1,\ldots,m_{tr}}\eta(\mathbf{x}_i)$$
		\item DiffMin: Similar to DiffMax, this measure is also a deciding factor for quality of noise rates. 
		$$ DiffMin : = \min_{i=1,\ldots,m_{tr}}\hat{\eta}(\mathbf{x}_{i}) - \min_{i=1,\ldots,m_{tr}}\eta(\mathbf{x}_i)$$
	\end{enumerate}
	The synthetic dataset is partitioned into training and test set with $80$-$20$ split. The first 7 performance measures are computed on the test set and the last two on the training set. To make the comparisons more robust, this process is repeated for 10 trials. Finally, the reported values are performance measures averaged over the 10 trials along with the standards deviation.
	\subsubsection{Synthetic dataset 1 ($Syn_{2D}$)} 
	This is a 2 dimensional dataset. We first generate 1000 binary class labels $Y$ from Bernoulli distribution with parameter $p=0.5$. Then a 2-dimensional feature vector $\mathbf{X}$ for each label is drawn from two different Gaussian distributions: $\mathbf{X}|Y =1 \sim N([1.2,0],\Sigma)$ $\&$ $\mathbf{X}|Y =-1 \sim N([-1.2,0],\Sigma)$ where $\Sigma = [[1, 0.4];[0.4,1]]$. Table \ref{tab: eta_syn2D_cls_LSPC}, \ref{tab: eta_syn2D_KLIEP_123} and \ref{tab: eta_syn2D_KLIEP_45678} present the values of various measures for this dataset.
	
	\begin{table}[htbp!]
		\centering
		\bgroup
		\def\arraystretch{0.85}%
		{\footnotesize
			\begin{tabular}{|c|c|c|c|c|c|}
				\hline
				&\multicolumn{5}{c|}{\textbf{ ${\eta}$ estimated from }}      \\ \hline        
				\textbf{\begin{tabular}[c]{@{}c@{}}Performance \\ measures \end{tabular}} & \textbf{\begin{tabular}[c]{@{}c@{}}\ref{alg: eta_via_classifier} \\ with $l_{sq}$\end{tabular}} & \textbf{\begin{tabular}[c]{@{}c@{}}\ref{alg: eta_via_classifier}\\ with $l_{log}$\end{tabular}} & \textbf{\begin{tabular}[c]{@{}c@{}}\ref{alg: eta_via_classifier}\\ with $l_{msq}$\end{tabular}} & \textbf{LSPC} & \textbf{k-NN} \\ \hline
				\textbf{MSE} & 0.02$\pm$0.002 & 0.0008$\pm$0.0007 & 0.001$\pm$0.0003 & 0.003$\pm$0.002 & 0.013  $\pm$  0.003\\ \hline
				\textbf{RMSE} & 0.144$\pm$0.007 & 0.025$\pm$0.012 & 0.034$\pm$0.004 & 0.054$\pm$0.02 & 0.114  $\pm$  0.014\\ \hline
				\textbf{MAD} & 0.123$\pm$0.005 & 0.018$\pm$0.010 & 0.025$\pm$0.002 & 0.037$\pm$0.015 & 0.066  $\pm$  0.008\\ \hline
				\textbf{MD} & -0.0032$\pm$0.011 & -0.0076$\pm$0.001 & -0.007$\pm$0.002 & -0.007$\pm$0.004 & -0.007 $\pm$  0.005\\ \hline
				\textbf{KL} & NA & 0.004$\pm$0.004 & NA & NA & NA\\ \hline
				\textbf{Acc} & 0.911$\pm$0.016 & 0.91$\pm$0.017 & 0.91$\pm$0.018 & 0.91$\pm$0.01 & 0.906  $\pm$  0.016\\ \hline
				\textbf{BS} & 0.0804$\pm$0.004 & 0.0651$\pm$0.010 & 0.0657$\pm$0.007 & 0.06809$\pm$0.0068 & 0.077 $\pm$  0.010\\ \hline
				\textbf{DiffMax} & 0.531$\pm$0.04 & 0.0 $\pm$0.0 & 0.0 $\pm$0.0 & 0.0 $\pm$0.0 &  0.0 $\pm$0.0\\ \hline
				\textbf{DiffMin} & -0.497$\pm$0.042 & 0.0 $\pm$0.0 & 0.0 $\pm$0.0 & 0.0 $\pm$0.0 &  0.0 $\pm$0.0\\ \hline
			\end{tabular}
		}\egroup
		\caption{Performance of estimates obtained by \ref{alg: eta_via_classifier} with squared loss, logistic loss and modified squared loss and LSPC and $k$-NN on $Syn_{2D}$ dataset. The cells with NA means that some estimate was 0 and the KL couldn't be defined. Cells with $0.0$ entry means that the value is less than  $1 \times 10^{-5}$.}
		\label{tab: eta_syn2D_cls_LSPC}
	\end{table}

	\begin{table}[htbp!]
		\centering
		\bgroup
		\def\arraystretch{0.85}%
		{\setlength{\tabcolsep}{0.25em}	
			{\footnotesize
				\begin{tabular}{|c|c|c|c|}
					\hline
					&\multicolumn{3}{c|}{\textbf{ KLIEP based estimator used }}      \\ \hline   
					\textbf{\begin{tabular}[c]{@{}c@{}}Performance \\ measures \end{tabular}} & $\mathbf{\hat{\eta}_{pos}}$ & $\mathbf{\hat{\eta}_{neg}}$ & $\mathbf{\hat{\eta}_{norm}}$ \\ \hline
					\textbf{MSE} & 0.012$\pm$ 0.003 & 0.014$\pm$0.007 & 0.001$\pm$ 0.00 \\ \hline
					\textbf{RMSE} & 0.108$\pm$ 0.015 & 0.118$\pm$0.027 & 0.034$\pm$0.009 \\ \hline
					\textbf{MAD} & 0.062$\pm$ 0.006 & 0.069$\pm$0.016 & 0.021$\pm$0.005  \\ \hline
					\textbf{MD} & -0.006$\pm$0.008 & --0.006$\pm$0.013 & -0.003$\pm$0.005 \\ \hline
					\textbf{KL} & NA & NA & 0.005$\pm$ 0.002  \\ \hline
					\textbf{Acc} & 0.89 $\pm$ 0.02 & 0.90 $\pm$ 0.01 & 0.91 $\pm$ 0.02 \\ \hline
					\textbf{BS} & 0.075 $\pm$ 0.011 & 0.074$\pm$ 0.006 & 0.065$\pm$0.006 \\ \hline
					\textbf{DiffMax} & 0.16$\pm$0.02 & 0.0 $\pm$0.0 & 0.0 $\pm$0.0 \\ \hline
					\textbf{DiffMin} & 0.0 $\pm$0.0 & -0.16 $\pm$0.06 & 0.0 $\pm$0.0 \\ \hline
		\end{tabular}}}
		\egroup
		\caption{Performance of $\hat{\eta}_{pos}, \hat{\eta}_{neg}, \hat{\eta}_{norm}$, KLIEP based estimates on $Syn_{2D}$ dataset. The cells with NA means that some estimate was $0$ and the KL couldn't be defined. Cells with $0.0$ entry means that the value is less than  $1 \times 10^{-5}$.}
		\label{tab: eta_syn2D_KLIEP_123}
	\end{table}
	
	\begin{table}[htbp!]
		\centering
		\bgroup
		\def\arraystretch{0.85}%
		{\setlength{\tabcolsep}{0.25em}	
			{\footnotesize
				\begin{tabular}{|c|c|c|c|c|c|}
					\hline
					&\multicolumn{5}{c|}{\textbf{ KLIEP based estimator used }}      \\ \hline   
					\textbf{\begin{tabular}[c]{@{}c@{}}Performance \\ measures \end{tabular}} & $\mathbf{\hat{\eta}_{pos,s}}$ & $\mathbf{\hat{\eta}_{neg,s}}$ & $\mathbf{\hat{\eta}_{norm,s}}$ & $\mathbf{\hat{\eta}_{Cnorm,s}}$ & $\mathbf{\hat{\eta}_{avg,s}}$ \\ \hline
					\textbf{MSE} & 0.009$\pm$0.004 & 0.019$\pm$0.01 & 0.001$\pm$0.00 & 0.178$\pm$0.004 & 0.009$\pm$0.002 \\ \hline
					\textbf{RMSE} &  0.096$\pm$0.019 & 0.133$\pm$0.035 & 0.036$\pm$0.009 & 0.133$\pm$0.013 & 0.097$\pm$0.01 \\ \hline
					\textbf{MAD} &  0.095$\pm$0.008 & 0.072$\pm$0.027 & 0.021$\pm$0.005 & 0.105$\pm$0.014 & 0.072$\pm$0.01 \\ \hline
					\textbf{MD} &  -0.08$\pm$0.008 & 0.06$\pm$0.027 & -0.003$\pm$0.007 & -0.019$\pm$0.018 & -0.012$\pm$0.014 \\ \hline
					\textbf{KL} &  0.073 $\pm$0.002 & NA & 0.005 $\pm$ 0.002 & NA & 0.047$\pm$0.008 \\ \hline
					\textbf{Acc} &  0.87 $\pm$ 0.02 & 0.89 $\pm$ 0.02 & 0.90 $\pm$ 0.01 & 0.90 $\pm$0.01 & 0.90 $\pm$ 0.02 \\ \hline
					\textbf{BS} &  0.086$\pm$0.010 & 0.084$\pm$0.009 & 0.066$\pm$0.006 & 0.083$\pm$0.006 & 0.0748$\pm$0.005 \\ \hline
					\textbf{DiffMax} &  0.0 $\pm$0.0 & 0.0 $\pm$0.0 & 0.0 $\pm$0.0 & 0.0 $\pm$0.0 & -0.003$\pm$0.00 \\ \hline
					\textbf{DiffMin} &  0.0 $\pm$0.0 & 0.0 $\pm$0.0 & 0.0 $\pm$0.0 & 0.0 $\pm$0.0 & -0.005$\pm$0.004 \\ \hline
		\end{tabular}}}
		\egroup
		\caption{Performance of $\hat{\eta}_{pos,s},\hat{\eta}_{neg,s},\hat{\eta}_{norm,s},\hat{\eta}_{Cnorm,s},\hat{\eta}_{avg,s}$, KLIEP based estimates on $Syn_{2D}$ dataset. The cells with NA means that some estimate was $0$ and the KL couldn't be defined. Cells with $0.0$ entry means that the value is less than  $1 \times 10^{-5}$.}
		\label{tab: eta_syn2D_KLIEP_45678}
	\end{table}
	
	\subsubsection{Synthetic dataset 2 ($Syn_{3D}$)}
	This is a 3 dimensional dataset. We first generate 1000 binary class labels $Y$ from Bernoulli distribution with parameter $p=0.5$. Then a 3-dimensional feature vector $\mathbf{X}$ for each label is drawn from two different Gaussian distributions: $\mathbf{X}|Y =1 \sim N([1.3,0,0],\Sigma)$ $\&$ $\mathbf{X}|Y =-1 \sim N([-1.3,0,0],\Sigma)$ where $\Sigma = [[1, 0.1,0.1];[0.1,1,0.1];[0.1,0.1,1]]$. Table \ref{tab: eta_syn3D_cls_LSPC}, \ref{tab: eta_syn3D_KLIEP_123} and \ref{tab: eta_syn3D_KLIEP_45678} present the values of various measures for this dataset.
	
	\begin{table}[htbp!]
		\centering
		\bgroup
		\def\arraystretch{0.85}%
		{\footnotesize
			\begin{tabular}{|c|c|c|c|c|c|}
				\hline
				&\multicolumn{5}{c|}{\textbf{ ${\eta}$ estimated from }}      \\ \hline        
				\textbf{\begin{tabular}[c]{@{}c@{}}Performance \\ measures \end{tabular}} & \textbf{\begin{tabular}[c]{@{}c@{}}\ref{alg: eta_via_classifier} \\ with $l_{sq}$\end{tabular}} & \textbf{\begin{tabular}[c]{@{}c@{}}\ref{alg: eta_via_classifier}\\ with $l_{log}$\end{tabular}} & \textbf{\begin{tabular}[c]{@{}c@{}}\ref{alg: eta_via_classifier}\\ with $l_{msq}$\end{tabular}} & \textbf{LSPC} & \textbf{k-NN} \\ \hline
				\textbf{MSE} & 0.02$\pm$0.001 & 0.0008$\pm$0.0007 & 0.001$\pm$0.0003 & 0.003$\pm$0.002 & 0.0108 $\pm$  0.0022\\ \hline
				\textbf{RMSE} & 0.145$\pm$0.007 & 0.026$\pm$0.007 & 0.035$\pm$0.003 & 0.064$\pm$0.003 & 0.103  $\pm$  0.0102 \\ \hline
				\textbf{MAD} & 0.126$\pm$0.003 & 0.018$\pm$0.007 & 0.025$\pm$0.002 & 0.045$\pm$0.015 & 0.0612  $\pm$ 0.005\\ \hline
				\textbf{MD} & -0.016$\pm$0.008 & -0.0078$\pm$0.002 & -0.006$\pm$0.0036 & -0.009$\pm$0.005 & -0.008  $\pm$  0.007\\ \hline
				\textbf{KL} & NA & 0.004$\pm$0.003 & NA & NA & NA\\ \hline
				\textbf{Acc} & 0.92$\pm$0.014 & 0.923$\pm$0.015 & 0.924$\pm$0.014 & 0.923$\pm$0.016 & 0.929  $\pm$  0.0197\\ \hline
				\textbf{BS} & 0.0820$\pm$0.006 & 0.0638$\pm$0.008 & 0.0663$\pm$0.008 & 0.0692$\pm$0.008 & 0.0667  $\pm$  0.0127\\ \hline
				\textbf{DiffMax} & 0.628$\pm$0.01 & 0.0 $\pm$0.0 & 0.0 $\pm$0.0 & 0.0 $\pm$0.0 & 0.0 $\pm$0.0\\ \hline
				\textbf{DiffMin} & -0.458$\pm$0.01 & 0.0 $\pm$0.0 & 0.0 $\pm$0.0 & 0.0 $\pm$0.0 & 0.0 $\pm$0.0\\ \hline
		\end{tabular}}
		\egroup
		\caption{Performance of estimates obtained by \ref{alg: eta_via_classifier} with squared loss, logistic loss and modified squared loss and LSPC and $k$-NN on $Syn_{3D}$ dataset. The cells with NA means that some estimate was 0 and the KL couldn't be defined. Cells with $0.0$ entry means that the value is less than  $1 \times 10^{-5}$.}
		\label{tab: eta_syn3D_cls_LSPC}
	\end{table}

	\begin{table}[htbp!]
		\centering
		\bgroup
		\def\arraystretch{0.85}%
		{\setlength{\tabcolsep}{0.25em}	
			{\footnotesize
				\begin{tabular}{|c|c|c|c|}
					\hline
					&\multicolumn{3}{c|}{\textbf{ KLIEP based estimator used }}      \\ \hline   
					\textbf{\begin{tabular}[c]{@{}c@{}}Performance \\ measures \end{tabular}} & $\mathbf{\hat{\eta}_{pos}}$ & $\mathbf{\hat{\eta}_{neg}}$ & $\mathbf{\hat{\eta}_{norm}}$ \\ \hline
					\textbf{MSE} & 0.022$\pm$ 0.003 & 0.025$\pm$0.005 & 0.002$\pm$ 0.00  \\ \hline
					\textbf{RMSE} & 0.146$\pm$ 0.010 & 0.158$\pm$0.015 & 0.050$\pm$0.005  \\ \hline
					\textbf{MAD} & 0.097$\pm$ 0.007 & 0.103$\pm$0.007 & 0.040$\pm$0.005  \\ \hline
					\textbf{MD} & -0.0004$\pm$0.009 & --0.009$\pm$0.019 & -0.006$\pm$0.004  \\ \hline
					\textbf{KL} & NA & NA & 0.014$\pm$ 0.003  \\ \hline
					\textbf{Acc} & 0.90 $\pm$ 0.02 & 0.89 $\pm$ 0.0 & 0.92 $\pm$ 0.016  \\ \hline
					\textbf{BS} & 0.0793 $\pm$ 0.0077 & 0.084$\pm$ 0.010 & 0.067$\pm$0.007  \\ \hline
					\textbf{DiffMax} & 0.34$\pm$0.02 & 0.0 $\pm$0.0 & 0.0 $\pm$0.0  \\ \hline
					\textbf{DiffMin} & 0.0002 $\pm$0.0 & -0.344 $\pm$0.02 & 0.006 $\pm$0.002  \\ \hline
		\end{tabular}}}
		\egroup
		\caption{Performance of $\hat{\eta}_{pos}, \hat{\eta}_{neg}, \hat{\eta}_{norm}$, KLIEP based estimates on $Syn_{3D}$ dataset. The cells with NA means that some estimate was $0$ and the KL couldn't be defined. Cells with $0.0$ entry means that the value is less than  $1 \times 10^{-5}$.}
		\label{tab: eta_syn3D_KLIEP_123}
	\end{table}
	
	\begin{table}[htbp!]
		\centering
		\bgroup
		\def\arraystretch{0.85}%
		{\setlength{\tabcolsep}{0.25em}	
			{\footnotesize
				\begin{tabular}{|c|c|c|c|c|c|}
					\hline
					&\multicolumn{5}{c|}{\textbf{ KLIEP based estimator used }}      \\ \hline   
					\textbf{\begin{tabular}[c]{@{}c@{}}Performance \\ measures \end{tabular}} & $\mathbf{\hat{\eta}_{pos,s}}$ & $\mathbf{\hat{\eta}_{neg,s}}$ & $\mathbf{\hat{\eta}_{norm,s}}$ & $\mathbf{\hat{\eta}_{Cnorm,s}}$ & $\mathbf{\hat{\eta}_{avg,s}}$ \\ \hline
					\textbf{MSE} &  0.014$\pm$0.004 & 0.044$\pm$0.001 & 0.03$\pm$0.005 & 0.041$\pm$0.003 & 0.024$\pm$0.002 \\ \hline
					\textbf{RMSE} &  0.118$\pm$0.014 & 0.209$\pm$0.021 & 0.051$\pm$0.005 & 0.203$\pm$0.01 & 0.154$\pm$0.009 \\ \hline
					\textbf{MAD} &  0.164$\pm$0.008 & 0.141$\pm$0.014 & 0.042$\pm$0.005 & 0.183$\pm$0.010 & 0.136$\pm$0.009 \\ \hline
					\textbf{MD} & -0.151$\pm$0.004 & 0.122$\pm$0.0151 & -0.008$\pm$0.015 & -0.018$\pm$0.009 & -0.014$\pm$0.008 \\ \hline
					\textbf{KL} &  0.150 $\pm$0.011 & NA & 0.015 $\pm$ 0.003 & NA & 0.102$\pm$0.009 \\ \hline
					\textbf{Acc} & 0.8535 $\pm$ 0.01 & 0.83 $\pm$ 0.02 & 0.92 $\pm$ 0.01 & 0.92 $\pm$0.01 & 0.92 $\pm$ 0.02 \\ \hline
					\textbf{BS} &  0.1147$\pm$0.006 & 0.1144$\pm$0.011 & 0.0677$\pm$0.0.007 & 0.108$\pm$0.005 & 0.0905$\pm$0.005 \\ \hline
					\textbf{DiffMax} & 0.0 $\pm$0.0 & 0.0 $\pm$0.0 & 0.0 $\pm$0.0 & 0.0 $\pm$0.0 & -0.006$\pm$0.02 \\ \hline
					\textbf{DiffMin} & 0.00 $\pm$0.0 & 0.0 $\pm$0.0 & 0.0006 $\pm$0.0 & 0.0 $\pm$0.0 & -0.008$\pm$0.002 \\ \hline
		\end{tabular}}}
		\egroup
		\caption{Performance of $\hat{\eta}_{pos,s},\hat{\eta}_{neg,s},\hat{\eta}_{norm,s},\hat{\eta}_{Cnorm,s},\hat{\eta}_{avg,s}$, KLIEP based estimates on $Syn_{3D}$ dataset. The cells with NA means that some estimate was $0$ and the KL couldn't be defined. Cells with $0.0$ entry means that the value is less than  $1 \times 10^{-5}$.}
		\label{tab: eta_syn3D_KLIEP_45678}
	\end{table}

	\subsubsection{Synthetic dataset 3 ($Syn_{10D}$)}
	This is a 10 dimensional dataset. We first generate 1000 binary class labels $Y$ from Bernoulli distribution with parameter $p=0.5$. Then, a 10-dimensional feature vector $\mathbf{X}$ for each label is drawn from two different Gaussian distributions: $\mathbf{X}|Y =1 \sim N(\boldsymbol{\mu}_+,\Sigma)$ $\&$ $\mathbf{X}|Y =-1 \sim N(\boldsymbol{\mu}_-,\Sigma)$ where $\Sigma$ is such that all the diagonal elements (variances) are $1$ and the non-diagonal elements (covariances)  are $10$. Here, the means are $\boldsymbol{\mu}_{+} = [2, 1.2, 2, 2.1, 2,2, 2, 0.2, 2, 3.6]$ and $\boldsymbol{\mu}_{+} = [2, -1.2, 2, -2.1, 2, 2, 2, -0.2, 2, -3.6].$ Table \ref{tab: eta_syn10D_cls_LSPC}, \ref{tab: eta_syn10D_KLIEP_123} and \ref{tab: eta_syn10D_KLIEP_45678} present the values of various measures for this dataset.
	
	\begin{table}[htbp!]
		\centering
		\bgroup
		\def\arraystretch{0.85}%
		{\footnotesize
			\begin{tabular}{|c|c|c|c|c|c|}
				\hline
				&\multicolumn{5}{c|}{\textbf{ ${\eta}$ estimated from }}      \\ \hline        
				\textbf{\begin{tabular}[c]{@{}c@{}}Performance \\ measures \end{tabular}} & \textbf{\begin{tabular}[c]{@{}c@{}}\ref{alg: eta_via_classifier} \\ with $l_{sq}$\end{tabular}} & \textbf{\begin{tabular}[c]{@{}c@{}}\ref{alg: eta_via_classifier}\\ with $l_{log}$\end{tabular}} & \textbf{\begin{tabular}[c]{@{}c@{}}\ref{alg: eta_via_classifier}\\ with $l_{msq}$\end{tabular}} & \textbf{LSPC} & \textbf{k-NN} \\ \hline
				\textbf{MSE} & 0.026$\pm$0.002 & 0.002$\pm$0.0008 & 0.003$\pm$0.0008 & 0.011$\pm$0.002 & 0.0232  $\pm$  0.002\\ \hline
				\textbf{RMSE} & 0.161$\pm$0.006 & 0.047$\pm$0.009 & 0.005$\pm$0.007 & 0.106$\pm$0.010 & 0.152  $\pm$  0.007\\ \hline
				\textbf{MAD} & 0.138$\pm$0.005 & 0.026$\pm$0.005 & 0.031$\pm$0.004 & 0.077$\pm$0.011 & 0.0911  $\pm$  0.00756\\ \hline
				\textbf{MD} & 0.004$\pm$0.006 & 0.002$\pm$0.006 & 0.003$\pm$0.006 & 0.003$\pm$0.003 & 0.01434 $\pm$  0.005\\ \hline
				\textbf{KL} & NA & 0.010$\pm$0.003 & NA & NA & NA\\ \hline
				\textbf{Acc} & 0.912$\pm$0.018 & 0.911$\pm$0.016 & 0.908$\pm$0.016 & 0.913$\pm$0.017 & 0.9195 $\pm$  0.0181\\ \hline
				\textbf{BS} & 0.0795$\pm$0.008 & 0.0645$\pm$0.012 & 0.0636$\pm$0.012 & 0.0733$\pm$0.0102 & 0.0834 $\pm$  0.011 \\ \hline
				\textbf{DiffMax} & 0.463$\pm$0.03 & 0.0 $\pm$0.0 & 0.0 $\pm$0.0 & 0.0 $\pm$0.0 & 0.0 $\pm$0.0\\ \hline
				\textbf{DiffMin} & -0.551$\pm$0.024 & 0.0 $\pm$0.0 & 0.0 $\pm$0.0 & 0.0 $\pm$0.0 & 0.0 $\pm$0.0\\ \hline
		\end{tabular}}
		\egroup
		\caption{Performance of estimates obtained by \ref{alg: eta_via_classifier} with squared loss, logistic loss and modified squared loss and LSPC and $k$-NN on $Syn_{10D}$ dataset. The cells with NA means that some estimate was 0 and the KL couldn't be defined. Cells with $0.0$ entry means that the value is less than  $1 \times 10^{-5}$.}
		\label{tab: eta_syn10D_cls_LSPC}
	\end{table}
	
	\begin{table}[htbp!]
		\centering
		\bgroup
		\def\arraystretch{0.85}%
		{\setlength{\tabcolsep}{0.25em}	
			{\footnotesize
				\begin{tabular}{|c|c|c|c|}
					\hline
					&\multicolumn{3}{c|}{\textbf{ KLIEP based estimator used }}      \\ \hline   
					\textbf{\begin{tabular}[c]{@{}c@{}}Performance \\ measures \end{tabular}} & $\mathbf{\hat{\eta}_{pos}}$ & $\mathbf{\hat{\eta}_{neg}}$ & $\mathbf{\hat{\eta}_{norm}}$ \\ \hline
					\textbf{MSE} & 0.06$\pm$ 0.003 & 0.062$\pm$0.006 & 0.035$\pm$ 0.005  \\ \hline
					\textbf{RMSE} & 0.250$\pm$ 0.006 & 0.249$\pm$0.012 & 0.189$\pm$0.014  \\ \hline
					\textbf{MAD} & 0.208$\pm$ 0.007 & 0.207$\pm$0.013 & 0.177$\pm$0.014  \\ \hline
					\textbf{MD} & 0.0031$\pm$0.01 & 0.006$\pm$0.012 & 0.008$\pm$0.005  \\ \hline
					\textbf{KL} & NA & NA & 0.014$\pm$ 0.017  \\ \hline
					\textbf{Acc} & 0.848 $\pm$ 0.03 & 0.846 $\pm$ 0.017  \\ \hline
					\textbf{BS} & 0.125 $\pm$ 0.010 & 0.123$\pm$ 0.009 & 0.0995$\pm$0.006  \\ \hline
					\textbf{DiffMax} & 0.32$\pm$0.098 & -0.02 $\pm$0.006 & -0.04 $\pm$0.001  \\ \hline
					\textbf{DiffMin} & 0.015 $\pm$0.005 & -0.422 $\pm$0.12 & 0.031 $\pm$0.009 \\ \hline
		\end{tabular}}}
		\egroup
		\caption{Performance of $\hat{\eta}_{pos}, \hat{\eta}_{neg}, \hat{\eta}_{norm}$, KLIEP based estimates on $Syn_{10D}$ dataset. The cells with NA means that some estimate was $0$ and the KL couldn't be defined. Cells with $0.0$ entry means that the value is less than  $1 \times 10^{-5}$.}
		\label{tab: eta_syn10D_KLIEP_123}
	\end{table}
	
	\begin{table}[htbp!]
		\centering
		\bgroup
		\def\arraystretch{0.85}%
		{\setlength{\tabcolsep}{0.25em}	
			{\footnotesize
				\begin{tabular}{|c|c|c|c|c|c|}
					\hline
					&\multicolumn{5}{c|}{\textbf{ KLIEP based estimator used }}      \\ \hline   
					\textbf{\begin{tabular}[c]{@{}c@{}}Performance \\ measures \end{tabular}} & $\mathbf{\hat{\eta}_{pos,s}}$ & $\mathbf{\hat{\eta}_{neg,s}}$ & $\mathbf{\hat{\eta}_{norm,s}}$ & $\mathbf{\hat{\eta}_{Cnorm,s}}$ & $\mathbf{\hat{\eta}_{avg,s}}$ \\ \hline
					\textbf{MSE} &  0.035$\pm$0.005 & 0.061$\pm$0.003 & 0.104$\pm$0.014 & 0.036$\pm$0.005 & 0.084$\pm$0.006 \\ \hline
					\textbf{RMSE} &  0.247$\pm$0.006 & 0.321$\pm$0.022 & 0.191$\pm$0.013 & 0.290$\pm$0.01 & 0.260$\pm$0.006 \\ \hline
					\textbf{MAD} &  0.245$\pm$0.009 & 0.263$\pm$0.009 & 0.178$\pm$0.014 & 0.275$\pm$0.011 & 0.247$\pm$0.007 \\ \hline
					\textbf{MD} &  -0.181$\pm$0.03 & 0.157$\pm$0.035 & 0.019$\pm$0.007 & 0.018$\pm$0.012 & -0.019$\pm$0.010 \\ \hline
					\textbf{KL} &  0.262 $\pm$0.022 & NA & 0.143 $\pm$ 0.016 & NA & 0.228$\pm$0.009 \\ \hline
					\textbf{Acc} &  0.750 $\pm$ 0.044 & 0.736 $\pm$ 0.044 & 0.913 $\pm$ 0.016 & 0.913 $\pm$0.016 & 0.913 $\pm$ 0.016 \\ \hline
					\textbf{BS} &  0.158$\pm$0.0159 & 0.165$\pm$0.0158 & 0.099$\pm$0.006 & 0.149$\pm$0.008 & 0.1318$\pm$0.006 \\ \hline
					\textbf{DiffMax} &  0.0 $\pm$0.0 & -0.01 $\pm$0.006 & -0.040 $\pm$0.012 & 0.0 $\pm$0.0 & -0.08$\pm$0.018 \\ \hline
					\textbf{DiffMin} & 0.011 $\pm$0.005 & 0.0 $\pm$0.0 & 0.033 $\pm$0.009 & 0.0 $\pm$0.0 & 0.081$\pm$0.02 \\ \hline
		\end{tabular}}}
		\egroup
		\caption{Performance of $\hat{\eta}_{pos,s},\hat{\eta}_{neg,s},\hat{\eta}_{norm,s},\hat{\eta}_{Cnorm,s},\hat{\eta}_{avg,s}$, KLIEP based estimates on $Syn_{10D}$ dataset. The cells with NA means that some estimate was $0$ and the KL couldn't be defined. Cells with $0.0$ entry means that the value is less than  $1 \times 10^{-5}$.}
		\label{tab: eta_syn10D_KLIEP_45678}
	\end{table}
	
	\textbf{Observations}
	We observe that as far as label prediction is concerned \ref{alg: eta_via_classifier} with squared loss, logistic loss and modified squared loss, LSPC, $k$-NN and KLIEP's $\hat{\eta}_{norm}$ estimators perform equally well as their accuracy is the highest in comparison to others. This trend has been observed across all 3 synthetic datasets.
	
	With respect to DiffMax and DiffMin measures, \ref{alg: eta_via_classifier} with logistic loss and modified squared loss, LSPC, $k$-NN and KLIEP's $\hat{\eta}_{norm}$ estimators are good.
	
	\section{Risk minimization under weighted 0-1 loss in the extended cost space} \label{sec: 4cost_neg}
	
	In view of the negative result obtained with the $\alpha$-weighted $0$-$1$ loss function, we explore a larger cost space. In a similar kind of representation as that of $C_1, C_{-1}$, one can extend the parameter space from $(C_1,C_{-1})$ to $(C_1,C_{-1},c_1,c_{-1})$ given as follows.
	\begin{itemize}
		\item[$\bullet$] $C_1$ (or $C_{-1}$): loss when a positive (or negative)  class point is misclassified
		\item[$\bullet$] $c_1$ (or $c_{-1}$): loss when a positive (or negative) class point is correctly classified
	\end{itemize}
	Let the extended cost sensitive $0$-$1$ loss be defined as
	\begin{small}
		\begin{equation}
		l_{c,4}(f(\mathbf{x}),y) = \mathbf{1}_{\{f(\mathbf{x}) > 0, y=1\}}c_{1} + \mathbf{1}_{\{f(\mathbf{x}) > 0, y=-1\}}C_{-1}  + \mathbf{1}_{\{f(\mathbf{x}) \leq 0, y=1\}}C_{1} +\mathbf{1}_{\{f(\mathbf{x}) \leq 0, y=-1\}}c_{-1}
		\end{equation}
	\end{small}
	Then, the corresponding risk on $D$ and $\tilde{D}$ be denoted by $R_{D,l_{c,4}}(f)$ and $R_{\tilde{D},l_{c,4}}(f)$ respectively. 
	
	One can reduce the extended $(C_1,c_1,C_{-1},c_1)$ based $4$ cost space again to $(C_1-c_1,C_{-1}-c_{-1})$ based $2$ cost space as the threshold of the optimal classifiers does not change by reducing the parameter space. This has also been shown in \cite{lingcost_encyclopedia}. However, we still compute the minimizers of the risks $R_{D,l_{c,4}}(f)$ and $R_{\tilde{D},l_{c,4}}(f)$ and show that they need not be equal.
	
	Using the same technique as in $(C_1,C_{-1})$ cost case, one can show that the minimizers of $R_{D,l_{c,4}}$ and $R_{\tilde{D},l_{c,4}}$ are as follows:
	\begin{equation} \label{eq: f_C4_0-1_clean}
	f^*_{4c}(\mathbf{x}) = sign\left( \eta(\mathbf{x}) - \frac{C_{-1}-c_{-1}}{C_1 -c_1 + C_{-1}-c_{-1}} \right)
	\end{equation}
	\begin{equation} \label{eq: f_C4_0-1_noisy}
	\tilde{f}^*_{4c}(\mathbf{x}) = sign \left(\eta(\mathbf{x}) - \frac{C_{-1}-c_{-1} - \rho(C_1-c_1 + C_{-1}-c_{-1})}{(1-2\rho)(C_1-c_1 + C_{-1}-c_{-1})}\right) 
	\end{equation}
	The threshold of $\eta$ in equation \eqref{eq: f_C4_0-1_clean} and \eqref{eq: f_C4_0-1_noisy} cannot be same unless $C_1-c_1 = s = C_{-1} -c_{-1}$ or $\rho = 0$. Therefore, even in the extended cost space, under uniform noise and differential cost, $f^*_{4c}\neq \tilde{f}^*_{4c}$, i.e., learning a classifier from $l_{c,4}$ which is both noise robust and cost sensitive is not always possible. This is independent of the extended  sufficient condition (E-SC) that we now introduce; as we will see, this condition doesn't help us for joint uniform noise robustness and cost sensitivity. 
	
	A natural extension of sufficient condition (Symmetry condition SC \cite{ghosh2015making}) in cost sensitive case when used with weighted $0$-$1$ loss can be written as follows. For an arbitrary classifier $f(\mathbf{x})$ and $K>0$,
	\begin{eqnarray} \label{eq: extended_symmetry_condition}
	\hspace{-0.5cm}\text{\textbf{E-SC}} \hspace{2.5cm}
	\text{when }f(\mathbf{x}) > 0 ~\text{ SC becomes }~ C_{-1} + c_{1} = K \\ \nonumber 
	\text{when }f(\mathbf{x}) \leq 0 ~\text{ SC becomes }~ C_1 + c_{-1} = K
	\end{eqnarray}
	Clearly, $C_{-1} + c_{1} = K = C_1 + c_{-1}$ which can be equivalently written as $C_{-1} -c_{-1} = s = C_1 -c_1$. This implies that the effective cost for each class is same, i.e. $s$ and hence no differential costs for the classes. Therefore, E-SC cannot be sufficient condition for SLN robustness if there is differential costing of $(C_1,c_1,C_{-1},c_{-1}).$

\end{document}